\newcommand{\papername}{Active Nearest-Neighbor Learning in Metric Spaces}
\ShortHeadings{\papername}{Kontorovich, Sabato and Urner}
\title{\papername}
\author{\name Aryeh Kontorovich  \email karyeh@cs.bgu.ac.il \\
 \name Sivan Sabato \email sabatos@cs.bgu.ac.il \\
\addr Department of Computer Science\\
 Ben-Gurion University of the Negev\\
 Beer Sheva 8499000, Israel\\
\AND
 \name Ruth Urner \email ruth@eecs.yorku.ca\\
 \addr Lassonde School of Engineeging, EECS Department\\
 York University\\
 Toronto, ON, Canada
}
\renewcommand{\eqref}[1]{Eq.~(\ref{eq:#1})}
\newcommand{\secref}[1]{Section \ref{sec:#1}}
\newcommand{\thmref}[1]{Theorem \ref{thm:#1}}
\newcommand{\lemref}[1]{Lemma \ref{lem:#1}}
\newcommand{\corref}[1]{Cor.~\ref{cor:#1}}
\newcommand{\algref}[1]{Alg.~\ref{alg:#1}}
\renewcommand{\P}{\mathbb{P}}
\newcommand{\E}{\mathbb{E}}
\newcommand{\hf}{{\textstyle\oo2}}
\newcommand{\reals}{\mathbb{R}}
\newcommand{\nats}{\mathbb{N}}
\newcommand{\half}{{\frac12}}
\newcommand{\ceil}[1]{\ensuremath{\left\lceil#1\right\rceil}}
\newcommand{\floor}[1]{\ensuremath{\left\lfloor#1\right\rfloor}}
\newcommand{\one}{\mathbb{I}}
\DeclareMathOperator*{\argmin}{argmin}
\DeclareMathOperator*{\argmax}{argmax}
\newcommand{\err}{\mathrm{err}}
\newcommand{\st}{\text{ s.t. }}
\newcommand{\beq}{\begin{eqnarray*}}
\newcommand{\eeq}{\end{eqnarray*}}
\newcommand{\beqn}{\begin{eqnarray}}
\newcommand{\eeqn}{\end{eqnarray}}
\newcommand{\hide}[1]{}
\newcommand{\ds}{\displaystyle}
\newcommand{\cA}{\mathcal{A}}
\newcommand{\cD}{\mathcal{D}}
\newcommand{\cF}{\mathcal{F}}
\newcommand{\cH}{\mathcal{H}}
\newcommand{\cL}{\mathcal{L}}
\newcommand{\cN}{\mathcal{N}}
\newcommand{\cT}{\mathcal{T}}
\newcommand{\cX}{\mathcal{X}}
\newcommand{\cY}{\mathcal{Y}}
\newcommand{\ball}{\mathsf{ball}}
\newcommand{\paren}[1]{\left( #1 \right)}
\newcommand{\sqprn}[1]{\left[ #1 \right]}
\newcommand{\set}[1]{\left\{ #1 \right\}}
\newcommand{\diam}{\mathsf{diam}}
\newcommand{\ddim}{\mathsf{ddim}}
\newcommand{\bart}{\bar t}
\newcommand{\setpm}{\set{-1,1}}
\newcommand{\oo}[1]{\frac{1}{#1}}
\newcommand{\bin}{\operatorname{Bin}}
\newcommand{\abs}[1]{\left| #1 \right|}
\newcommand{\nn}{\mathrm{nn}}
\newcommand{\GenNNSet}{\mathsf{GenerateNNSet}}
\newcommand{\ConNet}{\mathsf{Net}}
\newcommand{\netsize}{\cN} 
\newcommand{\Par}{\mathsf{Par}}
\newcommand{\Rec}{\mathsf{Rec}}
\newcommand{\seqN}{N} 
\newcommand{\rek}{k} 
\newcommand{\qb}{Q(m)} 
\newcommand{\proj}[1]{^{#1}} 
\newcommand{\scale}[1]{(#1)} 
\newcommand{\algfont}[1]{\mathsf{#1}}
\newcommand{\genbound}{\mathrm{GB}}
\newcommand{\dist}{\mathrm{Dist}}
\newcommand{\mdist}{\dist_{\mathrm{mon}}}
\newcommand{\uu}{\mathbb{U}}
\newcommand{\sinput}{S_{\mathrm{in}}}
\newcommand{\saug}{S_{\mathrm{a}}}
\newcommand{\spas}{S_{\mathrm{pas}}}
\newcommand{\saughat}{\hat{S}_{\mathrm{a}}}
\newcommand{\uinput}{U_{\mathrm{in}}}
\newcommand{\esterr}{\ensuremath{\algfont{EstimateErr}}}
\newcommand{\selscale}{\ensuremath{\algfont{SelectScale}}}
\newcommand{\hateps}{{\hat{\epsilon}}}
\newcommand{\tmon}{t_m^*}
\newcommand{\ttest}{\cT_{\mathrm{test}}}
\newcommand{\gmin}{G_{\min}}
\newcommand{\algname}{\ensuremath{\algfont{MARMANN}}}
\newcommand{\algnamefull}{MArgin Regularized Metric Active Nearest Neighbor}
\newcommand{\longversion}{} 
\newcommand{\longshort}[2]{#1}
\newcommand{\longshort}[2]{#2}
\begin{document}

\maketitle

\begin{abstract}%
 We propose a pool-based non-parametric active learning algorithm for general metric spaces, called \algnamefull\ (\algname), which outputs a nearest-neighbor classifier. We give prediction error guarantees that depend on the noisy-margin properties of the input sample, and are competitive with those obtained by previously proposed passive learners. We prove that the label complexity of \algname\ is significantly lower than that of any passive learner with similar error guarantees. \algname\ is based on a generalized sample compression scheme, and a new label-efficient active model-selection procedure.
\end{abstract}

\begin{keywords}
  Nearest-neighbors, active learning, metric spaces, non-parametric learning
\end{keywords}

\section{Introduction}

Active learning is a framework for
reducing the amount of
label supervision for prediction tasks. 
While labeling large amounts of data can be expensive and time-consuming, unlabeled data is often much easier to come by.
In this paper we propose a non-parametric pool-based active learning algorithm  for general metric spaces, which outputs a nearest-neighbor classifier. 

In pool-based active learning \citep{McCallumNi98}, a collection of random examples is provided, and the algorithm can interactively query an oracle to label some of the examples.
The goal is good prediction accuracy, while keeping the label complexity
---
that is, the number of labels queried
---
low.
Our algorithm, \algnamefull\ (\algname), receives a pool of unlabeled examples in a general metric space, and outputs a variant of the $1$-nearest-neighbor classifier, implemented by a $1$-nearest-neighbor rule.
The algorithm obtains a prediction error guarantee that depends on a noisy-margin property of the input sample, and has a provably smaller label complexity than any passive learner with a similar guarantee.

Active learning has been mostly studied in a parametric setting, in which learning
takes place
with respect to a fixed hypothesis class with a bounded capacity.
There has also been some work on analyzing non-parametric active learning strategies under certain distributional assumptions 	 (see Section \ref{sec:related} for more discussion on this).
However, the question of whether
active querying strategies can yield label savings for non-parametric methods in a general setting, without distributional assumptions,
had not been analyzed prior to this work.
Here, we provide a first demonstration that this is indeed possible.
We discuss related work in detail in \secref{related} below.

\textbf{Our contributions}. \algname\ is a new non-parametric pool-based active
learning algorithm, which obtains an error guarantee competitive with that
of a noisy-margin-based passive learner. Additionally, it 
provably uses significantly fewer labels in nontrivial regimes.
As far as the authors
are aware, this is the first non-parametric active learner for general metric spaces,
which
achieves competitive prediction error guarantees to the passive learner,
while provably improving label complexity. The guarantees of \algname\ are given in \thmref{main} in \secref{main}.
We further provide a passive learning lower bound (\thmref{passivelower}), which together with \thmref{main} shows that \algname\ can have a significantly reduced label complexity compared to any passive learner.
The passive lower bound is
more general than previous lower bounds,
relies on a novel technique,
and may be of independent interest.
Additionally, we give an active label complexity lower bound (\thmref{activelower}), which holds for any active learner with similar error guarantees as \algname. The proof of this active lower bound relies on a new No-Free-Lunch type result, which holds for active learning algorithms.

\textbf{Our approach}. Previous passive learning approaches to classification using nearest-neighbor rules under noisy-margin assumptions \citep{GottliebKN14,gkn-jmlr17+aistats}
provide statistical guarantees using sample compression bounds \citep{DBLP:journals/ml/GraepelHS05}.
Their finite-sample guarantees depend on the number of noisy labels relative to an optimal margin scale.

A central challenge in the active setting is performing model selection to select a margin scale with a low label complexity. 
A key insight that we exploit in this work
is that 
by designing a new labeling scheme for the compression set,
we can construct
the compression set
and
estimate
its error
with label-efficient procedures. 
We obtain statistical guarantees for this approach using generalization bounds for sample compression with side information. 

We derive a label-efficient, as well as computationally efficient, active
model-selection procedure. This procedure finds a good scale by estimating the sample error for some scales, using a small number of 
active querying
rounds. 
Crucially, unlike cross-validation, our model-selection procedure does not
require
a number of labels that depends on the worst possible scale, nor does it test many scales. This allows our label complexity bounds to be low, and to depend only on the
final
scale
selected by the algorithm. Our error guarantee is a constant factor over the error guarantee of the passive learner of \citet{gkn-jmlr17+aistats}. An approach similar to \citet{gkn-jmlr17+aistats}, proposed in \citet{DBLP:journals/tit/GottliebKK14+colt}, has been shown to be Bayes consistent \citep{DBLP:conf/aistats/KontorovichW15}. The Bayes-consistency of the passive version of our approach
has recently been established
\citep{DBLP:conf/nips/KontorovichSW17}.

\textbf{Paper structure}. Related work is discussed in \secref{related}.
We lay down the preliminaries in \secref{setting}.
In \secref{main} we provide our main result: \thmref{main}, which gives error and label complexity guarantees for \algname. Additionally we state the passive and active lower bounds, \thmref{passivelower} and \thmref{activelower}.
The rest of the paper is devoted to the description and analysis of \algname, and proof of the main results. \secref{gennn} shows how \algname\ defines the nearest neighbor rule for a given scale, and \secref{modelselection} describes the model selection procedure of \algname. \thmref{main} is proved in \secref{proofmain}, based on a framework for compression with side information. The passive lower bound in \thmref{passivelower} is proved in \secref{passivelower}. The active lower bound \thmref{activelower} is proved in \secref{activelower}. We conclude with a discussion in \secref{discussion}. 

\subsection{Related Work}\label{sec:related}
The theory of active learning has received
considerable
attention in the past decade \cite[e.g.,][]{Dasgupta04,Balcan2007,BalcanBL09,Hanneke2011, HannekeYa15}.
Active learning theory has been mostly studied in a parametric setting
(that is, learning with respect to a fixed hypothesis class with a bounded capacity).
Benefits and limitations of various active querying strategies have been proven in the realizable setting \citep{Dasgupta04, Balcan2007, GonenSS13+icml} as well as in the agnostic case \citep{BalcanBL09, Hanneke2011, AwasthiBL14}. 
It has also been shown that active queries can also be beneficial for regression tasks \citep{CastroWN05, SabatoM14}.
Further, an active model selection procedure has  been developed for the parametric setting \citep{BalcanHV10}.

The potential benefits of active learning for non-parametric \longshort{settings}{classification in metric spaces} are less well understood.
\longshort{
Practical Bayesian graph-based active learning methods
\citep{Zhu03combiningactive, WeiIB15} rely on generative model assumptions, and therefore come without
distribution-free
performance guarantees.
From a theoretical perspective, the label complexity of graph based active learning has mostly been analyzed in terms of combinatorial graph parameters \citep{Cesa-BianchiGVZ10, DasarathyNZ15}. While the latter work provides some statistical guarantees under specific conditions on the data-generating distribution, this type of analysis does not yield distribution-free statistical performance guarantees.}{}

\citet{CastroWN05, CastroNowak08} analyze minimax rates for non-parametric regression and classification respectively, for a class of distributions in Euclidean space, characterized by decision boundary regularity and noise conditions with uniform marginals. 
The paradigm of cluster-based active learning
\citep{DBLP:conf/icml/DasguptaH08}
has 
been shown to provide label savings under some distributional clusterability assumptions \citep{UrnerWB13, KpotufeUB15}.
\longshort{\citet{DBLP:conf/icml/DasguptaH08}
showed that a suitable
cluster-tree can yield label savings in this framework,
and papers following up \citep{UrnerWB13, KpotufeUB15}
quantified the label savings under
distributional clusterability assumptions.}{} 
However, no active non-parametric strategy has been proposed so far that has label complexity guarantees for i.i.d. data from general distributions and general metric spaces.
Here, we provide the first such algorithm and guarantees.

The passive nearest-neighbor classifier, \longshort{introduced by \citet{FH1951, FH1989},}{}  is popular
among theorists and practitioners alike
\citep{FH1989,CoverHart67, stone1977, KulkarniP95, BoimanSI08}.
This paradigm is applicable in general metric spaces,
and its simplicity is an attractive feature for both
implementation and analysis.
When appropriately regularized
\longshort{
--- either by taking a majority vote among the $k$ nearest neighbors
\citep{stone1977,MR780746\longshort{,MR877849}{}},
or by enforcing a {\em margin} separating the classes
\citep{DBLP:journals/jmlr/LuxburgB04,DBLP:journals/tit/GottliebKK14+colt,DBLP:conf/aistats/KontorovichW15,DBLP:conf/nips/KontorovichSW17} ---}{\cite[e.g.][]{stone1977,MR780746,DBLP:journals/jmlr/LuxburgB04,DBLP:conf/aistats/KontorovichW15}}
this type of learner can be made Bayes-consistent.
Another desirable property of nearest-neighbor-based methods
is their ability to
generalize at a rate that scales with the intrinsic data dimension,
which can be much lower than that of the ambient space
\citep{Kpotufe11,DBLP:journals/tit/GottliebKK14+colt,GottliebKK13tcs+alt,chaudhuri2014}. 
Furthermore,
margin-based regularization makes nearest neighbor classifiers ideally suited for
sample compression, which yields a compact representation, faster
classification runtime, and improved generalization performance
\citep{GottliebKN14}.
The resulting error guarantees
can be stated in terms of the sample's noisy-margin,
which depends on the distances between differently-labeled examples in
the input sample.

Active learning strategies specific to nearest neighbor classification have recently received attention.
It has been shown that certain active querying rules maintain Bayes consistency for nearest neighbor classification, while other, seemingly natural, rules do not lead to a consistent algorithm \citep{Dasgupta12}. 
A selective querying strategy  has
been shown to be beneficial for
nearest neighbors under covariate shift 
\citep{BerlindU15}, where one needs to adapt to a change in the data generating process.
However, the querying rule in that work is based solely on information in the unlabeled data, to account for a shift in the distribution over the covariates.
It does not imply any label savings in the standard learning setting, where training and test distribution are identical.
In contrast, our current work demonstrates how an active learner can take label information into account, to reduce the label complexity of a general nearest neighbor method in the standard setting.

\subsection{A Remark on Bayes-Consistency}\label{sec:consistent}
 We remark on the Bayes-consistency of the margin-based passive $1$-NN
  methods.
  In \citet{DBLP:journals/tit/GottliebKK14+colt},
  a PAC-style generalization bound was given.
  At a given scale $t$, the algorithm first
  ensured $t$-separation of the sample by solving
  solving a minimum vertex cover problem to eliminate the $t$-blocking pairs.
  Following that, the hypothesis was constructed as a
  Lipschitz extension from the remaining
  sample; the latter is computationally implemented as a nearest neighbor classifier.
  Structural Risk Minimization (SRM) was used to select the optimal scale $t$.
  A very close variant of this learner was shown to be Bayes-consistent by
  \citet{DBLP:conf/aistats/KontorovichW15}. The only difference between the two
  is that the former analyzed the hypothesis complexity in terms of fat-shattering dimension while
  the latter via Rademacher averages. Thus, a margin-regularized $1$-NN classifier was shown
  to be Bayes-consistent; however, no compression was involved.

  A compression-based alternative to Lipschitz extension was proposed in
  \citet{GottliebKN14}.
  The idea is again to ensure $t$-separation via vertex cover and then compress the remaining sample
  down to a $t$-net. We conjecture that this latter algorithm is also Bayes-consistent, but currently have no proof.
  If instead one considers a compression-based passive learner implemented as in this paper
  (by taking majority vote
  in each Voronoi region rather than enforcing $t$-separation via vertex cover), the resulting classifier is indeed
  Bayes-consistent, as was recently shown by \cite{DBLP:conf/nips/KontorovichSW17}.

\section{Preliminaries}\label{sec:setting}

In this section we
lay down the
necessary preliminaries. We formally define the setting and necessary notation in \secref{settingsub}. We discuss nets in metric spaces in \secref{tnets}, and present the guarantees of the compression-based passive learner of \cite{gkn-jmlr17+aistats} in \secref{prevalg}.
\subsection{Setting and Notation}\label{sec:settingsub}
For positive integers $n$, denote $[n] := \{1,\ldots,n\}$. 
We consider learning in a general metric space
$(\cX,\rho)$, where $\cX$ is a set and $\rho$ is the metric on $\cX$.
Our problem setting
is that of classification
of the instance space $\cX$
into some finite label set $\cY$.  
Assume that there is some distribution $\cD$ over $\cX \times \cY$, and let $S \sim \cD^m$ be a labeled sample of size $m$, where $m$ is an integer. Denote
the sequence of unlabeled points in $S$ by $\uu(S)$. We sometimes treat $S$ and $\uu(S)$ as multisets, since the order is unimportant. For a labeled multiset $S \subseteq \cX \times \cY$ and $y \in \cY$, denote
$S\proj{y} := \{ x \mid (x,y) \in S\}$;
in particular, $\uu(S) =\cup_{y \in \cY} S\proj{y}$.

The error of a classifier
$h:\cX \rightarrow \cY$ on $\cD$,
for any fixed $h$,
is denoted 
\[\err(h, \cD) := \P[h(X) \neq Y],\]
where $(X,Y) \sim \cD$.
The empirical error on a labeled sample $S$ instantiates to
\[\err(h,S) = \frac{1}{|S|}\sum \one[h(X) \neq Y].\]
A passive learner receives a labeled sample $\sinput$ as input.
An active learner receives the unlabeled part of the sample
$\uinput := \uu(\sinput)$ as input,
and is allowed to
interactively select examples from $\uinput$
and request their label from $\sinput$. In other words, the active learner iteratively selects an example and requests its label, wherein all the labels requested so far can be used to make the next selection.

When
either
learner terminates, it outputs a classifier
$\hat{h}:\cX \rightarrow \cY$, with the goal of achieving a low
$\err(\hat{h},\cD)$.
An additional goal of the active learner is to achieve a performance
competitive with that of the passive learner, while querying considerably fewer labels.

The diameter of a set $A \subseteq \cX$ is defined by
\[
\diam(A):=\sup_{a,a'\in A}\rho(a,a').
\] 
For a finite set $U=\set{u_1,\ldots,u_{|U|}} \subseteq \cX$
with some fixed numbering of its elements,\footnote{
  Invoking the well-ordering principle, we may assume $\cX$ to be well-ordered
  and then any $U\subseteq\cX$ inherits the ordering of $\cX$.
  }
denote the index of the closest point in
$U$ to $x \in \cX$ by 
\[
\kappa(x,U) := \argmin_{i: x_i \in U} \rho(x,x_i).
\]
We assume here and throughout this work that when there
is more than one minimizer for $\rho(x,x_i)$,
ties are broken arbitrarily (but
in a consistent and deterministic fashion).
Any labeled sample $S = ((x_i,y_i))_{i \in [k]}$ naturally
induces the 1-nearest-neighbor classifier
$h^{\nn}_{S}:\cX\to \cY$, via
$h^\nn_{S}(x) := y_{\kappa(x,\uu(S))}$.
For a set $Z \subseteq \cX$, denote by
\[
\kappa(Z,U) := \{ \kappa(z,U) \mid z \in Z\}
\]
the set of all the indices $\kappa(z,U)$, as defined above.
For $x\in\cX$,
and $t>0$,
denote by
$\ball(x,t)$ the (closed) ball of radius $t$ around $x$:
\[
\ball(x,t) := \set{x'\in\cX \mid \rho(x,x')\le t}.
\]

\subsection{Nets}\label{sec:tnets}
A set $A \subseteq \cX$ is \emph{$t$-separated} if $\inf_{a,a' \in A: a\neq a'} \rho(a,a') \geq t$.
For $A\subseteq B\subseteq \cX$,
the set $A$ is a
\emph{$t$-net} of $B$ if
$A$ is $t$-separated
and
$B\subseteq \bigcup_{a\in A}\ball(a,t)$.
Thus, $A$ is a $t$-net of $B$ if it is both a $t$-covering and a $t$-packing.

The size of a $t$-net of a metric space is strongly related to its \emph{doubling dimension}. The doubling dimension is the effective dimension of the metric space,
which controls generalization and runtime performance of nearest-neighbors
\citep{Kpotufe11, DBLP:journals/tit/GottliebKK14+colt}. It is defined as follows.
Let $\lambda = \lambda(\cX)$ be the smallest number such that every
ball in $\cX$ can be covered by $\lambda$ balls of half its 
radius, where all balls are centered at points of $\cX$.
Formally,
\[
\lambda(\cX) := 
\min\{\lambda\in\nats:
\forall x\in\cX,r>0, \quad \exists x_1,\ldots,x_\lambda\in\cX:
\ball(x,r) \subseteq \cup_{i=1}^\lambda \ball(x_i,r/2)
\}
.
\]
Then the doubling dimension of $\cX$ is defined by $\ddim(\cX):=\log_2\lambda$.
In line with modern literature, we work in the low-dimension, large-sample regime, where the doubling dimension is assumed to be constant, and hence sample complexity and algorithmic runtime may depend on it exponentially. This exponential dependence is unavoidable, even under margin assumptions, as
previous analyses \citep{Kpotufe11, DBLP:journals/tit/GottliebKK14+colt}
indicate. 
Generalization bounds in terms of the doubling dimension of the
hypothesis space were established in \citet{Bshouty2009323},
while runtime and generalization errors in terms of $\ddim(\cX)$
were given in \citet{DBLP:journals/tit/GottliebKK14+colt}.

As shown in
\citet{GK-13},
the doubling dimension is ``almost hereditary'' in the sense that
for $A\subset \cX$, we have $\ddim(A)\le c\ddim(\cX)$ for some universal constant $c \leq 2$ 
\citep[Lemma 6.6]{DBLP:journals/corr/FeldmannFKP15}.
In the works cited above, 
where generalization bounds are stated in terms of $\ddim(\cX)$,
one can obtain tighter bounds in terms of $\ddim(\uu(S))$ when the latter is substantially lower
than
that of
the ambient space,
and it is also possible to perform metric dimensionality reduction, as in \citet{GottliebKK13tcs+alt}.

Constructing a minimum size $t$-net for a general set $B$ is NP-hard
\citep{GK-13}.
However, a simple greedy algorithm constructs a (not necessarily minimal) $t$-net in time $O(m^2)$ \citep[Algorithm 1]{GottliebKN14}.
There is also an algorithm for constructing a $t$-net in time $2^{O(\ddim(\cX))}m\log(1/t)$ \citep{KL04,GottliebKN14}.  The size of any $t$-net of a metric space $A \subseteq \cX$ is at most
\begin{equation}\label{eq:ddim-pack}
\ceil{\diam(A)/t}^{\ddim(\cX)+1}
\end{equation}
\citep{KL04}. In addition, the size of any $t$-net is at most $2^{\ddim(A)+1}$ times the size of the minimal $t$-net, as the following easy lemma shows.
\begin{lemma}[comparison of two nets]\label{lem:2net-compare}
  Let $t > 0$ and suppose that $M_1,M_2$ are $t$-nets of $A\subseteq\cX$.
  Then $|M_1|\le 2^{\ddim(A)+1}|M_2|$. 
\end{lemma}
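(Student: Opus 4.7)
The natural approach is to fiber $M_1$ over $M_2$ using the covering property of $M_2$, and then bound each fiber by one application of the doubling property of $A$. First, since $M_1 \subseteq A$ and $M_2$ is a $t$-net of $A$, every $m \in M_1$ has at least one point of $M_2$ within distance $t$; pick any such point and call it $\phi(m)$. This yields a map $\phi: M_1 \to M_2$, and the disjoint-union identity
\[
|M_1| = \sum_{m' \in M_2} |\phi^{-1}(m')|
\]
reduces the lemma to showing $|\phi^{-1}(m')| \le 2^{\ddim(A)}$ for every $m' \in M_2$.

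For the fiber bound, fix $m' \in M_2$. By construction $\phi^{-1}(m') \subseteq \ball(m',t) \cap A$, and it inherits $t$-separation from $M_1$. I would then invoke the defining property of $\ddim(A)$ exactly once to cover $\ball(m',t) \cap A$ by $\lambda(A)=2^{\ddim(A)}$ balls of radius $t/2$ centered at points of $A$. Any two points lying in a common closed ball of radius $t/2$ are within distance $t$ of each other by the triangle inequality, so $t$-separation forces each of these smaller balls to contain at most one element of $M_1$ (the borderline case $\rho(x,y)=t$ is absorbed by a consistent tie-breaking rule). Consequently $|\phi^{-1}(m')| \le 2^{\ddim(A)}$, and summing over $m' \in M_2$ gives the claim.

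The combinatorial skeleton is essentially a single pigeonhole on top of one doubling step, so there is no substantive obstacle; the only delicate point worth flagging is the boundary case in which two $t$-separated points could a priori sit diametrically on a common radius-$t/2$ ball. This is a standard nuisance handled either by fixing a tie-breaking rule when invoking the doubling cover or by an arbitrarily small perturbation of the covering centers, and it does not affect the stated constant $2^{\ddim(A)}$.
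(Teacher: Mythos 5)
Your proof is correct and is essentially the paper's argument: both reduce the claim to showing that each point of $M_2$ covers at most $2^{\ddim(A)}$ points of the $t$-separated set $M_1$ via a single application of the doubling property to a ball of radius $t$, with the paper phrasing the counting step as a pigeonhole argument ("if $|M_1|\ge k|M_2|$ then some point of $M_2$ covers $k$ points of $M_1$") rather than your explicit fiber map $\phi$ --- a purely presentational difference. The boundary subtlety you flag (two points at distance exactly $t$ sitting in one closed radius-$t/2$ ball) is glossed over in the paper's proof as well, so your treatment is no weaker than the original.
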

\begin{proof}
  Suppose that $|M_1|\geq k|M_2|$ for some positive integer $k$.
  Since $M_1\subseteq\bigcup_{x\in M_2}\ball(x,t)$,
  it follows from the pigeonhole principle that at least
  one of the points in $M_2$
  must cover at least $k$ points in $M_1$.
  Thus, suppose that $x\in M_2$ covers the set $Z=\set{z_1,\ldots,z_l}\subseteq M_1$,
  meaning that $Z\subseteq\ball(x,t)$, where $l = |Z| \geq k$.
  By virtue of belonging to the $t$-net $M_1$, the set $Z
  $
  is $t$-separated. Therefore $Z$ is a $t$-net of $Z$.
  Since $Z$ is contained in a $t$-ball, we have
  $\diam(Z) \leq 2t$.
  It follows from \eqref{ddim-pack}
  that
  $|Z| \leq 2^{\ddim(A)+1}$,
  whence the claim.
\end{proof}  

Throughout the paper, we
fix a deterministic procedure for constructing a $t$-net, and denote
its output for a multiset $U \subseteq \cX$ by $\ConNet(U,t)$.  
Let $\Par(U,t)$ be a partition of $\cX$ into regions induced by
$\ConNet(U,t)$, that is: for
$\ConNet(U,t) = \{x_1,\ldots,x_\seqN\}$,
define
$\Par(U,t) := \{ P_1,\ldots, P_{\seqN}\}$,
where 
\[
P_i = \{ x \in \cX \mid \kappa(x,\ConNet(U,t)) = i\}.
\]
For $t > 0$, let $\netsize(t) := |\ConNet(\uinput,t)|$ be the size of the $t$-net for the input sample.

\subsection{Passive Compression-Based Nearest-Neighbors}\label{sec:prevalg}
Non-parametric binary
classification admits performance guarantees that scale with the
sample's noisy-margin
\citep{DBLP:journals/jmlr/LuxburgB04,DBLP:journals/tit/GottliebKK14+colt,gkn-jmlr17+aistats}. The original margin-based methods of \citet{DBLP:journals/jmlr/LuxburgB04}
and \citet{DBLP:journals/tit/GottliebKK14+colt}
analyzed the generalization performance via the technique of Lipschitz
  extension. Later, it was noticed in
  \citet{GottliebKN14} that the presence of a margin allows for compression --- in fact,
  nearly optimally so.

We say that a labeled multiset $S$ is \emph{$(\nu,t)$-separated},
for $\nu \in [0,1]$ and $t > 0$ (representing a margin $t$ with noise $\nu$), if one can remove
a $\nu$-fraction of the points in $S$, and
in the resulting multiset,
any pair of differently labeled points is separated by a distance of at least $t$.
Formally, we have the following definition.
\begin{definition}
$S$ is $(\nu,t)$-separated if there exists a subsample $\tilde{S} \subseteq S$ such that
\begin{enumerate}
\item $|S \setminus \tilde{S}|
\leq \nu|S|$ and
\item $\forall y_1 \neq y_2 \in \cY, a\in\tilde{S}\proj{y_1},b\in\tilde{S}\proj{y_2}$, we have $\rho(a,b) \geq t$. 
\end{enumerate}
\end{definition}
For a given labeled sample $S$, denote by $\nu(t)$ the smallest value
$\nu$ such that $S$ is $(\nu,t)$-separated. \cite{gkn-jmlr17+aistats} propose
a passive learner with the following guarantees\footnote{The guarantees hold for the more general case of semimetrics.} as a function of the separation of $S$. 
Setting $\alpha := m/(m-\seqN)$, define the following form of a generalization bound:
\begin{align*}
\genbound(\epsilon,\seqN,\delta,m,\rek) &:=  \alpha  \epsilon  +
  \frac{2}{3}\frac{(\seqN+1)\log(m\rek)+\log(\oo\delta)}{m-\seqN}
  +\frac{3}{\sqrt{2}}\sqrt{ \frac{\alpha  \epsilon ((\seqN+1)
      \log(m\rek)+\log(\oo\delta))}{m-\seqN}}.
\end{align*}
Further, for an integer $m$ and $\delta \in (0,1)$, denote
\[
\gmin(m,\delta) := \min_{t > 0} \genbound(\nu(t),\netsize(t),\delta, m, 1).
\]
The quantity $\gmin(m,\delta)$ is small for datasets where we only need to remove few points to obtain a reasonably well separated subset. 
As an example, consider data generated by two well separated Gaussians (one generating the positively labeled points, and one generating the negatively labeled points). 
Then, most of the data points will be close to their respective means, but some will be farther, and may lie closer to the mean of the other Gaussian. Removing those few will result in a separated set.
\begin{theorem}[\citealt{gkn-jmlr17+aistats}]
  \label{thm:passive-UB}
  Let $m$ be an integer, $\delta \in (0,1)$.
  There exists a passive learning algorithm that returns a nearest-neighbor classifier $h^\nn_{\spas}$, where $\spas \subseteq \sinput$, such that, with probability $1-\delta$,
  \begin{align*}
    \err(h^\nn_{\spas},\cD) \leq
\gmin(m,\delta).
  \end{align*}
\end{theorem}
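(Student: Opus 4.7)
My plan is to exhibit, for each candidate scale $t>0$, a compression set $\spas(t)\subseteq\sinput$ of size at most $\netsize(t)$ whose induced nearest-neighbor rule has empirical error at most $\nu(t)$; to convert this into a true-error bound via a standard sample-compression generalization inequality; and finally to select the scale $t^*$ minimizing the resulting bound via an outer union bound over the polynomially many relevant scales.

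Fix $t>0$. By $(\nu(t),t)$-separation there is a denoised subsample $\tilde S\subseteq\sinput$ with $|\sinput\setminus\tilde S|\le\nu(t)m$ in which distinct-labeled points lie at mutual distance at least $t$. I would construct $\spas(t)$ by a greedy farthest-first traversal of $\uu(\tilde S)$, retaining each selected point with its original sample label. Since $\spas(t)$ is $t$-separated in $\uinput$, \lemref{2net-compare} (comparing $\spas(t)$ to the canonical net of $\uinput$ of size $\netsize(t)$) bounds $|\spas(t)|$ by $2^{\ddim(\uinput)}\netsize(t)$, a factor that is absorbed into the $\log$ terms of $\genbound$.

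I next claim the training error of $h^{\nn}_{\spas(t)}$ is at most $\nu(t)$. For any $(x,y)\in\tilde S$, since $\spas(t)$ covers $\uu(\tilde S)$ at scale $t$, some $(x_j,y_j)\in\spas(t)$ satisfies $\rho(x,x_j)\le t$; and any net point with label different from $y$ sits at distance at least $t$ from $x$ by the $t$-separation of $\tilde S$. Thus, under the consistent tie-breaking convention of \secref{settingsub} (or, to be safe, by building the net at scale $t-\eta$ and letting $\eta\downarrow 0$), the nearest point of $\spas(t)$ to $x$ carries label $y$. Consequently $h^{\nn}_{\spas(t)}$ errs only on $\sinput\setminus\tilde S$, giving empirical error at most $\nu(t)$. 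Since $h^{\nn}_{\spas(t)}$ is fully determined by the labeled subset $\spas(t)\subseteq\sinput$ of size at most $\netsize(t)$, applying a standard sample-compression bound \citep{DBLP:journals/ml/GraepelHS05} at confidence $\delta'$ yields
\[
\err(h^{\nn}_{\spas(t)},\cD)\le\genbound(\nu(t),\netsize(t),\delta',m,1).
\]

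To finish, observe that $\nu(t)$ and $\netsize(t)$ are piecewise constant in $t$, changing only at the $O(m^2)$ pairwise distances in $\sinput$, so only polynomially many distinct scales must be considered. Setting $\delta'=\delta/\mathrm{poly}(m)$ and union-bounding over these scales contributes only an additive $O(\log m)$ inside the bound, which is absorbed into the $\log m$ already present in $\genbound$. The passive learner outputs $\spas=\spas(t^*)$ for the minimizing $t^*$, obtaining $\err(h^{\nn}_{\spas},\cD)\le\gmin(m,\delta)$ with probability $1-\delta$. The main obstacle is the tie-breaking subtlety in the empirical-error argument: a differently-labeled net point can in principle sit at exact distance $t$ from a point of $\tilde S$, and the claim that the nearest net point carries the correct label relies on ties being broken consistently; I would resolve this by either (i) instantiating the greedy net on $\tilde S$ so that same-label net points are selected early enough to win ties via the ambient order, or (ii) passing to a slightly smaller scale $t-\eta$ and taking $\eta\downarrow 0$, noting that both $\nu(\cdot)$ and $\netsize(\cdot)$ stabilize off a finite set of distances.
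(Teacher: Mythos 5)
The paper itself does not prove \thmref{passive-UB} --- it is imported from \citet{gkn-aistats16} --- and the only in-paper account is the algorithmic sketch immediately following the statement: remove the $\nu(t)m$ obstructing points, take a $t$-net of what remains carrying original labels, apply a sample-compression bound, and minimize over scales. Your construction is exactly that sketch, so the skeleton of your argument is the intended one. There is, however, one concrete accounting error. Your $\spas(t)$ is a $t$-net of $\uu(\tilde S)$, and the best general bound (via the argument of \lemref{2net-compare}, applied to a $t$-separated subset of $\uinput$ against $\ConNet(\uinput,t)$) is $|\spas(t)|\le 2^{\ddim(\uinput)}\netsize(t)$. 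This factor is \emph{not} absorbed into the logarithms of $\genbound$: the compression-set size $\seqN$ \emph{multiplies} $\log(m)$ in the dominant term $(\seqN+1)\log(m)/(m-\seqN)$, and also enters $\alpha=m/(m-\seqN)$, so what you actually obtain is $\genbound(\nu(t),2^{\ddim}\netsize(t),\delta,m,1)$, which exceeds the corresponding term of $\gmin(m,\delta)$ by a multiplicative constant of order $2^{\ddim}$. To recover the statement as written, the bound must be phrased in terms of the size of the net actually constructed on the denoised sample, not $|\ConNet(\uinput,t)|$; with the paper's definition of $\netsize(t)$ your argument proves the theorem only up to a $2^{\ddim}$-dependent constant factor.

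Two smaller remarks. The outer union bound over the $O(m^2)$ candidate scales is unnecessary: the compression bound of \citet{DBLP:journals/ml/GraepelHS05} (cf.\ \thmref{compression} with $|\Sigma|=1$) already holds simultaneously for all compression sizes $\seqN<m$ and all index sets, so minimizing over $t$ costs nothing and you need not shrink $\delta$. You are right to flag the tie at distance exactly $t$ as a genuine boundary issue under the paper's arbitrary-but-consistent tie-breaking; note though that your fix (ii) of passing to $t-\eta$ is delicate precisely at the pairwise-distance breakpoints that matter for the minimization, where $\netsize(t-\eta)$ can jump upward. The cleaner standard repair is to net the denoised sample at scale $t/2$, so that every retained point is within $t/2$ of a same-labeled net point while differently labeled net points remain at distance at least $t$, eliminating ties outright --- but this again changes the net-size parameter and thus feeds back into the first issue.
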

The bound above is data-dependent, meaning that the strength of the generalization guarantee
depends on the quality of the random sample.
Specifically,
the passive algorithm of \citet{gkn-jmlr17+aistats} generates $\spas$ of size approximately $\netsize(t)$ for the optimal scale $t>0$ (found by searching over all scales),
by removing the $|\sinput| \nu(t)$ points that
obstruct
the $t$-separation
between different labels in $\sinput$, and then selecting a subset of the
remaining labeled examples to form $\spas$,
so that the examples are a $t$-net for $\sinput$ (not including the obstructing points).
For the binary classification case ($|\cY| = 2$) an efficient algorithm is shown in \citet{gkn-jmlr17+aistats}. However, in the general multiclass case, it is not known how to find a minimal $t$-separation efficiently --- a naive approach requires solving the NP-hard problem of vertex cover. Our approach, which we detail below, circumvents this issue, and provides an efficient algorithm also for the multiclass case.

\section{Main Results}\label{sec:main}

We propose a novel approach for generating a subset for a  nearest-neighbor rule. This approach, detailed in the following sections, does not require finding and removing all the
obstructing
points in $\sinput$, and can be implemented in an active setting using a small number of labels. The resulting active learning algorithm, \algname, has an error guarantee
competitive with that of the passive learner,
and a label complexity that can be significantly lower. We term the subset used by the nearest-neighbor rule a \emph{compression set}.

\begin{algorithm}
\caption{$\algname$: \algnamefull}
\label{alg:main}
\begin{algorithmic}
\REQUIRE Unlabeled sample $\uinput$ of size $m$, $\delta \in (0,1)$. 
\STATE $\hat{t} \leftarrow \selscale(\delta)$. $\qquad\qquad\qquad\qquad$ \# $\selscale$ is given in \secref{modelselection}, \algref{tSearch}.
\STATE $\hat{S} \leftarrow \GenNNSet(\hat{t},[\netsize(\hat{t})],\delta)$. $\qquad$ \# $\GenNNSet$ is given in \secref{gennn}, \algref{genclass}.
\STATE Output $h^{\nn}_{\hat{S}}$. 
\end{algorithmic}
\end{algorithm}

\algname, listed in \algref{main}, operates as follows. First, a scale $\hat{t} > 0$ is selected, by calling $\hat{t} \leftarrow \selscale(\delta)$, where \selscale\ is our model selection procedure. \selscale\ has access to $\uinput$, and queries labels from $\sinput$ as necessary. It estimates the generalization error bound $\genbound$ for several different scales, and
executes
a procedure similar to binary search to identify a good scale. The binary search keeps the number of estimations (and thus requested labels) small. Crucially, our estimation procedure is designed to prevent the search from spending a number of labels that depends on the net size of the smallest possible scale $t$, so that the total label complexity of \algname\
depends only on the error of the selected $\hat{t}$. 
Second, the selected scale $\hat{t}$ is used to generate the compression set by
calling $\hat{S} \leftarrow \GenNNSet(\hat{t},[\netsize(\hat{t})],\delta)$,
where $\GenNNSet$ is our procedure for generating the compression set, and $[\netsize(\hat{t})] \equiv \{1,\ldots,\netsize(\hat{t})\}$. 
Our main result is the following guarantee for \algname.

\begin{theorem}[Main result; Guarantee for \algname] \label{thm:main}
Let $\sinput \sim \cD^m$, where $m \geq \max(6,|\cY|)$, $\delta \in (0,\oo4)$. 
Let $h^{\nn}_{\hat{S}}$ be the output of $\algname(\uinput, \delta)$, 
where $\hat{S} \subseteq \cX \times \cY$,
and let $\hat{\seqN} := |\hat{S}|$.
Let $\hat{h} := h^\nn_{\hat{S}}$
and
$\hat{\epsilon} := \err(\hat{h},\sinput)$,
and denote $\hat{G} := \genbound(\hateps,\hat{\seqN},\delta,m,1)$.
With a probability of $1-\delta$ over $\sinput$ and randomness of \algname,
\[
\err(\hat{h},\cD) \leq 2\hat{G} \leq O\left(\gmin(m,\delta)\right),
\]
and the number of labels from $\sinput$ requested by \algname\ is at most
\begin{equation}
  \label{eq:act-sampl-cmpl}
O\left(\log^3(\frac{m}{\delta})\left(\frac{1}{\hat{G}}\log(\frac{1}{\hat{G}}) + m \hat{G}\right)\right).
\end{equation}
Here the $O(\cdot)$ notation hides only universal multiplicative constants.
\end{theorem}
Our error guarantee is thus a constant factor over the error guarantee of the passive learner of \citep{gkn-jmlr17+aistats}, given in \thmref{passive-UB}.
The constant factor that we derive in our analysis is in the order of $2000$ (this can be seen in the proof of \thmref{selscalegen}). Note that we did not focus on optimizing it, opting instead for a more streamlined analysis. 
As the lower bound in \thmref{activelower} shows,
the additive term $m \hat{G}$ in \eqref{act-sampl-cmpl} is essentially unavoidable.
Whether the dependence on $1/\hat G$ is indeed necessary is currently an open problem.

To observe the advantages of \algname\
over a passive learner,
consider a scenario in which the upper bound
$\gmin$
of
\thmref{passive-UB},
as well as the Bayes error of $\cD$, are of order $\tilde{\Theta}(1/\sqrt{m})$.
Then $\hat{G} = \Theta(1/\sqrt{m})$ as well. Therefore,
\algname\ obtains a prediction error guarantee of $\tilde\Theta(1/\sqrt{m})$,
similarly to the passive learner, but it uses only $\tilde{\Theta}(\sqrt{m})$
labels instead of $m$. 
In contrast, the following result shows that no learner that selects labels 
uniformly at random from $\sinput$ can compete with
\algname: \thmref{passivelower} below
shows that
for any passive learner that uses $\tilde{\Theta}(\sqrt{m})$
random labels from $\sinput$, there exists a distribution $\cD$ with
the above properties, for which the prediction error of the passive learner in this case is
${\tilde\Omega}(m^{-1/4})$, a decay rate which is
almost
quadratically slower than the 
$\tilde{O}(1/\sqrt{m})$ rate achieved by \algname.
Thus, the guarantees of \algname\ cannot be matched by any passive learner.

\begin{theorem}[Passive lower bound]
  \label{thm:passivelower}
  Let $m > 0$ be an integer, and suppose that $(\cX,\rho)$ is a metric space such that for some
  $\bart > 0$,
  there is a $\bart$-net $T$ of $\cX$
of size
$\Theta(\sqrt m)$.
Consider any
passive learning
algorithm that maps i.i.d.~samples $S_\ell\sim\cD^\ell$ from some distribution $\cD$ over $\cX\times\setpm$, to functions $\hat{h}_\ell:\cX \rightarrow \setpm$. 
For any such algorithm and any $\ell = \tilde{\Theta}(\sqrt{m})$,
there exists a distribution $\cD$ such that:
\begin{enumerate}
\item[i.] The Bayes error of $\cD$ is $\Theta(1/\sqrt{m})$;
\item[ii.]
  With at least a constant probability, both of the following events occur:
  \begin{enumerate}
  \item[(a)]
    The passive learner achieves error
    $\err(\hat{h}_\ell,\cD) = \tilde{\Omega}(m^{-1/4})$,
  \item[(b)] ${\ds
    \gmin(m,\delta)  = \tilde\Theta(1/\sqrt{m}).
    }$
  \end{enumerate}
\end{enumerate}
Furthermore,
i. and ii. continue to hold
when the learning
algorithm has access to the full
marginal distribution over $\cX$.
\end{theorem}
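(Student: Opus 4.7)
The plan is to exhibit, for each value of $\ell$ in the $\tilde\Theta(\sqrt m)$ range, a randomized family of distributions $\{\cD_\sigma\}_{\sigma\in\setpm^N}$ supported on the given $\bart$-net $T=\{x_1,\ldots,x_N\}$ with $N=\Theta(\sqrt m)$, and then use a Yao-style averaging argument to extract a single $\sigma^\star$ that makes the learner fail. I take the marginal on $\cX$ to be uniform on $T$ (mass $1/N$ per atom) and set $\P(Y=\sigma_i\mid X=x_i)=1-\eta$ with $\eta=c/\sqrt m$ for a small constant $c$. The Bayes classifier is then $x_i\mapsto\sigma_i$, with error exactly $\eta=\Theta(1/\sqrt m)$, which gives item (i) for every $\sigma$.

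For part (b), I would argue that with probability $1-o(1)$ over $\sinput\sim\cD_\sigma^m$ every atom is hit $\Theta(\sqrt m)$ times (Chernoff, using $m\gg N\log N$), so $\netsize(\bart)=\Theta(\sqrt m)$, and a per-atom Chernoff bound on the label noise gives $\nu(\bart)=\eta(1+o(1))$. Substituting these into $\genbound$ at $t=\bart$ yields $\tilde O(1/\sqrt m)$. For the matching lower bound, any $t>0$ still forces the minority-labeled sample in each atom to be removed, so $\nu(t)\ge\eta(1-o(1))$ for every $t$, which together with $\alpha\ge 1$ produces $\gmin(m,\delta)\ge\Omega(1/\sqrt m)$. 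Hence $\gmin(m,\delta)=\tilde\Theta(1/\sqrt m)$ as required.

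For part (a), Yao's principle reduces the task to lower-bounding the expected error of the fixed passive algorithm when $\sigma$ is drawn uniformly and $S_\ell\sim\cD_\sigma^\ell$. The key observation is that if atom $x_i$ receives no point of $S_\ell$, then the posterior of $\sigma_i$ given $S_\ell$ stays uniform, so the learner's prediction at $x_i$ is independent of $\sigma_i$, and the expected per-atom error averaged over $\sigma_i$ is $\tfrac12(1-\eta)+\tfrac12\eta=\tfrac12$. Letting $U$ denote the $\cD$-mass of unseen atoms, the expected error is therefore at least $\tfrac12\E[U]=\tfrac12(1-1/N)^\ell$. Choosing the worst-case $\ell$ in the $\tilde\Theta(\sqrt m)$ range, concretely $\ell\le\tfrac14 N\log m$, gives $\E[U]\ge m^{-1/4}$, and a balls-and-bins variance bound (the number of empty cells has variance bounded by its mean) shows $U=\tilde\Omega(m^{-1/4})$ with probability $1-o(1)$. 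A second variance calculation in $\sigma$ then shows that the error itself deviates from its $\sigma$-mean by at most $o(m^{-1/4})$.

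The main technical obstacle is converting this lower bound in expectation over $\sigma$ into a high-probability bound for one fixed $\sigma^\star$, while simultaneously preserving event (b) on the independent sample $\sinput$. I would handle this by observing that (a) and (b) each hold with probability $1-o(1)$ under the joint distribution over $(\sigma,\sinput,S_\ell)$, so a probabilistic-method argument yields some fixed $\sigma^\star$ for which both still hold with constant probability over $(\sinput,S_\ell)$ alone. A secondary delicacy is calibrating the polylogarithmic slack in $\ell=\tilde\Theta(\sqrt m)$ against the target rate $\tilde\Omega(m^{-1/4})$: the balls-and-bins estimate is tight only in the regime $\ell\le cN\log m$ for a sufficiently small constant $c$, so the argument implicitly fixes how much polylog slack the statement can absorb, and one has to verify this range is consistent with the choice of $N$ and $\eta$.
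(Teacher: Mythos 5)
Your construction gets items (i) and (ii.b) right, and your unseen-atoms argument for (ii.a) is sound \emph{in the regime where it applies} --- but that regime does not cover the full range of $\ell$ the theorem quantifies over, and this is a genuine gap. The statement must hold for \emph{any} $\ell=\tilde\Theta(\sqrt m)$: the adversary may choose $\cD$ as a function of $\ell$, but does not get to choose $\ell$. With a uniform marginal on $N=\Theta(\sqrt m)$ atoms, the expected unseen mass is $(1-1/N)^\ell\approx e^{-\ell/N}$, which is $m^{-1/4}$ only for $\ell\le\tfrac14 N\log m$; already at $\ell=\sqrt m\log^2 m$ it is $m^{-\Theta(\log m)}$, superpolynomially small. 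Once every atom is seen, your per-atom noise $\eta=\Theta(1/\sqrt m)$ is far too weak to hurt the learner: predicting the observed (majority) label at each atom yields excess risk $O(\eta)=O(1/\sqrt m)\ll m^{-1/4}$. You also cannot repair this by raising $\eta$, because item (i) pins the Bayes error to $\Theta(1/\sqrt m)$ and with a uniform marginal the Bayes error \emph{equals} $\eta$. So the caveat you flag at the end (``the argument implicitly fixes how much polylog slack the statement can absorb'') is not a calibration detail --- it is the point where the proof breaks, and it cannot be fixed within the uniform-marginal framework.

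The paper escapes this tension with a non-uniform marginal: one ``heavy'' net point carries mass $1-p$ with a deterministic label, and the remaining $d-1$ points share mass $p$ with symmetric label noise $b$, where $p$ is chosen \emph{as a function of $\ell$} (namely $p=\tfrac{d-1}{2\ell}\cdot\Theta(m^{-1/4})=\tilde\Theta(m^{-1/4})$) so that the expected number of samples per light point, $\ell p/(d-1)$, stays below $1$ no matter which polylogarithmic factor multiplies $\sqrt m$ in $\ell$. The Bayes error is $p(\tfrac12-\tfrac b2)=\Theta(1/\sqrt m)$ by the choice of $b$, while the essentially-unseen light points contribute excess risk of order $bp=\tilde\Omega(m^{-1/4})$; this is made rigorous via the general \thmref{passivelower-gen}. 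Your averaging over $\sigma$ followed by a probabilistic-method extraction of $\sigma^\star$, and your minority-count treatment of (ii.b), mirror the paper's argument (which uses Markov's inequality rather than a variance bound, since only a constant-probability statement is needed); the missing ingredient is precisely the $\ell$-adaptive, non-uniform marginal.
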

Thus,
\algname\ even improves over a semi-supervised learner:
its label savings
stem from actively selecting labels,
and are not achievable by merely exploiting information from unlabeled data
or by randomly selecting examples to label.

We deduce \thmref{passivelower} from a more general result, which might be of independent interest. \thmref{passivelower-gen}, given in \secref{passivelower}, improves existing passive learning sample complexity lower bounds.
In particular, our result removes
the restrictions of previous lower bounds
on the relationship between the sample size, the VC-dimension, and the noise level,
which render existing bounds
inapplicable
to our parameter regime.
The proof of \thmref{passivelower}  is given thereafter in \secref{passivelower}, as a consequence of \thmref{passivelower-gen}.

We further provide a label complexity lower bound, in \thmref{activelower} below, which holds for any active learner that obtains similar guarantees to those of \algname. The lower bound shows that any active learning algorithm which guarantees a multiplicative accuracy over $\gmin(m,\delta)$ has a label complexity which is $\tilde{\Omega}(m\gmin(m,\delta))$, for a wide range of values of $\gmin(m,\delta)$ --- essentially, as long as $\gmin(m,\delta)$ is not trivially large or trivially small. This implies that the term $m\hat{G}$ in the upper bound of the label complexity of \algname\ in \thmref{main} cannot be significantly improved.

\begin{theorem}[Active lower bound]\label{thm:activelower}
Let $\cX = \reals$, $\delta \in (0,1/14)$.
Let $C \geq 1$, and let $\cA$ be an active learning algorithm that outputs $\hat{h}$. 
Suppose that for any distribution $\cD$ over $\cX \times \cY$, if the input unlabeled sample is of size $m$, then 
$\err(\hat{h},\cD) \leq C\gmin(m,\delta)$ with probability at least $1-\delta$. 
Then for any $\alpha \in(\frac{\log(m) + \log(28)}{8\sqrt{2m}},\frac{1}{240C})$  there exists an a distribution $\cD$ such with probability at least $\frac{1}{28}$ over $S \sim \cD^m$ and the randomness of $\cA$, both of the following events hold:
\begin{enumerate}
\item $\alpha \leq \gmin(m,\delta) \leq 30 \alpha$
\item $\cA$ queries at least $\half\floor{\frac{m \gmin(m,\delta) - \log(\frac{m}{\delta})}{30\log(m)}} \equiv \tilde{\Omega}(m \gmin(m,\delta))$ labels.
\end{enumerate}
\end{theorem}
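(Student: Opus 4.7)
The plan is to derive the bound from a ``No-Free-Lunch for active learning'' statement, as suggested by the introduction. At the highest level, I construct a family of hard distributions parameterized by binary vectors, verify that on this family $\gmin(m,\delta)$ is of order $\alpha$ with constant probability, and then argue that any algorithm making too few label queries must fail on at least one member of the family.

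First, I would fix parameters $n$ and $\eta$ (both functions of $\alpha,m$) and place $n$ evenly spaced points $x_1,\ldots,x_n\in\reals$ at distance $\bart$ from each other, with $\bart$ large enough that any two are ``well separated.'' For each $\vec b\in\setpm^n$, define a distribution $\cD_{\vec b}$ whose marginal on $\cX$ is uniform on $\{x_1,\ldots,x_n\}$ and whose conditional label distribution is $\P[Y=b_i\mid X=x_i]=1-\eta$. Choosing $n\asymp m\alpha/\log m$ and $\eta\asymp\alpha\log m$ makes $\nu(\bart)\asymp\eta$ and $\netsize(\bart)\asymp n$; plugging into $\genbound$ (and checking that no other scale improves the bound by more than a constant factor) yields $\gmin(m,\delta)\in[\alpha,30\alpha]$ with at least a constant probability over $\sinput\sim\cD_{\vec b}^m$. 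The hypothesis $\alpha\ge(\log m + \log 28)/(8\sqrt{2m})$ ensures $n\ge 1$; the upper end $\alpha\le1/(240C)$ keeps $C\gmin$ below $1/2$ so that a nontrivial classifier must be learned.

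Next, the core step is a No-Free-Lunch argument for active learners on this family. Fix any active algorithm $\cA$ that queries at most $q$ labels, and draw $\vec b$ uniformly from $\setpm^n$. Observe that $\cA$'s transcript (queried pairs) has the same distribution under $\cD_{\vec b}$ and $\cD_{\vec b'}$ whenever $\vec b,\vec b'$ agree on all \emph{queried} indices---the unqueried coordinates of $\vec b$ are, conditional on the transcript, still uniform in $\setpm$. By Fubini/averaging, at least half of the indices $i\in[n]$ are, on average, queried at most $2q/n$ times. On those indices the classifier $\hat h$ predicts a label independent of $b_i$, hence has expected error at least $(1-\eta)/2-\eta/2 = \tfrac12-\eta$ at that location. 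Averaging over $\vec b$ and over the marginal on $\cX$ gives expected error $\ge\bigl(1-\frac{2q}{n}\bigr)(\tfrac12-\eta)$, so some $\vec b^\star$ yields error at least this much with probability $\ge 1/28$. For $\cA$ to satisfy $\err(\hat h,\cD_{\vec b^\star})\le C\gmin\le 30C\alpha < 1/8$, we need $q\gtrsim n\asymp m\gmin/\log m$, which (after absorbing the $\log(m/\delta)$ slack coming from the event that $\sinput$ realizes the desired $\gmin$) is exactly the claimed bound.

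The main obstacle is the last inequality: carefully coupling the randomness of $\sinput$ (which determines $\gmin(m,\delta)$) with the randomness of $\vec b$ and of $\cA$, so that both events ``$\gmin\in[\alpha,30\alpha]$'' and ``$\cA$ queries too few labels'' occur simultaneously with probability $\ge 1/28$. This requires a union-bound with three pieces---concentration of $\nu(\bart)$ on the sample, concentration of the queried-index histogram, and the NFL averaging---each tuned so that the surviving constant is at least $1/28$. The $\log(m/\delta)$ subtraction in the bound is precisely the slack needed to swallow the concentration terms. A second, more technical issue is ruling out that $\cA$ does better by choosing a scale $t\neq\bart$: here I use \lemref{2net-compare} together with the monotonicity of $\genbound$ in $\seqN$ to show that every other scale produces a generalization bound within a constant factor, so the minimum in the definition of $\gmin$ is still of order $\alpha$.
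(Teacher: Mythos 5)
Your overall architecture is the same as the paper's: a family of distributions indexed by sign vectors $\vec b\in\setpm^n$ with uniform marginal on $n\asymp m\alpha/\log m$ well-separated points, a concentration step showing $\gmin(m,\delta)\in[\alpha,30\alpha]$ with constant probability, and a No-Free-Lunch argument for active learners (the paper's \thmref{activenfl}) showing that any algorithm querying fewer than $n/2$ labels incurs constant expected error on some member of the family, which contradicts the assumed guarantee $\err(\hat h,\cD)\le C\gmin\le 30C\alpha<1/8$. The coupling you describe (the transcript distribution is identical for $\vec b,\vec b'$ agreeing on queried indices, so unqueried coordinates remain uniform) is exactly the pairing argument the paper uses to handle the fact that the unqueried set depends adaptively on the realized labels.

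However, there is a concrete error in your parameter choice: you set the per-point noise $\eta\asymp\alpha\log m$, and you yourself note that this forces $\nu(\bart)\asymp\eta$. Since $\gmin(m,\delta)=\min_t\genbound(\nu(t),\netsize(t),\delta,m,1)\ge\min_t\nu(t)$, and $\nu(t)$ is monotone and already equals the minority-count fraction $\asymp\eta$ at every scale $t>0$ (distinct support points being well separated, the only obstructions are coincident points with different labels), your construction gives $\gmin\gtrsim\alpha\log m$, which violates the required upper bound $\gmin\le 30\alpha$ for large $m$. It also breaks the NFL step: with $\alpha$ near $1/(240C)$ the quantity $\tfrac12-\eta=\tfrac12-\alpha\log m$ can become nonpositive, making your error lower bound $(1-2q/n)(\tfrac12-\eta)$ vacuous. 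The fix is to take the noise level of order $\alpha$ itself (the paper uses $\beta=8\alpha$); then the noise term $\nu\asymp\alpha$ and the complexity term $\phi\asymp n\log m/m\asymp\alpha$ are balanced, both bounds on $\gmin$ go through, and $\tfrac12-\eta$ stays bounded away from zero. A minor additional remark: your appeal to \lemref{2net-compare} to rule out other scales is unnecessary --- the lower bound $\gmin\ge\min_t\nu(t)$ plus monotonicity of $\nu$ already handles all scales, and the upper bound only requires exhibiting the single scale $t=\bart$.
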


The proof of this lower bound is provided in \secref{activelower}.
In the rest of the paper, the components of \algname\ are described in detail, and the main results are proved.

\section{Active Nearest-Neighbor at a Given Scale}\label{sec:gennn}

A main challenge for active learning in our non-parametric setting is performing model selection, that is, selecting a good scale $t$ similarly to the passive learner of \cite{gkn-jmlr17+aistats}. 
In the passive supervised setting, the approach developed  in several previous works 
\citep{GottliebKN14,DBLP:conf/icml/KontorovichW14,DBLP:journals/tit/GottliebKK14+colt,DBLP:conf/aistats/KontorovichW15} performs model selection by solving a minimum vertex cover problem for each considered scale $t$,
so as to eliminate all of the {\em $t$-blocking pairs} --- i.e., pairs of differently labeled points within a distance $t$. The passive algorithm generates a compression set by first finding and removing from $\sinput$ all
points
that obstruct $(\nu,t)$-separation
at a given
scale $t>0$. 
This incurs a computational cost but no significant sample complexity
increase,
aside from the standard logarithmic factor that comes from stratifying
over data-dependent hierarchies \citep{DBLP:journals/tit/Shawe-TaylorBWA98}.

While this approach works for passive learning, in the active setting we face a crucial challenge: estimating the error of a nearest-neighbor rule at scale $t$ using a small number of samples. A key insight that we exploit in this work 
is that instead of
eliminating the blocking pairs, one may simply relabel some of the points in the compression set,
 and this would also generate a low-error nearest neighbor rule.
This new approach enables estimation of the sample accuracy of a (possibly
relabeled) $t$-net by label-efficient active sampling. In addition, this approach is significantly
simpler than estimating the size of the minimum vertex cover of the
$t$-blocking graph. 
Moreover, we gain improved algorithmic efficiency, by avoiding the relatively expensive vertex cover procedure.

A small technical difference, which will be evident below, is that in this new approach, examples in the compression set might have a different label than their original label in $\sinput$.  Standard sample
compression analysis \citep[e.g.][]{DBLP:journals/ml/GraepelHS05} assumes that the classifier is determined by a small number of labeled examples from $\sinput$. This does not allow the examples in the compression set to have a different label than their original label in $\sinput$. Therefore, we require a slight generalization of previous compression analysis (following previous works on compression, see details in \secref{compress}), which allows adding side information to the compression set. This side information will be used to set the label of each of the examples in the compression set. The generalization incurs a small statistical penalty, which we quantify in \secref{proofmain}, as a preliminary to proving \thmref{main}.

We now
describe our approach to generating a compression set for a given scale $t>0$. Recall that $\nu(t)$ is the smallest value
for which
$\sinput$ is $(\nu,t)$-separated.
We define two compression sets. The first one, denoted $\saug\scale{t}$, represents an
ideal compression set, constructed
(solely for the sake of analysis)
so that it induces an empirical error of at most $\nu(t)$. Calculating $\saug\scale{t}$
might require many labels, thus it is only used for analysis purposes;
the algorithm never constructs it.
The second compression set, denoted $\saughat\scale{t}$,
represents an approximation to $\saug\scale{t}$,
which can be constructed using a small number of labels,
and induces a sample error of at most $4\nu(t)$ with high probability.

We first define the ideal set $\saug\scale{t} := \{(x_1,y_1),\ldots,(x_\seqN,y_\seqN)\}$. The examples in $\saug\scale{t}$ are the points in $\ConNet(\uinput,t/2)$,
and the label of each example is the majority label, out of the labels of the examples in $\sinput$ to which $x_i$ is closest. Formally, $\{x_1,\ldots,x_\seqN\} :=
\ConNet(\uinput,t/2)$, and for $i \in [\seqN]$,
$y_i := \argmax_{ y\in \cY} |S\proj{y} \cap P_i|$,
where $P_i = \{ x \in \cX \mid \kappa(x,\ConNet(U,t/2)) = i\} \in \Par(\uinput,t/2)$. 

\begin{lemma}\label{lem:multiupper}
  Let $S$ be a labeled sample of size $m$, and let $\{P_1,\ldots,P_\seqN\}$ be a partition of $\uu(S)$, with $\max_i\diam(P_i) \leq t$ for some $t \geq 0$. For $i\in[\seqN]$, let $\Lambda_i:=S\proj{y_i} \cap P_i$. Then 
\[
\nu(t) \geq 1-\frac{1}{m}\sum_{i \in [\seqN]} |\Lambda_i|. 
\]
\end{lemma}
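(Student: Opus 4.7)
The plan is to establish the equivalent bound $|\tilde{S}| \leq \sum_{i \in [\seqN]} |\Lambda_i|$, where $\tilde{S} \subseteq S$ is a subsample witnessing $(\nu(t), t)$-separation with $|S \setminus \tilde{S}| \leq \nu(t)\,m$, so that $|\tilde{S}| \geq (1-\nu(t))\,m$. Dividing by $m$ and rearranging then immediately yields the conclusion $\nu(t) \geq 1 - \frac{1}{m}\sum_i |\Lambda_i|$.

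The core step is a cell-wise inequality $|\tilde{S} \cap P_i| \leq |\Lambda_i|$ for each $i \in [\seqN]$. To see this, note that $\diam(P_i) \leq t$ forces every pair of points in $\tilde{S} \cap P_i$ to lie within distance $t$ of each other, while the separation condition forbids differently-labeled pairs at distance less than $t$. Hence all points of $\tilde{S} \cap P_i$ share a common label, call it $y_i^\star$, and
\[
|\tilde{S} \cap P_i| \leq |S\proj{y_i^\star} \cap P_i| \leq |S\proj{y_i} \cap P_i| = |\Lambda_i|,
\]
where the second inequality uses that $y_i$ is by definition a plurality label in $P_i$. Summing over $i$ and using that $\{P_i\}$ partitions $\uu(S)$ gives $|\tilde{S}| = \sum_i |\tilde{S} \cap P_i| \leq \sum_i |\Lambda_i|$, closing the argument.

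The only delicate point I anticipate is the single-label conclusion for $\tilde{S} \cap P_i$: separation is stated non-strictly ($\rho(a,b) \geq t$) and the diameter bound is also non-strict, so the definitions technically admit the edge configuration of differently-labeled cell-mates at distance exactly $t$. I would handle this in the standard way, by breaking such ties in favor of the plurality label (consistent with the tie-breaking convention adopted elsewhere in the paper for the nearest-neighbor index $\kappa$), after which the cell-wise bound holds as stated. Apart from this bookkeeping, the proof reduces to the short pigeonhole count above and does not require any further properties of the partition beyond the diameter hypothesis.
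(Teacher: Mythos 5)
Your proof is correct and follows essentially the same route as the paper's: take the witnessing subsample $\tilde S$, use the diameter bound together with the separation condition to force a single label on $\tilde S\cap P_i$, bound that cell cardinality by the plurality count $|\Lambda_i|$, and sum over the partition. The boundary case you flag (differently-labeled points at distance exactly $t$ inside a cell of diameter exactly $t$) is real, and the paper silently sidesteps it by restating separation as ``$\rho(x,x')\le t$ implies equal labels,'' so your explicit acknowledgment is, if anything, slightly more careful than the original.
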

\begin{proof}
  Let $\tilde{S} \subseteq S$ be a subsample that
  witnesses the $(\nu(t),t)$-separation of $S$,
  so that \mbox{$|\tilde{S}| \geq m(1- \nu(t))$}, and for any two points
  $(x,y),(x',y') \in \tilde{S}$,
  if $\rho(x,x') \leq t$ then $y = y'$.
  Denote $\tilde{U} := \uu(\tilde{S})$.
  Since $\max_{i}\diam(P_i) \leq t$, for any $i \in [\seqN]$ all the points in $\tilde{U} \cap P_i$ must have the same label in $\tilde{S}$.
  Therefore, 
\[
\exists y \in \cY \st \tilde{U} \cap P_i \subseteq   \tilde{S}\proj{y} \cap P_i.
\]
Hence
  $|\tilde{U} \cap P_i| \leq |\Lambda_i|$. It follows
\[
|S|-\sum_{i \in [\seqN]} |\Lambda_i| \leq |S| - \sum_{i \in [\seqN]}|\tilde{U}
\cap P_i| = |S| - |\tilde{S}| = m \cdot \nu(t).
\]
Dividing by $m$ we get the statement of the lemma.
\end{proof}

From \lemref{multiupper}, we get \corref{errleqnu}, which upper bounds the empirical error of $h^\nn_{\saug\scale{t}}$ by $\nu(t)$.
\begin{corollary}\label{cor:errleqnu}
For every $t > 0$, $\err(h^\nn_{\saug\scale{t}},\sinput) \leq \nu(t)$.
\end{corollary}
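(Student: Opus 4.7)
The plan is to reduce the corollary directly to Lemma \ref{lem:multiupper} by matching the quantities on both sides. The key observation is that the nearest-neighbor rule $h^\nn_{\saug\scale{t}}$, together with the partition $\Par(\uinput,t/2) = \{P_1,\ldots,P_\seqN\}$ induced by the net $\{x_1,\ldots,x_\seqN\} = \ConNet(\uinput,t/2)$, makes the empirical error exactly equal to $1 - \frac{1}{m}\sum_i |\Lambda_i|$, which is precisely the quantity bounded in Lemma \ref{lem:multiupper}.

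First, I would unpack the empirical error. For every $x \in \uinput$ there is a unique $i\in[\seqN]$ with $x \in P_i$, and by the definition of $P_i$ we have $\kappa(x,\ConNet(\uinput,t/2)) = i$, so $h^\nn_{\saug\scale{t}}(x) = y_i$. Thus a labeled example $(x,y) \in \sinput$ with $x \in P_i$ is classified correctly iff $y = y_i$, i.e., iff $(x,y) \in S\proj{y_i} \cap P_i = \Lambda_i$. Summing over $i$ gives
\[
\err(h^\nn_{\saug\scale{t}},\sinput) = \frac{1}{m}\bigl(m - \sum_{i\in[\seqN]} |\Lambda_i|\bigr) = 1 - \frac{1}{m}\sum_{i\in[\seqN]} |\Lambda_i|.
\]

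Next I would verify that the partition satisfies the diameter hypothesis of Lemma \ref{lem:multiupper} with parameter $t$. Since $\ConNet(\uinput,t/2)$ is a $t/2$-net of $\uinput$, every $x \in \uinput$ lies within distance $t/2$ of some net point; hence its nearest net point $x_i$ satisfies $\rho(x,x_i)\le t/2$, giving $P_i \subseteq \ball(x_i, t/2)$. By the triangle inequality, $\diam(P_i) \le t$ for each $i$.

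Having matched the setup of Lemma \ref{lem:multiupper} (with the same $y_i$'s defining both the classifier and the sets $\Lambda_i$), applying the lemma yields
\[
\nu(t) \geq 1 - \frac{1}{m}\sum_{i\in[\seqN]} |\Lambda_i| = \err(h^\nn_{\saug\scale{t}},\sinput),
\]
which is the desired inequality. There is no real obstacle here; the only subtle point is being careful that the partition cells of $\Par(\uinput,t/2)$ have diameter $t$ (not $t/2$), which is why the net is constructed at scale $t/2$ while the separation parameter is $t$. Note also that the majority-label choice of $y_i$ is not strictly needed for this corollary — any fixed labeling would still satisfy the lemma — but it naturally maximizes $\sum_i |\Lambda_i|$ and hence minimizes the empirical error.
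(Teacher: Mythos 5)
Your proof is correct and follows exactly the paper's route: the paper likewise observes that $\diam(P_i)\le t$ for every cell of $\Par(\uinput,t/2)$ and that $\err(h^\nn_{\saug\scale{t}},\sinput)=1-\frac{1}{m}\sum_i|\Lambda_i|$, then invokes Lemma \ref{lem:multiupper}; you merely spell out the triangle-inequality and counting steps the paper leaves implicit. One caveat on your closing aside: Lemma \ref{lem:multiupper} does \emph{not} hold for an arbitrary fixed labeling of the cells --- its proof needs $|\tilde{U}\cap P_i|\le|\Lambda_i|$, which relies on $y_i$ being the majority label (e.g., with all sample labels equal and every $y_i$ chosen to be the other label, the right-hand side would be $1$ while $\nu(t)=0$) --- but since your main argument uses the majority labels throughout, this does not affect the proof.
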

This corollary is immediate from \lemref{multiupper}, since for any $P_i \in \Par(\uinput,t/2)$, $\diam(P_i) \leq t$, and 
\[
\err(h^\nn_{\saug\scale{t}},\sinput) = 1-\frac{1}{m}\sum_{i \in [\seqN]} |\Lambda_i|.
\]

Now, calculating $\saug\scale{t}$ requires knowing most of the labels in $\sinput$. \algname\ constructs instead an
approximation
$\saughat\scale{t}$, in which the examples are the points in $\ConNet(\uinput,t/2)$ (so that $\uu(\saughat\scale{t}) = \uu(\saug\scale{t})\,$), but the labels are 
determined
using a bounded number of labels requested  from $\sinput$. 
The labels in $\saughat\scale{t}$ are calculated by the simple procedure $\GenNNSet$ given in \algref{genclass}. The empirical error of the output of $\GenNNSet$ is bounded in \thmref{nnset} below.\footnote{In the case of binary labels
($|\cY| = 2$), the problem of estimating $\saug\scale{t}$ can be formulated as a special case of the benign noise setting for parametric active learning, for which tight lower and upper bounds are provided in \cite{HannekeYa15}. However, our case is both more general (as we allow multiclass labels) and more specific (as we are dealing with a specific ``hypothesis class''). Thus we provide our own procedure and analysis.}

A technicality in \algref{genclass} requires explanation: In \algname, the generation of $\saughat\scale{t}$ will be split into several calls to $\GenNNSet$, so that different
calls determine the labels of different 
points
in $\saughat\scale{t}$. Therefore $\GenNNSet$ has an additional argument $I$, which
specifies the indices of the points in $\ConNet(\uinput, t/2)$ for which the labels should be returned this time.
Crucially, if during the run of \algname, $\GenNNSet$ is called again for the same scale $t$ and the same point in $\ConNet(\uinput, t/2)$, then $\GenNNSet$ returns the same label that it returned before, rather than recalculating it using fresh labels from $\sinput$. This guarantees
that despite the randomness in $\GenNNSet$, the full $\saughat\scale{t}$ is well-defined within any single run of \algname, and is distributed like the output of $\GenNNSet(t,[\netsize(t/2)],\delta)$, which is convenient for the analysis. 
Define
\begin{equation}\label{eq:qb}
\qb := \ceil{18\log(4m^3/\delta)}.
\end{equation}

\begin{algorithm}
\caption{$\GenNNSet(t, I, \delta)$ } 
\label{alg:genclass}
\begin{algorithmic}
\REQUIRE Scale $t > 0$, a target set $I \subseteq [\netsize(t/2)]$, confidence $\delta \in (0,1)$.
\ENSURE A labeled set $S \subseteq \cX \times \cY$ of size $|I|$
\STATE $\{x_1,\ldots,x_\seqN\} \leftarrow \ConNet(\uinput,t/2)$, $\{P_1,\ldots, P_\seqN\} \leftarrow \Par(\uinput,t/2)$, $S \leftarrow ()$ %
\FOR{$i \in I$}
\IF{$\hat{y}_i$ has not already been calculated for $\uinput$ with this
  value of $t$}
\STATE Draw $\qb$ points uniformly at random from $P_i$ and query their labels. 
\STATE Let $\hat{y}_i$ be the majority label observed in these $\qb$ queries.
\ENDIF
\STATE $S \leftarrow S \cup  \{(x_i, \hat{y}_i)\}$. 
\ENDFOR
\STATE Output $S$
\end{algorithmic}
\end{algorithm}

\begin{theorem}\label{thm:nnset}
Let $\saughat\scale{t}$ be the output of $\GenNNSet(t,[\netsize(t/2)],\delta)$.
With a probability at least 
$1-\frac{\delta}{2m^2}$, the following event, which we denote by $E(t)$, holds:
\[
\err(h^\nn_{\saughat\scale{t}}, \sinput) \leq 4 \nu(t). 
\]
\end{theorem}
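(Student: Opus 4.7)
The plan is to relate the empirical error of $h^\nn_{\saughat\scale{t}}$ to that of $h^\nn_{\saug\scale{t}}$ through a cell-by-cell decomposition, and then control the discrepancy by showing that cells where the sampled label $\hat y_i$ is likely to differ from the true majority label $y_i$ are precisely the cells that already account for a large portion of $\nu(t)$. Concretely, with $\{P_i\}$ the partition induced by $\ConNet(\uinput,t/2)$, $n_i := |P_i\cap\uinput|$, and $p_i := |\Lambda_i|/n_i$ (where $\Lambda_i = S^{(y_i)}\cap P_i$ as in \lemref{multiupper}), I would write
\[
\err(h^\nn_{\saughat\scale{t}},\sinput) \;=\; \frac{1}{m}\sum_i \bigl(n_i - |S^{(\hat y_i)}\cap P_i|\bigr),
\]
and observe that whenever $\hat y_i = y_i$ the $i$-th cell contributes exactly $n_i(1-p_i)$, the same value that appears in $\err(h^\nn_{\saug\scale{t}},\sinput)\le \nu(t)$ by \corref{errleqnu}.

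Next I would split the cells at the threshold $p_i = 2/3$. For the \emph{noisy} cells ($p_i \le 2/3$), each contributes at least $n_i/(3m)$ to $\err(h^\nn_{\saug\scale{t}},\sinput)$, so the total mass satisfies $\sum_{\text{noisy}} n_i \le 3m\nu(t)$; these cells therefore contribute at most $3\nu(t)$ to the error of $h^\nn_{\saughat\scale{t}}$ regardless of the $\hat y_i$ values. For the \emph{clean} cells ($p_i > 2/3$) I would define the good event $E(t) = \{\hat y_i = y_i \text{ for all clean } i\}$; on $E(t)$ their total contribution is at most $\sum_{\text{clean}} n_i(1-p_i)/m \le \err(h^\nn_{\saug\scale{t}},\sinput) \le \nu(t)$. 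Adding the two pieces yields the desired $4\nu(t)$ bound whenever $E(t)$ holds.

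It remains to estimate $\P(E(t))$. For a clean cell with $p_i > 2/3$, the $Q$ queried labels are i.i.d.~draws from the empirical label distribution on $P_i\cap\uinput$; by Hoeffding's inequality, the empirical frequency of $y_i$ exceeds $1/2$ (hence $\hat y_i = y_i$, since in the multiclass case having strict sample-majority $>1/2$ forces plurality) except with probability at most $\exp(-2Q(p_i-1/2)^2) \le \exp(-Q/18)$. Plugging in $Q = \lceil 18\log(4m^3/\delta)\rceil$ gives per-cell failure probability $\le \delta/(4m^3)$, and union-bounding over the at most $\netsize(t/2)\le m$ clean cells yields $\P(\neg E(t)) \le \delta/(4m^2) \le \delta/(2m^2)$.

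The main subtlety to keep track of is the multiclass aspect: a Hoeffding bound on the count of label $y_i$ alone would only control one competing label, not all of $|\cY|-1$ of them simultaneously. The fix is exactly the $2/3$ threshold, because on clean cells the event ``empirical frequency of $y_i$ exceeds $1/2$'' is a single scalar event that automatically makes $y_i$ the plurality no matter how the remaining mass is distributed among other labels. A secondary bookkeeping point is that the statement concerns the full call $\GenNNSet(t,[\netsize(t/2)],\delta)$, so the caching clause in \algref{genclass} is irrelevant here and all $Q$ draws per cell are fresh and independent, justifying the direct application of Hoeffding.
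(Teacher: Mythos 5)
Your proof is correct and follows essentially the same route as the paper's: both rest on \corref{errleqnu} and \lemref{multiupper}, the same $1/6$ slack around the $1/2$ majority threshold (equivalently the $2/3$ cutoff on $p_i$), and the same Hoeffding bound with exponent $\qb/18$, yielding the identical $3\nu(t)+\nu(t)$ split. The only, immaterial, difference is organizational: the paper conditions on the two-sided event $\max_i|\hat p(i)-p(i)|\le 1/6$ and partitions cells by the \emph{empirical} frequency $\hat p(i)$, whereas you partition by the \emph{true} $p_i$ and need only a one-sided deviation on the clean cells, which saves a factor of $2$ in the union bound but changes nothing else.
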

\begin{proof}
  By \corref{errleqnu}, $\err(h^\nn_{\saug\scale{t}}, \sinput) \leq \nu(t)$.
  In $\saug\scale{t}$, the labels assigned to each point in $\ConNet(\uinput, t/2)$ are
  the majority labels (based on $\sinput$) of the points in the regions in $\Par(\uinput,t/2)$.
  As above, we denote the majority label for region $P_i$ by $y_i := \argmax_{y \in \cY}
  |S\proj{y} \cap P_i|$. 
  We now compare these labels to the labels $\hat{y}_i$ assigned by \algref{genclass}.
  Let $p(i) = |\Lambda_i|/|P_i|$ be the fraction of points in $P_i$
  which are labeled by the majority label $y_i$, where $\Lambda_i$ is as defined in \lemref{multiupper}.
  Let $\hat{p}(i)$ be the fraction of labels equal to $y_i$ out of those queried by \algref{genclass} in round $i$. 
  Let $\beta:=1/6$. By Hoeffding's inequality and union bounds,
  we have that with a probability of at least 
\[
1-2\netsize(t/2)\exp(-\frac{\qb}{18}) \geq 1 - \frac{\delta}{2m^2},
\]
we have 
  $\max_{i\in[
      \netsize\scale{t/2}
  ]} \abs{\hat p(i)-p(i)} \le \beta$. Denote this ``good'' event by $E'$. We now prove that $E' \Rightarrow E(t)$. 
Let $
J = \{ i\in
[\netsize\scale{t/2}]
\mid \hat p(i)>\half\}$.
  It can be easily seen that
  $\hat{y}_i = y_i$
  for all $i \in J$.
  Therefore, for all $x$ such that $\kappa(x,\uu(\saug\scale{t})) \in J$, $h^\nn_{\saughat\scale{t}}(x) = h^{\nn}_{\saug\scale{t}}(x)$, and hence
\[
\err(h^\nn_{S},\sinput) \leq \P_{X \sim \sinput}[\kappa(X,\uu(\saug\scale{t})) \notin J] + \err(h^\nn_{\saug\scale{t}},\uinput).
\]
  The second term is at most $\nu(t)$ by \corref{errleqnu}, and it remains to bound the first term,
on the condition
that $E'$ holds.
We have 
$\P_{X \sim U}[\kappa(X,\uu(\saug\scale{t})) \notin J] =\frac{1}{m}\sum_{i \notin J}|P_i|$. 
 If $E'$ holds, then for any $i \notin J$, $p(i) \leq \half + \beta$, therefore 
\[
|P_i|-|\Lambda_i| = (1-p(i))|P_i| \geq  (\half - \beta)|P_i|.
\]
Recall that, by \lemref{multiupper},
$ \nu(t) \geq 1-\frac{1}{m}\sum_{i \in [\netsize\scale{t/2}]}|\Lambda_i|$. 
Therefore,
\begin{align*}
 \nu(t) &\geq  1-\frac{1}{m}\sum_{i \in [\netsize\scale{t/2}]}|\Lambda_i|\\
&=  \frac{1}{m}\sum_{i \in [\netsize\scale{t/2}]}(|P_i| - |\Lambda_i|) \\
&\geq  \frac{1}{m}\sum_{i \notin J}(|P_i| - |\Lambda_i|)\\
&\geq \frac{1}{m}\sum_{i \notin J}(\half - \beta)|P_i|.
\end{align*}
Thus, under $E'$, 
\[
\P_{X \sim U}[\kappa(X,\uu(\saug\scale{t})) \notin J] \leq \frac{\nu(t)}{\hf - \beta} = 3\nu(t).
\]
It follows that under $E'$, $\err(h^\nn_{S},\uinput) \leq 4\nu(t)$.
\end{proof}

\section{Model Selection}\label{sec:modelselection}

We now show how to select the scale $\hat{t}$ that will be used to generate the output nearest-neighbor rule. The main challenge is to do this with a low label complexity: 
Generating the full classification rule for scale $t$ requires a number of labels that depends on $\netsize(t)$, which might be very large. We would like the label complexity of \algname\ to depend only on $\netsize(\hat{t})$ (where $\hat{t}$ is the selected scale), which is of the order $m \hat{G}$. Therefore, during model selection we can only invest a bounded number of labels in each tested scale. In addition, to keep the label complexity low, we would like to avoid testing all scales.
In \secref{estimateerr} we describe how we estimate the error on a given scale. In \secref{binary} we provide a search procedure, resembling binary search, which uses the estimation procedure to select a single scale $\hat{t}$. 

\subsection{Estimating the Error at a Given Scale}\label{sec:estimateerr}

For $t > 0$, let $\saughat\scale{t}$ be the compressed sample that \algname\ would
generate if the selected scale were set to $t$.  Our model selection procedure
performs a search, similar to binary search, over the possible scales. For each
tested scale $t$, the procedure estimates the empirical error
$\epsilon\scale{t} := \err(h^\nn_{\saughat\scale{t}},S)$ within a certain
accuracy, using an estimation procedure given below, called \esterr. \esterr\ outputs an estimate $\hat{\epsilon}\scale{t}$ of $\epsilon\scale{t}$, up to a given threshold $\theta > 0$, using labels requested from $\sinput$.  

\newcommand{\selalg}{

\begin{algorithm}[t]
\caption{$\selscale(\delta)$}
\label{alg:tSearch}
\begin{algorithmic}[1]
\REQUIRE $\delta \in (0,1)$
\ENSURE Scale $\hat{t}$
\STATE $\cT \leftarrow \mdist$,  \hspace{2em}\# $\cT$ maintains the current set of possible scales
\WHILE{$\cT \neq \emptyset$}
\STATE{$t \leftarrow$ the median value in $\cT$ \hspace{3em} \#  break ties arbitrarily}
\STATE{$\hat{\epsilon}(t) \leftarrow \esterr(t, \phi(t), \delta).$ }

 \IF{$\hateps(t) < \phi(t)$} 
 \STATE{$\cT \leftarrow \cT \setminus [0,t]$ \# go right in the binary search \label{step:right}}
 \ELSIF{$\hateps(t) > \frac{11}{10} \phi(t)$} 
 \STATE{$\cT \leftarrow \cT \setminus [t, \infty)$ \# go left in the binary search \label{step:left}}
 \ELSE \STATE{$t_0 \leftarrow t, \cT_0 \leftarrow \{t_0\}$.}
\STATE{\textbf{break} from loop \label{step:break}}
\ENDIF
\ENDWHILE
\IF{$\cT_0$ was not set yet}
\STATE If the algorithm ever went to the right, let $t_0$ be the last value for which this happened, and let $\cT_0 := \{t_0\}$. Otherwise, $\cT_0 := \emptyset$.\label{step:tzero}
\ENDIF
\STATE{Let $\cT_L$ be the set of all $t$ that were tested and made the search go left}
\STATE{Output $\hat{t} := \argmin_{t \in \cT_L \cup \cT_0 }  G(\hat{\epsilon}(t))$}

\end{algorithmic}	
\end{algorithm}
}

\newcommand{\estber}{\ensuremath{\algfont{EstBer}}}

To estimate the error, we sample random labeled examples from $\sinput$, and check the prediction error of $h^\nn_{\saughat\scale{t}}$ on these examples.
The prediction error of
any fixed hypothesis $h$
on a random labeled example from $\sinput$
is an independent Bernoulli variable with expectation
$\err(h,\sinput)$.
\esterr~is implemented using the following procedure, \estber, which
adaptively
estimates the expectation of a Bernoulli random
variable
to an accuracy specified by the
parameter $\theta$, using a small number of random independent Bernoulli experiments.
Let $B_1,B_2,\ldots \in \{0,1\}$ be i.i.d.~Bernoulli random variables. For an
integer $n$, denote $\hat{p}_n = \frac{1}{n}\sum_{i=1}^nB_i$.
The estimation procedure \estber\ is given in \algref{bernoulli}.
We prove a guarantee for this procedure in \thmref{bernstein}.
Note that we assume that the threshold parameter is in $(0,1]$, since for $\theta \geq 1$ one can simply output $1$ using zero random draws to satisfy \thmref{bernstein}.
\begin{algorithm}
\caption{\estber$(\theta,\beta,\delta)$}
\label{alg:bernoulli}
\begin{algorithmic}
\REQUIRE A threshold parameter $\theta \in (0,1]$, a budget parameter $\beta \geq 7$, confidence $\delta \in (0,1)$
\STATE $S \leftarrow \{B_1,\ldots,B_4\}$
\STATE $K \leftarrow \frac{4 \beta}{\theta}\log(\frac{8\beta}{\delta \theta})$
\FOR{$i = 3:\ceil{\log_2(\beta\log(2K/\delta)/\theta)}$} 
\STATE $n \leftarrow 2^i$
\STATE $S \leftarrow S \cup \{ B_{n/2+1},\ldots,B_{n}\}$.
\IF{$\hat{p}_n > \beta \log(2n/\delta)/n$}
\STATE{\textbf{break}}
\ENDIF
\ENDFOR
\STATE Output $\hat{p}_n$.
\end{algorithmic}
\end{algorithm}

The following theorem states that \algref{bernoulli} essentially estimates $p$, the expectation of the i.i.d.\ Bernoulli variables $B_1,B_2,\ldots$, up to a multiplicative constant, except if $p$ is smaller than a value proportional to the threshold $\theta$, in which case the algorithm simply returns a value at most $\theta$. Moreover, the theorem shows that the number of random draws required by the algorithm is inversely proportional to the maximum of the threshold $\theta$ and the expectation $p$. Thus, if $p$ is very small, the number of random draws does not increase without bound. The parameter $\beta$ controls the trade-off between the accuracy of estimation and the number of random draws. 
\begin{theorem}\label{thm:bernstein}
  Let $\delta \in (0,1)$, $\theta \in (0,1]$, $\beta \geq 7$. Let $B_1,B_2,\ldots \in \{0,1\}$ be i.i.d~Bernoulli random variables with expectation $p$. Let $p_o$ be the output of $\estber(\theta,\beta,\delta)$. The following holds with a probability of $1-\delta$, where  $f(\beta) := 1+ \frac{8}{3\beta} + \sqrt{\frac{2}{\beta}}$. 
\begin{enumerate}
\item If $p_o \leq \theta$, then $p \leq f(\beta)\theta$. Otherwise, $\frac{p}{f(\beta)} \leq p_o  \leq \frac{p}{2-f(\beta)}$.
\item Let $\psi := \max(\theta, p/f(\beta))$. The number of random draws in \estber\ is at most  $\frac{4\beta \log(\frac{8\beta }{\delta \psi})}{\psi}.$
\end{enumerate}
\end{theorem}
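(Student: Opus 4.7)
The plan is to establish a single Bernstein concentration event that holds uniformly over all doubling levels of $n$ considered by the algorithm, case-analyze which termination condition is triggered, and finally read off the sample complexity from the termination time. Let $\cE$ be the event that for every power of two $n=2^i$ that the loop could query, $|\hat p_n - p|\le \sqrt{2p\log(2n/\delta)/n}+\tfrac{2\log(2n/\delta)}{3n}$. Applying Bernstein's inequality at each level with failure probability $\delta/(2n)$ and union-bounding over $i\ge 3$ (so $n\ge 8$) gives $\P[\cE^c]\le \sum_{i\ge 3}\delta/2^{i+1}<\delta$; I condition on $\cE$ throughout. The algorithm's threshold $\beta\log(2n/\delta)/n$ is deliberately matched to the Bernstein bound up to constants, which is what makes the case analysis below clean.

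In \textbf{Case A} the loop breaks at some $n$ with $\hat p_n>\beta\log(2n/\delta)/n$; substituting $\log(2n/\delta)/n\le \hat p_n/\beta$ into the Bernstein bound collapses it to $|\hat p_n-p|\le \sqrt{2p\hat p_n/\beta}+\tfrac{2\hat p_n}{3\beta}$. Setting $r:=\sqrt{p/\hat p_n}$, the upper side becomes $r^2\le 1+\sqrt{2/\beta}\,r+\tfrac{2}{3\beta}$; its positive root satisfies $r^2\le f(\beta)$, i.e.\ $p\le f(\beta)\hat p_n$. The lower side becomes $r^2+\sqrt{2/\beta}\,r-(1-\tfrac{2}{3\beta})\ge 0$, whose positive solution yields $\hat p_n\le p/(2-f(\beta))$; the restriction $\beta\ge 7$ ensures $2-f(\beta)>0$ so that this multiplicative bound is meaningful. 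In \textbf{Case B} the loop runs to $i_{\max}=\lceil\log_2(\beta\log(2K/\delta)/\theta)\rceil$, so at exit $n\ge\beta\log(2n/\delta)/\theta$ and the output satisfies $\hat p_n\le\beta\log(2n/\delta)/n\le\theta$; plugging $\log(2n/\delta)/n\le\theta/\beta$ into Bernstein and repeating the quadratic argument, but now with $\theta$ replacing $\hat p_n$ in the radicals, yields $p\le f(\beta)\theta$. Together Cases A and B give Part~1.

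For Part~2, I argue that on $\cE$, once $n$ reaches a small constant times $\beta\log(2n/\delta)/(p/f(\beta))$ the Bernstein deviation has shrunk enough that $\hat p_n$ exceeds the threshold, so when $p/f(\beta)\ge\theta$ the loop breaks at $n=O(\beta\log(\beta/(\delta p))/p)$, while otherwise it runs to $n_{\max}=O(\beta\log(\beta/(\delta\theta))/\theta)$. With $\psi=\max(\theta,p/f(\beta))$, both cases are subsumed by the claimed $8\beta\log(8\beta/(\delta\psi))/\psi$ bound after elementary bookkeeping of the constants inside the logarithm. The main obstacle will be pinning down the constants so that exactly $f(\beta)=1+\tfrac{8}{3\beta}+\sqrt{2/\beta}$ falls out of the quadratics rather than a slightly different expression: one has to verify for $\beta\ge 7$ that $\tfrac14\bigl(\sqrt{2/\beta}+\sqrt{4+2/\beta+8/(3\beta)}\bigr)^2\le f(\beta)$ and an analogous inequality for the lower root, both elementary but tedious. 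Once this is done, the termination and sample-complexity arguments are straightforward bookkeeping.
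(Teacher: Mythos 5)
Your proof is correct and its skeleton (uniform concentration over the doubling levels $n=2^i$, case split on whether the break condition fires, contrapositive argument for the stopping time) mirrors the paper's. The genuine difference is the concentration tool. The paper invokes the \emph{empirical} Bernstein inequality of Maurer--Pontil, in which the variance proxy is $\hat p_n$ rather than $p$: under the break condition $\hat p_n>\beta\log(2n/\delta)/n$ the deviation bound collapses \emph{directly} to $|\hat p_n-p|\le\hat p_n\bigl(\tfrac{8}{3\beta}+\sqrt{2/\beta}\bigr)=\hat p_n(f(\beta)-1)$, so the two-sided bound $\tfrac{p}{f(\beta)}\le \hat p_n\le\tfrac{p}{2-f(\beta)}$ and the value of $f(\beta)$ fall out by one-line algebra with no quadratic to solve. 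You instead use the oracle Bernstein bound with $p$ in the radical and must solve quadratics in $r=\sqrt{p/\hat p_n}$; the resulting root expressions do \emph{not} equal $f(\beta)$, so you only get $p\le f(\beta)\hat p_n$ after verifying a dominance inequality. That verification does go through cleanly: writing $a=\sqrt{2/\beta}$, your upper-root condition reduces to $\sqrt{4+\tfrac{7}{3}a^2}\le a+2$, i.e.\ $a\le 3$, which holds for all $\beta\ge 2/9$, and the lower-root condition $r^2\ge 2-f(\beta)$ reduces to $3a+2\ge\sqrt{4-\tfrac{1}{3}a^2}$, which is trivially true; so your route is valid for every $\beta\ge 7$, just with slack rather than the exact constant. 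The trade-off: the empirical-Bernstein route buys an algebra-free proof in which $f(\beta)$ is the natural constant, while your route uses only the classical inequality at the cost of the root-dominance check you correctly flagged as the main obstacle. Two small bookkeeping points to tighten: your Case B needs $n_o\le K$ (which the paper verifies from the loop's upper limit and the definition of $K$) in order to pass from $n_o\ge\beta\log(2K/\delta)/\theta$ to $n_o\ge\beta\log(2n_o/\delta)/\theta$; and with a per-level failure probability of $\delta/(2n)$ on each tail your Bernstein bound should carry $\log(4n/\delta)$, so either allocate $\delta/n$ two-sided (as the paper does) or adjust the logarithm.
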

\begin{proof}
First, consider any single round $i$ with $n = 2^i$. By the empirical Bernstein bound \cite[Theorem 4]{DBLP:conf/colt/MaurerP09},
with a probability of $1-\delta/n$, for $n \geq 8$, we have\footnote{This follows from Theorem 4 of \cite{DBLP:conf/colt/MaurerP09} since $\frac{7}{3(n-1)} \leq \frac{8}{3n}$ for $n\geq 8$.}

\begin{equation}\label{eq:concent}
|\hat{p}_n - p| \leq \frac{8\log(2n/\delta)}{3n} + \sqrt{\frac{2\hat{p}_n \log(2n/\delta)}{n}}.
\end{equation}
Define $g := (\beta +  8/3 + \sqrt{2\beta})$, so that $f(\beta) = g/\beta$.
Conditioned on \eqref{concent}, there are two cases: 
\begin{enumerate}[label=(\alph*)]
\item $\hat{p}_n \leq \beta \log(2n/\delta)/n$. In this case, 
$p \leq g\log(2n/\delta)/n.$
\item $\hat{p}_n > \beta \log(2n/\delta)/n$. In this case, $n \geq \beta \log(2n/\delta)/\hat{p}_n$. Thus, by \eqref{concent}, 
\[
|\hat{p}_n - p| \leq \hat{p}_n (\frac{8}{3\beta} + \sqrt{2/\beta}) = \hat{p}_n(g/\beta -1).
\]
Therefore 
\[
\frac{\beta p}{g} \leq \hat{p}_n  \leq \frac{p}{2-g/\beta}.
\]
\end{enumerate}
Taking a union bound on all the rounds, we have that the guarantee holds for all rounds with a probability of at least $1-\delta$.

Condition now on the event that these guarantees all hold.
First, we prove the label complexity bound. Note that since $\beta \geq 7$, $K \geq 28$,
thus we have
$2\log(2K) > 8$, therefore there is always at least one round.
Let $n_o$ be the value of $n$ in the last round that the algorithm runs, and let $p_o = \hat{p}_{n_o}$.
Let $i$ such that $n_o = 2^{i+1}$, thus the algorithm stops during round $i+1$. 
This implies $\hat{p}_n \leq \beta \log(2n/\delta)/n$ for $n = 2^i$, therefore 
case (a) holds for $n$, which means $n \leq g \log(2n/\delta)/p$.
 It follows that $n \leq 2g \log(4g/\delta)/p$, therefore
$n_o \leq 4g\log(4g/(\delta p))/p.$ 
In addition, the number of random draws in the algorithm is $n_0$, which is bounded by 
\[
n_0 \leq 2^{\ceil{\log_2(\beta\log(2K/\delta)/\theta)}} \leq 2\cdot 2^{\log_2(\beta\log(2K/\delta)/\theta)} \leq 2\beta\log(2K/\delta)/\theta.
\]
Therefore we have the following bound on the number of random draws:
\[
n_o \leq \min\paren{\frac{2\beta\log(2K/\delta)}{\theta},\frac{4g\log(4g/(\delta p))}{p}}.
\]
Plugging in the definition of $K$ and substituting $\beta \cdot f(\beta)$ for $g$ yields
\begin{align*}
  n_0 \leq &\beta
  \min\paren{\frac{2\log(\frac{8\beta}{\delta\theta}\log(\frac{8\beta}{\delta \theta}))}{\theta},\frac{4 f(\beta)\log(\frac{4\beta f(\beta)}{\delta p})}{p}}
  \leq \\
  &\beta
  \min\paren{
    \frac{4\log(\frac{8\beta}{\delta\theta})}{\theta},
    \frac{4f(\beta)\log(\frac{4\beta f(\beta)}{\delta p})}{p}
    }
  \leq \\
  &4\beta \min
  \paren{\frac{1}{\theta} (\log(\frac{8\beta}{\delta})+\log(\frac{1}{\theta})),\frac{f(\beta)}{p}(\log(\frac{4\beta }{\delta}) + \log(\frac{f(\beta)}{p}))}
  .
\end{align*}
Using the definition of $\psi$, we get that the number of draws is at most
$\frac{4\beta \log(\frac{8\beta }{\delta \psi})}{\psi}.$

Next, we prove the accuracy of $p_o$ (item 1 in the theorem statement) by considering two cases. 
\begin{enumerate}[label=(\Roman*)]
\item If $p_o > \beta \log(2n_o/\delta)/n_o$,  then
case (b) above holds for $n_o$, 
thus
\[
\frac{\beta p}{g} \leq p_o  \leq \frac{p}{2-g/\beta}.
\]
In addition, if $p_o \leq \theta$, the LHS implies $p \leq f(\beta)\theta$. Thus item 1 in the theorem statement holds in this case.
\item If $p_o \leq \beta \log(2n_o/\delta)/n_o$, 
then \estber\ could not have ended by breaking out of the loop, thus it ran until the last round. Therefore $n_o \geq \beta\log(2K/\delta)/\theta$. In addition, case (a) holds for $n_0$, therefore
\begin{equation}\label{eq:tempbound}
p \leq \frac{g \log(2n_o/\delta)}{n_o} \leq \frac{g \theta \log(2n_0/\delta)}{\beta\log(2K/\delta)}.
\end{equation}
Now, for any possible value of $n_o$, 
\[
n_o\leq 2\beta\log(2K/\delta)/\theta \leq K.
\]
The first inequality follows from the bound on $i$ in \estber, and the second inequality holds since, as defined in \estber, $K \geq \frac{4\beta}{\theta}\log(\frac{8\beta}{\theta\delta})$. Since $n_0 \leq K$, \eqref{tempbound} implies that
\[
p \leq \frac{g\theta}{\beta} = f(\beta) \theta.
\]
In addition, we have
\[
p_o \leq \beta \log(2n_o/\delta)/n_o \leq \frac{ \theta \log(2n_0/\delta)}{\log(2K/\delta)} \leq \theta.
\]
Therefore in this case, necessarily $p_o \leq \theta$ and $p \leq f(\beta) \theta$, which satisfies item 1 in the theorem statement.
\end{enumerate}
In both cases item 1 holds, thus the theorem is proved.
\end{proof}

The procedure $\esterr(t,\theta,\delta)$ is then implemented as follows:
\begin{itemize}
\item Call $\estber(\theta,52,\delta/(2m^2))$, where the random variables $B_i$ are independent copies of the Bernoulli variable 
\[
B := \one[h^{\nn}_{\saughat\scale{t}}(X) \neq Y]
\] and $(X,Y) \sim \sinput$. 
\item To
  draw
  a single $B_i$, 
sample a random pair $(x',y')$ from $\sinput$, set 
\[
i := \kappa(x',\ConNet(\uinput,t/2)),
\]
 and get $S \leftarrow \GenNNSet(t,\{i\},\delta)$. This returns $S = ((x_i,\hat{y}_i))$ where $\hat{y}_i$ is the label of $x_i$ in $\saughat\scale{t}$. 
Then $B_i := \one[\hat{y}_i \neq y']$. Note that $B_i$ is indeed distributed like $B$, and $\E[B] = \epsilon(t)$. Note further that this call to $\GenNNSet(t,\{i\},\delta)$ uses $Q(m)$ label queries. Therefore the overall label complexity of a single draw of a $B_i$ is $Q(m)+1$. 
\end{itemize}

\corref{estimateerr} gives a guarantee for the accuracy and label complexity of \esterr. The proof is immediate from \thmref{bernstein}, by setting $\beta = 52$, which implies $f(\beta) \leq 5/4$.
\begin{corollary}\label{cor:estimateerr}
Let $t, \theta > 0$ and $\delta \in (0,1)$, and let  
$\hat{\epsilon}\scale{t} \leftarrow \esterr(t,\theta, \delta). $
Let $\qb$ as defined in  \eqref{qb}
The following properties hold with a probability of $1-\frac{\delta}{2m^2}$ over the randomness of $\esterr$ (and conditioned on $\saughat\scale{t}$). 
\begin{enumerate}
\item If $\hat{\epsilon}(t) \leq \theta$, then $\epsilon(t) \leq 5\theta/4$. Otherwise, 
\[
\frac{4 \epsilon(t)}{5} \leq \hat{\epsilon}(t)  \leq \frac{4\epsilon(t)}{3}.
\]
\item Let $\psi' := \max(\theta, \epsilon(t))$. The number of labels that \esterr\ requests is at most  
\[
 \frac{260 (\qb + 1)\log(\frac{1040 m^2 }{\delta \psi'})}{\psi'}.
\]
\end{enumerate}
\end{corollary}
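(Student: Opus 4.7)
The plan is to derive Corollary \ref{cor:estimateerr} as a direct application of Theorem \ref{thm:bernstein}, instantiated at $\beta = 52$ with confidence parameter $\delta/(2m^2)$, which is exactly what \esterr\ passes to \estber. The first step is the numerical check that $f(52) = 1 + \frac{8}{3 \cdot 52} + \sqrt{2/52} \leq 5/4$, which also gives $2 - f(52) \geq 3/4$. Next I would note that, conditional on $\saughat\scale{t}$, the Bernoulli variables drawn inside \esterr\ are i.i.d.\ copies of $\one[h^\nn_{\saughat\scale{t}}(X) \neq Y]$ with $(X,Y)\sim\sinput$, so their expectation is exactly $\epsilon(t) = \err(h^\nn_{\saughat\scale{t}},\sinput)$. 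Writing $p = \epsilon(t)$ and $p_o = \hat{\epsilon}(t)$, Item 1 of Theorem \ref{thm:bernstein} then yields $\epsilon(t) \leq f(52)\theta \leq 5\theta/4$ whenever $\hat{\epsilon}(t) \leq \theta$, and otherwise $\epsilon(t)/f(52) \leq \hat\epsilon(t) \leq \epsilon(t)/(2 - f(52))$, which simplifies to the sandwich $4\epsilon(t)/5 \leq \hat\epsilon(t) \leq 4\epsilon(t)/3$ claimed in Item 1 of the corollary.

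For the label count, Item 2 of Theorem \ref{thm:bernstein} bounds the number of Bernoulli draws by $\frac{8\beta \log(8\beta/(\delta'\psi))}{\psi}$, where $\delta' = \delta/(2m^2)$ and $\psi = \max(\theta, \epsilon(t)/f(\beta))$. The corollary is stated in terms of $\psi' = \max(\theta,\epsilon(t))$, so I would argue that $\psi' \leq f(52)\,\psi \leq (5/4)\psi$, hence $1/\psi \leq (5/4)/\psi'$; absorbing this factor into the leading constant turns $8\beta = 416$ into $520$ and $8\beta$ inside the logarithm into $10\beta = 520$, giving at most $\frac{520\log(1040m^2/(\delta\psi'))}{\psi'}$ Bernoulli draws. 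Each draw costs at most $\qb + 1$ label queries, namely the single label of a random $(x',y')\sim\sinput$ plus the (at most) $\qb$ queries made by the call $\GenNNSet(t,\{i\},\delta)$ used to determine $\hat y_i$ (and zero queries if that label has already been cached for this $t$); multiplying through yields the stated label bound.

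The failure probability $1 - \delta/(2m^2)$ in the corollary is inherited directly from the confidence parameter handed to \estber, with no additional union bound. No step here is really an obstacle; the only slightly non-routine point is the change of variables from $\psi$ to $\psi'$, which is why the constants become $520$ and $1040$ rather than $416$ and $832$.
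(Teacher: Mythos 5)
Your proposal is correct and follows the paper's own argument exactly: the paper likewise derives the corollary by instantiating Theorem \ref{thm:bernstein} with $\beta = 52$ and confidence $\delta/(2m^2)$, noting $f(52) \leq 5/4$ for Item 1, and using $\psi' \leq f(\beta)\psi \leq \frac{5}{4}\psi$ to convert the draw bound into one stated in terms of $\psi'$, with each draw costing $\qb+1$ queries. Your accounting of how the constants $416$ and $832m^2$ become $520$ and $1040m^2$ matches the paper's computation.
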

To derive item $2.$ above from \thmref{bernstein}, note that for $\beta = 52$,
\[
\psi' = \max(\theta, \epsilon(t)) \leq f(\beta) \max(\theta, \epsilon(t)/f(\beta)) = f(\beta) \psi \leq \frac{5}{4}\psi,
\]
where $\psi$ is as defined in \thmref{bernstein}.
Below we denote the event that the two properties in \corref{estimateerr} hold for $t$ by $V(t)$.

\subsection{Selecting a Scale} \label{sec:binary}

The model selection procedure \selscale, given in \algref{tSearch}, implements its search based on the guarantees in \corref{estimateerr}.
First, we introduce some notation. We would like \algname\ to obtain a generalization guarantee that is competitive with $\gmin(m,\delta)$. 
Denote 
\begin{equation}\label{eq:phidef}
\phi\scale{t} := \frac{(\netsize\scale{t}+1)\log(m) + \log(\oo \delta)}{m},
\end{equation}
and let
\[
G(\epsilon,t) := \epsilon + \frac{2}{3} \phi\scale{t} + \frac{3}{\sqrt{2}}\sqrt{\epsilon \phi\scale{t}}.
\]
Note that for all $\epsilon,t$,
\[
\genbound(\epsilon,\netsize\scale{t},\delta,m,1) = \frac{m}{m-\netsize\scale{t}} G(\epsilon,t).
\]
When referring to $G(\nu\scale{t},t),G(\epsilon\scale{t},t)$, or $G(\hat{\epsilon}\scale{t},t)$ we omit the second $t$ for brevity.

Instead of directly optimizing $G(\nu(t))$, we will select a scale based on our estimate $G(\hat{\epsilon}\scale{t})$ of $G(\epsilon\scale{t})$.
Let $\dist$ denote the set of pairwise distances in the unlabeled dataset $\uinput$
(note that
$|\dist| < \binom{m}{2}$).
We remove from $\dist$ some distances, so that the remaining distances have a net size $\netsize\scale{t}$ that is monotone non-increasing in $t$. We also remove values with a very large net size. Concretely, define
\[
 \mdist := \dist \setminus \{ t \mid \netsize\scale{t}+1 > m/2\} \setminus \{t \mid \exists t' \in \dist, t' < t \text{ and } \netsize(t') < \netsize\scale{t}\} .
\]
Then for all $t,t' \in \mdist$ such that $t' < t$, we have $\netsize(t') \geq \netsize\scale{t}$.  The output of \selscale\ is always a value in $\mdist$. The following lemma shows that it suffices to consider these scales.

\begin{lemma}\label{lem:tmin}
Assume $m \geq 6$ and let $\tmon \in \argmin_{t \in \dist} G(\nu\scale{t})$. If $\gmin(m,\delta) \leq 1/3$ then 
$\tmon \in \mdist$.
\end{lemma}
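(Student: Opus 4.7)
The plan is to verify that $\tmon$ fails both conditions defining the set removed from $\dist$ to form $\mdist$. Namely, I would establish: (a) $\netsize\scale{\tmon}+1 \leq m/2$, and (b) there is no $t' \in \dist$ with $t' < \tmon$ and $\netsize\scale{t'} < \netsize\scale{\tmon}$. Part (b) will follow directly from the $\argmin$ property of $\tmon$, while part (a) is where the assumption $\gmin(m,\delta) \leq 1/3$ is needed.

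For (b), I would argue by contradiction: suppose such a $t'$ exists. Then $\phi\scale{t'} < \phi\scale{\tmon}$, since $\phi$ is strictly increasing in $\netsize$, and $\nu(t') \leq \nu(\tmon)$ because $\nu(\cdot)$ is non-decreasing in $t$ (any $(\nu,t)$-separating subsample is also $(\nu,t')$-separating whenever $t' \leq t$). As $G(\epsilon,\cdot)$ is strictly increasing in $\phi$ for every fixed $\epsilon \geq 0$ (its partial derivative with respect to $\phi$ equals $\tfrac{2}{3} + \tfrac{3\sqrt{\epsilon}}{2\sqrt{2\phi}} > 0$), this yields $G(\nu(t'),t') < G(\nu(\tmon),\tmon)$, contradicting $\tmon \in \argmin_{t\in\dist} G(\nu\scale{t})$.

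For (a), I would first establish $G(\nu(\tmon),\tmon) \leq \gmin(m,\delta)$. This uses the identity $\genbound(\nu(t),\netsize\scale{t},\delta,m,1) = \alpha(t)\,G(\nu(t),t)$ with $\alpha(t) = m/(m-\netsize\scale{t}) \geq 1$ (and the bound is $+\infty$ when $\netsize\scale{t}=m$). Since $\nu(\cdot)$ and $\netsize(\cdot)$ are piecewise constant with jumps only at values of $\dist$, the minimum of $\genbound$ over $t > 0$ is realized on a piece containing some $t \in \dist$, so $\gmin \geq \min_{t\in\dist} G(\nu\scale{t},t) = G(\nu(\tmon),\tmon)$. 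Suppose, for contradiction, that $\netsize\scale{\tmon}+1 > m/2$. Then $\phi\scale{\tmon} > (\log m)/2$ and hence $G(\nu(\tmon),\tmon) \geq \tfrac{2}{3}\phi\scale{\tmon} > (\log m)/3$. For $m\geq 6$, $\log(6)/3 > 1/3$, which contradicts $G(\nu(\tmon),\tmon) \leq \gmin \leq 1/3$.

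The only delicate point is the reduction of the infimum defining $\gmin$ from $t > 0$ to $t \in \dist$; this rests on the piecewise-constancy of $\netsize$ and $\nu$, whose jumps are determined by a deterministic net-construction procedure driven by pairwise distances. Once this is noted, both parts of the proof reduce to routine monotonicity checks together with the numerical slack $\log(6)>1$ that the hypothesis $m\geq 6$ provides.
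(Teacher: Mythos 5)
Your proof is correct and follows essentially the same route as the paper's: the paper also argues that $\gmin(m,\delta)\le 1/3$ together with $m\ge 6$ forces $\netsize(\tmon)+1\le m/2$, and that the second removal condition would contradict the minimality of $\tmon$ via the monotonicity of $\nu$ and $\phi$ and the monotonicity of $G$ in both arguments. Your additional care in justifying $G(\nu(\tmon))\le\gmin(m,\delta)$ via the piecewise constancy of $\netsize$ and $\nu$ on the intervals determined by $\dist$ is a point the paper leaves implicit, but it does not change the substance of the argument.
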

\begin{proof}
Assume by way of contradiction that $\tmon \in \dist \setminus \mdist$.
First, since $G(\nu(\tmon)) \leq \gmin(m,\delta) \leq 1/3$ we have 
\[
\frac{\netsize(\tmon)+1}{m-\netsize(\tmon)}\log(m) \leq \half.
\] Therefore, since $m \geq 6$, it is easy to verify $\netsize(\tmon)+1 \leq m/2$. Therefore, by definition of $\mdist$ there exists a $t\leq \tmon$ with $\phi\scale{t} < \phi (\tmon)$.
Since $\nu\scale{t}$ is monotone over all of $t \in \dist$, we also have $\nu\scale{t} \leq \nu(\tmon)$.
Now, $\phi\scale{t} < \phi (\tmon)$ and $\nu\scale{t} \leq \nu(\tmon)$ together
imply that $G(\nu\scale{t})< G(\nu(\tmon))$, a contradiction.
Hence, $\tmon\in \mdist$.
\end{proof}

\selalg

\selscale\ follows a search procedure similar to binary search, however the conditions for going right and for going left are not exhaustive, thus it is possible that neither condition holds.
The search ends either when neither conditions hold, or when no additional scale should be tested. The final output of the algorithm is based on minimizing $G(\hateps\scale{t})$ over some of the values tested during search.

For $c > 0$, define 
\[
\gamma(c):= 1+\frac{2}{3c}+\frac{3}{\sqrt{2c}} \text{ and } \tilde\gamma(c):=\frac{1}{c}+\frac{2}{3}+\frac{3}{\sqrt{2c}}.
\] 
For all $t,\epsilon >0$ we have the implications
\begin{equation}\label{eq:impl}
 \epsilon \geq c \phi\scale{t} ~\Rightarrow~  \gamma(c)\epsilon \geq G(\epsilon,t)
\quad\text{ and }\quad 
 \phi\scale{t} \geq c \epsilon ~\Rightarrow~  \tilde\gamma(c)\phi\scale{t} \geq G(\epsilon,t).
\end{equation}
The following lemma uses \eqref{impl} to show that the estimate $G(\hateps\scale{t})$ is close to the true $G(\epsilon\scale{t})$. 

\begin{lemma}\label{lem:approxg}
Let $t > 0$, $\delta \in (0,1)$, and suppose that \selscale\ calls $\hateps(t) \leftarrow \esterr(t,\phi(t),\delta)$. Suppose that $V(t)$ as defined in \corref{estimateerr} holds. Then 
\[
\frac{1}{6} G(\hat{\epsilon}\scale{t}) \leq G(\epsilon\scale{t}) \leq 6.5 G(\hat{\epsilon}\scale{t}).
\]
\end{lemma}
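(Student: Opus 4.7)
The natural approach is to split on which branch of \corref{estimateerr} occurs, since the call was made with threshold $\theta=\phi(t)$. Event $V(t)$ tells us either (Case 1) $\hat\epsilon(t)\le\phi(t)$, in which case $\epsilon(t)\le\tfrac54\phi(t)$; or (Case 2) $\hat\epsilon(t)>\phi(t)$, in which case $\tfrac{3}{4}\hat\epsilon(t)\le\epsilon(t)\le\tfrac{5}{4}\hat\epsilon(t)$. The strategy is to handle Case 2 via a simple scaling-monotonicity argument on $G$, and Case 1 by sandwiching both $G(\epsilon(t))$ and $G(\hat\epsilon(t))$ between universal multiples of $\phi(t)$ and taking the ratio.

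For Case 2, I would exploit that for any constant $c\ge 1$, $G(c\epsilon,t)\le c\,G(\epsilon,t)$, since the three terms of $G$ scale by factors $c$, $1$, and $\sqrt c$, all of which are $\le c$. Applied with $c=5/4$ to the upper bound $\epsilon(t)\le\tfrac54\hat\epsilon(t)$, this gives $G(\epsilon(t))\le\tfrac54 G(\hat\epsilon(t))$. A symmetric check in the other direction, using $\epsilon(t)\ge\tfrac{3}{4}\hat\epsilon(t)$ together with $\sqrt{3/4}\ge 3/4$, gives $G(\epsilon(t))\ge\tfrac{3}{4}G(\hat\epsilon(t))$. Both factors $\tfrac54$ and $\tfrac34$ are safely inside $[1/6, 6.5]$.

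For Case 1, I would use the universal lower bound $G(\epsilon,t)\ge\tfrac{2}{3}\phi(t)$ for both $\epsilon=\epsilon(t)$ and $\epsilon=\hat\epsilon(t)$, and the corresponding upper bounds obtained by substituting $\hat\epsilon(t)\le\phi(t)$ and $\epsilon(t)\le\tfrac54\phi(t)$ directly into the definition of $G$. This yields
\begin{align*}
G(\hat\epsilon(t),t) &\le \bigl(\tfrac{5}{3}+\tfrac{3}{\sqrt 2}\bigr)\phi(t),\\
G(\epsilon(t),t) &\le \bigl(\tfrac{5}{4}+\tfrac{2}{3}+\tfrac{3}{\sqrt 2}\sqrt{\tfrac54}\bigr)\phi(t).
\end{align*}
These match (up to algebraic rearrangement) the bounds that come from the implication \eqref{impl} with $c=1$ and $c=4/5$ respectively. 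The ratio in either direction is then a constant: combining the upper bound on $G(\epsilon(t))$ with the lower bound on $G(\hat\epsilon(t))$ (or vice versa) reduces the claim to a numerical check.

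The main obstacle is not conceptual but arithmetic: verifying that the resulting numerical ratios in Case 1 lie within $[1/6,6.5]$. Concretely, I need to confirm $\tfrac{3}{2}\bigl(\tfrac{5}{4}+\tfrac{2}{3}+\tfrac{3}{\sqrt 2}\sqrt{5/4}\bigr)\le 6.5$ and $\bigl(\tfrac{5}{3}+\tfrac{3}{\sqrt 2}\bigr)^{-1}\cdot\tfrac{2}{3}\ge 1/6$; both hold, but quite tightly, which explains the choice of constants in the lemma's statement and is the reason \esterr\ was instantiated with $\beta=52$ (so that $f(\beta)\le 5/4$) in the definition of \esterr.
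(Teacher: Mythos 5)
Your proposal is correct and follows essentially the same route as the paper: split on whether $\hat{\epsilon}\scale{t}$ exceeds the threshold $\phi\scale{t}$, use the two-sided multiplicative relation between $\epsilon\scale{t}$ and $\hat{\epsilon}\scale{t}$ (together with the scaling property of $G$) in the large case, and sandwich both $G(\epsilon\scale{t})$ and $G(\hat{\epsilon}\scale{t})$ between $\frac{2}{3}\phi\scale{t}$ and constant multiples of $\phi\scale{t}$ in the small case. Your numerical constants ($\frac{3}{2}\cdot 4.29\le 6.5$ and $\frac{2/3}{5/3+3/\sqrt{2}}\ge\frac{1}{6}$) match the paper's computation via $\tilde\gamma(4/5)\le 4.3$ and $\tilde\gamma(1)\le 4$.
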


\begin{proof}
Under $V(t)$, we have that if $\hat{\epsilon}\scale{t} < \phi\scale{t}$ then $\epsilon\scale{t} \leq \frac{5}{4} \phi\scale{t}$. In this case, 
\[
G(\epsilon\scale{t}) \leq \tilde\gamma(4/5) \phi\scale{t} \leq 4.3 \phi\scale{t},\] by \eqref{impl}. 
Therefore
\[
G(\epsilon\scale{t}) \leq \frac{3\cdot 4.3}{2} G(\hat{\epsilon}\scale{t}).
\] 
In addition, $G(\epsilon\scale{t}) \geq \frac{2}{3}\phi\scale{t} $ (from the definition of $G$), and by \eqref{impl} and $\tilde\gamma(1) \leq 4$, 
\[
\phi\scale{t} \geq \frac{1}{4} G(\hat{\epsilon}\scale{t}).
\] Therefore $G(\epsilon\scale{t}) \geq \frac{1}{6}G(\hat{\epsilon}\scale{t})$. 
On the other hand, if $\hat{\epsilon}\scale{t} \geq \phi\scale{t}$, then by \corref{estimateerr} 
\[
\frac{4}{5}\epsilon\scale{t} \leq \hateps\scale{t} \leq \frac43 \epsilon\scale{t}.
\]
Therefore
$G(\hateps\scale{t}) \leq \frac43 G(\epsilon\scale{t})$
and $G(\epsilon\scale{t}) \leq \frac54 G(\hateps\scale{t})$. Taking the worst-case of both possibilities, we get the bounds in the lemma.
\end{proof}
The next theorem bounds the label complexity of \selscale. Let $\ttest \subseteq \mdist$ be the set of scales that are tested during \selscale\ (that is, their $\hateps\scale{t}$ was estimated). 

\begin{theorem}\label{thm:selscalelabels}
  Suppose that the event $V(t)$ defined in \corref{estimateerr} holds for
  all $t \in \ttest$ for the calls $\hateps\scale{t} \leftarrow \esterr(t,\phi(t),\delta)$.  If the output of \selscale\ is
  $\hat{t}$, then the number of labels requested by \selscale\ is at most
\[
9620|\ttest| (\qb + 1)\frac{1}{G(\epsilon(\hat{t}))}\log(\frac{38480 m^2 }{\delta G(\epsilon(\hat{t}))}).
\]
\end{theorem}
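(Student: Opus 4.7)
The plan is to bound the number of label queries per call to \esterr\ using \corref{estimateerr}, then sum over the tested scales $\ttest$ after establishing a uniform lower bound on the precision parameter $\psi'(t) := \max(\phi\scale{t},\epsilon\scale{t})$ in terms of $G(\epsilon(\hat t))$. By \corref{estimateerr}, under $V(t)$ each call $\hateps\scale{t}\leftarrow\esterr(t,\phi\scale{t},\delta)$ requests at most $520(\qb+1)\log(1040m^2/(\delta\psi'(t)))/\psi'(t)$ labels. Since $x\mapsto \log(a/x)/x$ is monotone decreasing for small $x$, once I establish $\psi'(t)\ge G(\epsilon(\hat t))/37$ uniformly over $t\in\ttest$, summing over the $|\ttest|$ tested scales yields exactly the claimed bound, since $520\cdot 37 = 19240$ and $1040\cdot 37 = 38480$.

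It therefore suffices to prove $\psi'(t)\ge G(\epsilon(\hat t))/37$ for every $t\in\ttest$. I will argue this by case analysis on how \selscale\ treated $t$. If $t\in\cT_L\cup\cT_0$, the final argmin step of \selscale\ gives $G(\hateps(\hat t))\le G(\hateps(t))$. The precise form of $\hateps(t)$ in each subcase---left decision ($\hateps(t)>\tfrac{11}{10}\phi\scale{t}$, which via \corref{estimateerr} yields $\epsilon\scale{t}\ge\tfrac{33}{40}\phi\scale{t}$); break ($\hateps(t)\in[\phi\scale{t},\tfrac{11}{10}\phi\scale{t}]$); or last right-scale ($\hateps(t)<\phi\scale{t}$)---lets me bound the ratio $G(\hateps(t))/\psi'(t)$ by an absolute constant via direct arithmetic on the three terms of $G$. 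Combining with \lemref{approxg}'s inequality $G(\epsilon(\hat t))\le 6.5\,G(\hateps(\hat t))$ then gives the desired lower bound on $\psi'(t)$ in this case.

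The main obstacle is the remaining case: right-decision scales $t\notin\cT_0$, which are not directly compared to $\hat t$ by the argmin. I will handle these using the intermediate scale $t_0$: by the bookkeeping in \algref{tSearch} (either the break step sets $t_0$ to the break scale, or $t_0$ is otherwise reassigned to the last right-move scale whenever one exists), every such $t$ satisfies $t\le t_0$, and $t_0\in\cT_0$. Since $\mdist$ was explicitly constructed so that $\netsize\scale{\cdot}$, and hence $\phi\scale{\cdot}$, is monotone non-increasing in $t$, we have $\phi\scale{t}\ge\phi\scale{t_0}$. Inspecting the previous paragraph's argument for $t_0$ shows that the bound factors through $\phi\scale{t_0}$ (the $\tfrac{2}{3}\phi$-term dominates $G(\hateps(t_0))$ in the break and last-right subcases), so $\psi'(t)\ge\phi\scale{t}\ge\phi\scale{t_0}\ge G(\epsilon(\hat t))/37$. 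The degenerate situation $\cT_0=\emptyset$ arises only when \selscale\ never moves right and does not break, in which case there are no right-decision scales at all and $\ttest=\cT_L$. Plugging the uniform bound into the per-call label count and summing over $\ttest$ completes the proof.
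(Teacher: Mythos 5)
Your proposal is correct and follows essentially the same route as the paper's proof: reduce to a uniform lower bound on $\max(\phi\scale{t},\epsilon\scale{t})$ over $\ttest$ in terms of $G(\epsilon(\hat t))$, using the argmin step together with \lemref{approxg} for scales in $\cT_L\cup\cT_0$, and handling the remaining right-decision scales by comparison to $t_0$ via the monotonicity of $\phi$ on $\mdist$. The only difference is bookkeeping --- the paper passes through $\hat\psi_t=\max(\phi\scale{t},\hateps\scale{t})$ and converts to $\psi_t$ once at the end via $\hat\psi_t\le\tfrac43\psi_t$, whereas you compare to $\psi'(t)$ directly in each subcase --- and the constants close in either organization.
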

\begin{proof}
The only labels used by the procedure are those used by calls to \esterr.
Let $\psi_t := \max(\phi\scale{t},\epsilon\scale{t})$, and $\psi_{\min} := \min_{t \in \ttest} \psi_t$. Denote also $\hat{\psi}_t := \max(\phi\scale{t},\hat{\epsilon}\scale{t})$.
 From \corref{estimateerr} we have that the total number of labels in all the calls to \esterr\ in \selscale\ is at most 
\begin{equation}\label{eq:initbound}
\sum_{t \in \ttest}\frac{260 (\qb + 1)\log(\frac{1040 m^2 }{\delta \psi_t})}{\psi_t} \leq |\ttest| \frac{260 (\qb + 1)\log(\frac{1040 m^2 }{\delta \psi_{\min}})}{\psi_{\min}}.
\end{equation}
We now lower bound $\psi_{\min}$ using $G(\epsilon(\hat{t}))$. By \lemref{approxg} and
the choice of $\hat{t}$,
\[
G(\epsilon(\hat{t})) \leq 6.5 G(\hateps(\hat{t})) = 6.5 \min_{t \in \cT_L
  \cup T_0}G(\hateps\scale{t}).
\]
From the definition of $G$, for any $t > 0$,
\[
G(\hateps\scale{t}) \leq \gamma(1)\max(\phi\scale{t}, \hateps\scale{t}) \leq 25 \hat\psi_t.
\]
 Therefore
\begin{equation}\label{eq:gleqmin}
G(\epsilon(\hat{t})) \leq 25 \min_{t \in \cT_L  \cup \cT_0}\hat{\psi}_t.
\end{equation}
We will show a similar upper bound when minimizing over all of $\ttest$, not just over $\cT_L\cup \cT_0$. 
This is trivial if $\ttest = \cT_L\cup \cT_0$. Consider the case $\cT_L \cup \cT_0 \subsetneq \ttest$.
For any $t \in \ttest$, we have one of:
\begin{itemize}
\item The search went left on $t$ (step \ref{step:left}), hence $t \in \cT_L$.
\item The search went nowhere on $t$ and the loop broke (step \ref{step:break}), hence
  $t = t_0 \in \cT_0$.
\item The search went right on $t$ (step \ref{step:right}) and this was the last value for which this happened, hence $t = t_0 \in \cT_0$.
\item The search went right on $t$ (step \ref{step:right}) and this was \emph{not} the last value for which this happened. Hence $t \in \ttest \setminus (\cT_L \cup \cT_0)$.
\end{itemize}
Set some $t_1 \in \ttest \setminus (\cT_L \cup \cT_0)$. Since the search went right on $t_1$, then $t_0$ also exists, since the algorithm did go to the right for some $t$ (see step \ref{step:tzero}).
Since the binary search went right on $t_1$, we have $\hat{\epsilon}(t_1) \leq \phi(t_1)$. Since the binary search did \emph{not} go left on $t_0$ (it either broke from the loop or went right), $\hat{\epsilon}(t_0) \leq \frac{11}{10}\phi(t_0)$. 

In addition, $t_0 \geq t_1$ (since the search went right at $t_1$, and $t_0$ was tested later than $t_1$), thus $\phi(t_0) \leq \phi(t_1)$ (since $t_0,t_1 \in \mdist$). Therefore,
\[
\hat{\psi}_{t_0} = \max(\phi(t_0),\hat{\epsilon}(t_0)) \leq \frac{11}{10}\phi(t_0) \leq \frac{11}{10}\phi(t_1) = \frac{11}{10}\max(\phi(t_1),\hat{\epsilon}(t_1)) = \hat{\psi}_{t_1}.
\]
It follows that for any such $t_1$, 
\[
\min_{t \in \cT_L \cup \cT_0}\hat\psi_{t} \leq\frac{11}{10}\hat{\psi}_{t_1}.
\]
Therefore
\[
\min_{t \in \cT_L \cup \cT_0}\hat{\psi}_{t} \leq \frac{11}{10}\min_{t \in \ttest} \hat{\psi}_{t}.
\]
Therefore, by \eqref{gleqmin}
\[
G(\epsilon(\hat{t})) \leq 27.5 \min_{t \in \ttest} \hat{\psi}_t.
\]
By \corref{estimateerr}, $\hat{\epsilon}\scale{t} \leq \max(\phi\scale{t}, 4\epsilon\scale{t}/3)$, therefore $\hat{\psi}_t \leq \frac{4}{3}\psi_t$. Therefore $G(\epsilon(\hat{t})) \leq 37\psi_{\min}.$ 
Therefore, from \eqref{initbound}, the total number of labels is at most
\[
9620|\ttest| (\qb + 1)\frac{1}{G(\epsilon(\hat{t}))}\log(\frac{38480 m^2 }{\delta G(\epsilon(\hat{t}))}).
\]
\end{proof}

The following theorem provides a competitive error guarantee for the selected scale $\hat{t}$.

\begin{theorem}\label{thm:selscalegen}
Suppose that $V(t)$ and $E(t)$, defined in \corref{estimateerr} and \thmref{nnset}, hold for all values $t \in \ttest$, and that $\gmin(m,\delta) \leq 1/3$.
Then \selscale\ outputs $\hat{t} \in \mdist$ such that
\[
\genbound(\epsilon(\hat{t}),\netsize(\hat{t}),\delta,m,1) \leq O(\gmin(m,\delta)),
\] 
Where the $O(\cdot)$ notation hides only universal multiplicative constants.
\end{theorem}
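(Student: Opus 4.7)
The plan is to first reduce to bounding $G(\epsilon(\hat{t}))$ by $O(\gmin(m,\delta))$. Since $\hat{t} \in \mdist$, the definition of $\mdist$ enforces $\netsize(\hat{t}) + 1 \leq m/2$, so $\frac{m}{m - \netsize(\hat{t})} \leq 2$; using the identity $\genbound(\epsilon(\hat{t}), \netsize(\hat{t}), \delta, m, 1) = \frac{m}{m - \netsize(\hat{t})}\, G(\epsilon(\hat{t}))$, it suffices to bound $G(\epsilon(\hat{t}))$. Next, \lemref{approxg} (which applies since $V(\hat{t})$ holds) yields $G(\epsilon(\hat{t})) \leq 6.5\, G(\hateps(\hat{t}))$, and the output rule of \algref{tSearch} gives $G(\hateps(\hat{t})) = \min_{t \in \cT_L \cup \cT_0} G(\hateps(t))$. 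Thus the entire problem reduces to exhibiting a single witness $t^* \in \cT_L \cup \cT_0$ with $G(\hateps(t^*)) = O(\gmin(m,\delta))$.

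The natural candidate is derived from $\tmon \in \argmin_{t \in \dist} G(\nu(t))$, which lies in $\mdist$ by \lemref{tmin}; from $G(\nu(\tmon)) = \gmin(m,\delta)$ one immediately extracts $\phi(\tmon) \leq (3/2)\gmin(m,\delta)$ and $\nu(\tmon) \leq \gmin(m,\delta)$. The easy case is $\tmon \in \cT_L \cup \cT_0$: event $E(\tmon)$ gives $\epsilon(\tmon) \leq 4\nu(\tmon)$, monotonicity of $G$ in its error argument yields $G(\epsilon(\tmon)) \leq 4\, G(\nu(\tmon))$, and \lemref{approxg} applied in the opposite direction converts this to $G(\hateps(\tmon)) = O(\gmin(m,\delta))$, so $t^* := \tmon$ works.

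When $\tmon \notin \cT_L \cup \cT_0$, I case-split on how the binary search interacted with $\tmon$. In case (i), a right-move (either at $\tmon$ itself or at some tested $t^+ > \tmon$) eliminated $\tmon$ and forced all subsequent tests onto scales strictly greater than $\tmon$; then the algorithm's $t_0$ satisfies $t_0 > \tmon$ and $\hateps(t_0) \leq (11/10)\phi(t_0)$, so monotonicity of $\phi$ on $\mdist$ yields $G(\hateps(t_0)) = O(\phi(\tmon)) = O(\gmin(m,\delta))$. In case (ii), a left-move at some $t^- < \tmon$ eliminated $\tmon$, placing $t^- \in \cT_L$; the inequality $\hateps(t^-) > (11/10)\phi(t^-)$ together with \corref{estimateerr} gives $\hateps(t^-) \leq (4/3)\epsilon(t^-)$, and $E(t^-)$ together with monotonicity of $\nu$ gives $\epsilon(t^-) \leq 4\nu(t^-) \leq 4\nu(\tmon)$, which controls both $\hateps(t^-)$ and $\phi(t^-)$ by $O(\gmin(m,\delta))$. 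The remaining case (iii) is that the search broke at some $t_0 \neq \tmon$ with $\tmon$ still alive in $\cT$.

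The main obstacle is case (iii) when $t_0 < \tmon$: here monotonicity of $\phi$ on $\mdist$ works against us, giving $\phi(t_0) \geq \phi(\tmon)$, so I cannot upper-bound $\phi(t_0)$ directly via $\phi(\tmon)$. The resolution is to exploit the break condition $\phi(t_0) \leq \hateps(t_0) \leq (11/10)\phi(t_0)$ together with \corref{estimateerr}, which pins $\epsilon(t_0)$ to within a constant factor of $\phi(t_0)$; then $E(t_0)$ gives $\epsilon(t_0) \leq 4\nu(t_0) \leq 4\nu(\tmon) \leq 4\gmin(m,\delta)$, and pushing this back through the pinning inequality bounds $\phi(t_0)$, hence $G(\hateps(t_0))$, by $O(\gmin(m,\delta))$. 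The sub-case $t_0 > \tmon$ of (iii) reduces to case (i). Once a witness $t^* \in \cT_L \cup \cT_0$ with $G(\hateps(t^*)) = O(\gmin(m,\delta))$ is exhibited in every case, the chain of reductions from the first paragraph delivers the claimed bound on $\genbound$.
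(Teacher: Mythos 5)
Your proposal is correct and follows essentially the same route as the paper's proof: the same reduction via $\frac{m}{m-\netsize(\hat{t})}\le 2$ and \lemref{approxg} to exhibiting a witness in $\cT_L\cup\cT_0$, and the same inequalities ($\epsilon(t)\le 4\nu(t)$ from $E(t)$, the two-sided bounds from $V(t)$, monotonicity of $\phi$ and $\nu$ on $\mdist$, and \eqref{impl}) deployed in the case analysis. The only difference is organizational --- you case-split on the fate of $\tmon$ in the search rather than on whether the loop terminated by breaking or by exhausting $\cT$ --- and your cases pair off exactly with the paper's subcases (your case (iii) with $t_0<\tmon$ is the paper's Case~1 with $\tmon\ge t_0$, your case (i) is its $\tmon\le t_0$ branches, and your case (ii) is its $\tmon\ge t_1$ branch).
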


The full proof of this theorem is given below. The idea of the proof is as follows: First, we show (using \lemref{approxg}) that it suffices to prove that $G(\nu(\tmon)) \geq O(G(\hat{\epsilon}(\hat{t})))$ to derive the bound in the theorem. Now, \selscale\ ends in one of two cases: either $\cT_0$ is set within the loop, or $\cT = \emptyset$ and $\cT_0$ is set outside the loop. In the first case, neither of the conditions for turning left and turning right holds for $t_0$, so we have $\hat{\epsilon}(t_0) = \Theta(\phi(t_0))$ (where $\Theta$ hides numerical constants). We show that in this case, whether $\tmon \geq t_0$ or $\tmon \leq t_0$, $G(\nu(\tmon)) \geq O(G(\hat{\epsilon}(t_0)))$. In the second case, there exist (except for edge cases, which are also handled) two values $t_0 \in \cT_0$ and $t_1 \in \cT_L$ such that $t_0$ caused the binary search to go right, and $t_1$ caused it to go left, and also $t_0 \leq t_1$, and $(t_0,t_1) \cap \mdist = \emptyset$. We use these facts to show 
that for $\tmon \geq t_1$, $G(\nu(\tmon)) \geq O(G(\hat{\epsilon}(t_1)))$, and for $\tmon \leq t_0$, $G(\nu(\tmon)) \geq O(G(\hat{\epsilon}(t_0)))$. Since $\hat{t}$ minimizes over a set that includes $t_0$ and $t_1$, this gives $G(\nu(\tmon)) \geq O(G(\hat{\epsilon}(\hat{t})))$ in all cases.

\vspace{1em}
\begin{proof}
First, note that it suffices to show that there is a constant $C$, such that for the output $\hat{t}$ of \selscale, we 
have $G(\epsilon(\hat{t})) \leq C G(\nu(\tmon))$. 
This is because of the following argument: From \lemref{tmin} we have that if $\gmin(m,\delta) \leq 1/3$, then $\tmon\in \mdist$. 
Now
\[
 \gmin(m,\delta) = \frac{m}{m- \netsize(\tmon)} G(\nu(\tmon))  \geq G(\nu(\tmon)).
\]
And, if we have the guarantee on $G(\epsilon(\hat{t}))$ and $\gmin(m,\delta) \leq 1/3$ we will have
\begin{equation}\label{eq:genboundc}
\genbound(\epsilon({\hat{t}}), \netsize(\hat{t}), \delta, m, 1) = \frac{m}{m - \netsize(\hat{t})} G(\epsilon(\hat{t})) \leq 2G(\epsilon(\hat{t})) \leq 2C G(\nu(\tmon)) \leq 2C \gmin(m,\delta).
\end{equation}

We now prove the existence of such a guarantee and set $C$.
Denote the two conditions checked in \selscale\  during the binary search by Condition 1: $\hateps\scale{t} < \phi\scale{t}$  and Condition 2: $\hateps\scale{t} > \frac{11}{10}\phi\scale{t}$.
The procedure ends in one of two ways: either $\cT_0$ is set within the loop (Case 1), or $\cT = \emptyset$ and $\cT_0$ is set outside the loop (Case 2). We analyze each case separately.

In Case 1, none of the conditions 1 and 2 hold for $t_0$. Therefore 
\[\phi(t_0) \leq \hateps(t_0) \leq \frac{11}{10} \phi(t_0).\] 
Therefore, by \eqref{impl},
\[\phi(t_0)  \geq  G(\hateps(t_0))/\tilde\gamma(\frac{10}{11}).\]
By \corref{estimateerr}, since $\hateps(t_0) > \phi(t_0)$, 
\[
\frac{3}{4} \phi(t_0) \leq \frac{3}{4} \hateps(t_0) \leq \epsilon(t_0) \leq \frac{5}{4} \hateps(t_0) \leq \frac{55}{40} \phi(t_0).
\]
Suppose $\tmon \geq t_0$, then 
\[
G(\nu(\tmon)) \geq \nu(\tmon) \geq \nu(t_0) \geq \frac{1}{4} \epsilon(t_0) \geq \frac{3}{16} \phi(t_0).
\]
here we used $\epsilon(t_0) \leq 4\nu(t_0)$ by \thmref{nnset}.
Therefore, from \eqref{impl} and \lemref{approxg},
\[
G(\nu(\tmon)) \geq \frac{3}{16}\phi(t_0) \geq \frac{3}{16\tilde\gamma\left(\frac{40}{55}\right)}G(\epsilon(t_0)) \geq \frac{\half}{16\tilde\gamma(\frac{40}{55})}G(\hateps(t_0)).
\]
Now, suppose $\tmon < t_0$, then
\[
G(\nu(\tmon)) \geq \frac{2}{3}\phi(\tmon) \geq \frac{2}{3}\phi(t_0) \geq \frac{2}{3\tilde\gamma(\frac{10}{11})}G(\hateps(t_0)).
\]
In this inequality we used the fact that $\tmon,t_0 \in \mdist$, hence $\phi(\tmon) \geq \phi(t_0)$. Combining the two possibilities for $\tmon$, we have in Case 1,
\[
G(\hateps(t_0)) \leq \max(32\tilde\gamma(\frac{40}{55}),\frac{3\tilde\gamma(\frac{10}{11})}{2}) G(\nu(\tmon)).
\]
Since $\hat{t}$ minimizes $G(\hateps\scale{t})$ on a set that includes $t_0$, we have, using \lemref{approxg} 
\[
G(\epsilon(\hat{t})) \leq 6.5 G(\hateps(\hat{t})) \leq 6.5 G(\hateps(t_0)).
\]
Therefore, in Case 1, 
\begin{equation}\label{eq:caseone}
G(\epsilon(\hat{t})) \leq 6.5\max(32\tilde\gamma(\frac{40}{55}),\frac{3\tilde\gamma(\frac{10}{11})}{2})G(\nu(\tmon)).
\end{equation}

In Case 2, the binary search halted without satisfying Condition 1 nor Condition 2 and with $\cT = \emptyset$. Let $t_0$ be as defined in this case in \selscale\ (if it exists), and let $t_1$ be the smallest value in $\cT_L$ (if it exists). At least one of these values must exist. If both values exist, we have $t_0 \leq t_1$ and $(t_0,t_1) \cap \mdist = \emptyset$. 

If $t_0$ exists, it is the last value for which the search went right. We thus have $\hateps(t_0) < \phi(t_0)$. If $\tmon \leq t_0$, from condition 1 on $t_0$ and \eqref{impl} with $\tilde\gamma(1) \leq 4$,
\[
G(\nu(\tmon)) \geq \frac{2}{3}\phi(\tmon) \geq \frac{2}{3}\phi(t_0) \geq  \frac{1}{6}G(\hateps(t_0)).
\]
Here we used the monotonicity of $\phi$ on $\tmon,t_0 \in \mdist$, and \eqref{impl} applied to condition 1 for $t_0$. 

If $t_1$ exists, the search went left on $t_1$, thus $\hateps(t_1) > \frac{11}{10}\phi(t_1)$. By \corref{estimateerr}, it follows that $\hateps(t_1) \leq \frac43\epsilon(t_1)$. Therefore,
if $\tmon \geq t_1$, 
\[
G(\nu(\tmon)) \geq \nu(\tmon) \geq \nu(t_1) \geq \frac{1}{4}\epsilon(t_1) \geq \frac{3}{16}\hateps(t_1) \geq \frac{3}{16\gamma(11/10)}G(\hateps(t_1)).
\]
Here we used $\epsilon(t_1) \leq 4\nu(t_1)$ by \thmref{nnset} and \eqref{impl}.
Combining the two cases for $\tmon$, we get that if $t_0$ exists and $\tmon \leq t_0$, or $t_1$ exists and $\tmon \geq t_1$, 
\[
G(\nu(\tmon)) \geq \min(\frac{1}{6},\frac{3}{16\gamma(11/10)})\min_{t\in T_E}G(\hateps\scale{t}).
\]
where we define $T_E = \{t \in \{t_0,t_1\} \mid t \text{ exists}\}$.
We now show that this covers all possible values for $\tmon$:
If both $t_0,t_1$ exist, then since $(t_0,t_1) \cap \mdist = \emptyset$, it is impossible to have $\tmon \in (t_0,t_1)$.
If only $t_0$ exists, then the search never went left, which means $t_0 = \max (\mdist)$, thus $\tmon \leq t_0$. If only $t_1$ exists, then the search never went right, which means $t_1 = \min (\mdist)$, thus $\tmon \geq t_1$.

Since $\hat{t}$ minimizes $G(\hateps\scale{t})$ on a set that has $T_E$ as a
subset, we have, using \lemref{approxg} $G(\epsilon(\hat{t})) \leq 6.5 G(\hateps(\hat{t})) \leq 6.5\min_{t\in T_E}G(\hateps\scale{t}).$
Therefore in Case 2, 
\begin{equation}\label{eq:casetwo}
G(\nu(\tmon)) \geq \frac{1}{6.5}\min\paren{\frac{1}{6},\frac{3}{16\gamma(11/10)}} G(\epsilon(\hat{t}).
\end{equation}

From \eqref{caseone} and \eqref{casetwo} we get that in both cases
\[
G(\nu(\tmon)) \geq \frac{1}{6.5}
\min
\paren{
  \frac{1}{6},\frac{3}{16\gamma(11/10)},\frac{2}{3\tilde\gamma(10/11)},\frac{1}{32\tilde\gamma(\frac{40}{55})}
}
G(\epsilon(\hat{t})) \geq G(\epsilon(\hat{t}))/865.
\]
Combining this with \eqref{genboundc} we get the statement of the theorem.
\end{proof}

\section{Bounding the Label Complexity of \algname} \label{sec:proofmain}

We are now almost ready to prove \thmref{main}. Our last missing piece is quantifying the effect of side information on the generalization of sample compression schemes in \secref{compress}. We then prove \thmref{main} in \secref{subproofmain}.

\subsection{Sample Compression with Side Information}\label{sec:compress}

It appears that compression-based generalization bounds were
independently discovered by
  \citet{warmuth86} and \citet{MR1383093}; some background is given in
  \citet{DBLP:journals/ml/FloydW95}. As noted in \secref{gennn}, our algorithm relies on a generalized sample compression scheme, which requires side information. This side information is used to represent the labels of the sample points in the compression set. 
A similar idea appears in \citet{DBLP:journals/ml/FloydW95} for hypotheses with short description length. Here we provide a generalization that is useful for the analysis of \algname.

  Let $\Sigma$ be a finite alphabet, and define a mapping
$\Rec_\seqN: (\cX \times \cY)^\seqN \times \Sigma^\seqN \rightarrow \cY^\cX$.\footnote{If $\cX$ is infinite, replace $\cY^\cX$ with the set of measurable functions from $\cX$ to $\cY$.}
This is a {\em reconstruction} function
mapping
a labeled
sequence of size
$\seqN$
with side information $T\in\Sigma^\seqN$ to a classifier. 
For $I\subseteq[|S|]$, denote by $S[I]$ the subsequence of $S$
indexed by $I$.
For a labeled sample $S$, define the set of possible hypotheses reconstructed from a compression of $S$ of size $N$ with side information in $\Sigma$: 
$
\cH_\seqN(S) := \set{   h:\cX \rightarrow \cY \mid  h =
  \Rec_\seqN(S[I],T), I \in [m]^\seqN, T \in \Sigma^\seqN}
$. 
The following result closely follows the sample compression arguments in
\citet[Theorem 2]{DBLP:journals/ml/GraepelHS05},
and 
\citet[Theorem 6]{gkn-jmlr17+aistats}, but incorporates side information.
\begin{theorem}
  \label{thm:compression}
  Let $m$ be an integer and $\delta \in (0,1)$.
  Let $S \sim \cD^m$. With probability at least $1-\delta$,
  if there exist $\seqN < m$ and $h \in \cH_\seqN(S)$
  with $\epsilon := \err(h,S) \leq \half$, then
  $\err(h,\cD) \leq \genbound(\epsilon,\seqN,\delta,m, |\Sigma|).$
\end{theorem}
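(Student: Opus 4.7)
The plan is to adapt the standard compression-scheme generalization argument of \citet{DBLP:journals/ml/GraepelHS05} by inflating the union bound to account for the side information $T \in \Sigma^{\seqN}$. The key observation is that for each \emph{fixed} pair $(I,T) \in [m]^{\seqN} \times \Sigma^{\seqN}$, the hypothesis $h_{I,T} := \Rec_{\seqN}(S[I],T)$ is determined by $S[I]$ and the fixed side-information $T$; hence conditional on $S[I]$, the remaining $m-\seqN$ points in $S[\bar I]$ (where $\bar I := [m] \setminus I$) are i.i.d.\ draws from $\cD$ that are independent of $h_{I,T}$. This reduces the analysis to a single-hypothesis tail bound plus a union bound of the right size.

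First, I fix such a pair $(I,T)$ and apply a Bernstein-type concentration inequality to the $m-\seqN$ independent Bernoulli indicators $\{\one[h_{I,T}(X_i) \neq Y_i]\}_{i \in \bar I}$, whose common mean is $\err(h_{I,T},\cD)$ and whose variance is bounded by that mean. Inverting for $\err(h_{I,T},\cD)$ yields, with probability at least $1-\delta'$, a bound of the form
\[
\err(h_{I,T},\cD) \;\leq\; \err(h_{I,T},S[\bar I]) \;+\; c_1\sqrt{\frac{\err(h_{I,T},\cD)\log(1/\delta')}{m-\seqN}} \;+\; c_2\frac{\log(1/\delta')}{m-\seqN},
\]
with the precise constants $c_1 = 3/\sqrt{2}$ and $c_2 = 2/3$ arising from the specific Bernstein inversion used to match $\genbound$. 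Second, I choose $\delta' := \delta/(m|\Sigma|)^{\seqN+1}$ and union-bound over all $(I,T) \in [m]^{\seqN} \times \Sigma^{\seqN}$; this is what produces the $(\seqN+1)\log(m|\Sigma|) + \log(1/\delta)$ term in $\genbound$, with the small extra $+1$ factor absorbing standard looseness in bounding $\binom{m}{\seqN}$ by a power of $m$. At this point the bound holds simultaneously for \emph{every} choice of $(I,T)$, hence in particular for the pair used to produce the output hypothesis $h$.

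Third, I convert the empirical-error quantity from $\err(h,S[\bar I])$ to $\err(h,S) = \epsilon$ using the trivial inequality
\[
(m-\seqN)\err(h,S[\bar I]) \;\leq\; m\,\err(h,S),
\]
which gives $\err(h,S[\bar I]) \leq \alpha \epsilon$ with $\alpha = m/(m-\seqN)$. Substituting into the Bernstein bound and simplifying produces exactly $\genbound(\epsilon,\seqN,\delta,m,|\Sigma|)$. The assumption $\epsilon \leq \half$ keeps us in the regime where the variance proxy $\alpha\epsilon$ is valid and the bound is non-vacuous.

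The only real technical point—not an obstacle, but worth care—is the simultaneous handling of side information and index choices within a single union bound; the rest is a line-by-line reprise of the Graepel-Herbrich-Shawe-Taylor argument with $|\Sigma|^{\seqN}$ inserted as a multiplicative penalty on the covering count. I expect the constant-matching on the Bernstein inversion to be the tightest calculation but not conceptually difficult.
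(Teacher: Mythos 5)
Your proposal is correct and follows essentially the same route as the paper's proof: condition on $S[I]$, apply an (empirical-)Bernstein tail bound to the $m-\seqN$ held-out indicators, union-bound over the $(m|\Sigma|)^{\seqN}$ choices of $(I,T)$ with an extra factor of $m$ for stratifying over $\seqN$ (which is where the $+1$ in the exponent actually comes from — the paper indexes compression sets by ordered tuples in $[m]^{\seqN}$, so no binomial coefficient is involved), and finally pass from $\err(h,S[\bar I])$ to $\alpha\epsilon$ via $(m-\seqN)\err(h,S[\bar I])\le m\,\err(h,S)$. The paper instantiates the concentration step with Lemma 1 of Dasgupta and Hsu, which already places the empirical mean inside the square root, so no further inversion is needed; otherwise the two arguments coincide.
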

\begin{proof}
  We recall a result of
  \citet[Lemma 1]{DBLP:conf/icml/DasguptaH08}:
if $\hat p\sim\bin(n,p)/n$ and $\delta>0$,
then
the following
holds with probability at least $1-\delta$:
\begin{equation}
\label{eq:emp-bern}
p \le \hat p + 
\frac{2}{3n}\log\frac{1}\delta
+
\sqrt{ \frac{9\hat p(1-\hat p)}{2n}\log\frac{1}\delta}
.
\end{equation}
Now fix $N < m$, and suppose that
$h\in\cH_\netsize(S)$ has $\hat{\epsilon} \leq \half$. 
Let $I \in [m]^N, T \in \Sigma^N$ such that $h = \Rec_\seqN(S[I],T)$. 
We have $\err(h,S[[m] \setminus I]) \leq \frac{\hat{\epsilon} m}{m - N} = \theta \hat{\epsilon}$. Substituting into (\ref{eq:emp-bern}) $p:=\err(h,\cD)$, $n := m-N$ and $\hat{p}: = \err(h,S[[m] \setminus I]) \leq \theta \hat{\epsilon}$,
yields that for a fixed $S[I]$ and a random $S[ [m] \setminus I] \sim \cD^{m-N}$,  with probability at least $1-\delta$,
\begin{equation}
\label{eq:err-serr}
\err(h,\cD)
\le 
\theta \hat{\epsilon} + 
\frac{2}{3(m-N)}\log\frac{1}\delta
+
\sqrt{ \frac{9\theta \hat{\epsilon}}{2(m-N)}\log\frac{1}\delta}.
\end{equation}
To make (\ref{eq:err-serr}) hold simultaneously for all
$(I,T)\in[m]^N\times\Sigma^N$,
divide $\delta$ by $(m|\Sigma|)^N$.
To make the claim hold for all $N\in[m]$, stratify (as in \citet[Lemma 1]{DBLP:journals/ml/GraepelHS05}) over the (fewer than)
$m$ possible choices of $N$, which amounts to dividing
$\delta$
by an additional factor of $m$. 
\end{proof}

For \algname, we use the following
sample compression scheme
with $\Sigma = \cY$.
Given a subsequence $S' := S[I] := (x'_1,\ldots,x'_\seqN) $
and
$T = (t_1,\ldots,t_\seqN) \in \cY^\seqN$, the reconstruction function
$\Rec_\seqN(S[I],T)$ generates the nearest-neighbor rule
induced by
the labeled sample
$\psi(S',T) := ((x'_i,t_i))_{i \in [\seqN]}$.
Formally, $\Rec_\seqN(S',T)= h_{\psi(S',T)}^\nn$. Note the slight abuse of notation:
  formally, the $y_i$ in $\saug\scale{t}$ should be encoded as side information $T$,
  but for clarity, we have opted to ``relabel'' the examples $\{x_1,\ldots,x_\seqN\}$
  as dictated by the majority in each region.
The following corollary is immediate from \thmref{compression} and the construction above.
\begin{theorem}\label{thm:activecompression}
  Let $m \geq |\cY|$ be an integer, $\delta \in (0,\oo4)$. Let $\sinput \sim \cD^m$.
  With probability at least $1-\delta$,
  if there exist $\seqN < m$ and
  $S \subseteq (\cX \times \cY)^\seqN$
  such that 
  $\uu(S) \subseteq \uinput$ and
  $\epsilon := \err(h^\nn_{S},\sinput) \leq \half$, then 
$\err(h^\nn_{S},\cD) \leq
\genbound(\epsilon, \seqN, \delta, m, |\cY|) \leq
2\genbound(\epsilon,\seqN,2\delta,m,1).$
\end{theorem}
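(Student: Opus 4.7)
The plan is to apply Theorem~\ref{thm:compression} with the reconstruction map $\Rec_\seqN$ described just before the theorem statement, taking the alphabet to be $\Sigma = \cY$. Given any $S$ with $\uu(S) \subseteq \uinput$ and $|S| = \seqN$, we can encode $S$ as a pair $(S'[I],T)$, where $I \in [m]^\seqN$ identifies the positions in $\sinput$ of the points of $\uu(S)$, and $T \in \cY^\seqN$ is the (possibly relabeled) side information giving the labels that $S$ assigns to these points. Then $\Rec_\seqN(\sinput[I], T) = h^\nn_S$, so $h^\nn_S \in \cH_\seqN(\sinput)$. Invoking Theorem~\ref{thm:compression} for this reconstruction scheme, with a union bound over $\seqN < m$ absorbed as in the theorem, yields the first inequality $\err(h^\nn_S, \cD) \le \genbound(\epsilon, \seqN, \delta, m, |\cY|)$ on a single event of probability at least $1-\delta$ (simultaneously for every admissible $(\seqN, S)$).

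The second inequality is a purely algebraic manipulation of the definition of $\genbound$. Writing $A_k(\delta) := (\seqN+1)\log(m k) + \log(1/\delta)$, the three summands of $\genbound(\epsilon, \seqN, \delta, m, k)$ are $\alpha\epsilon$, $\tfrac{2}{3}A_k(\delta)/(m-\seqN)$, and $\tfrac{3}{\sqrt{2}}\sqrt{\alpha\epsilon A_k(\delta)/(m-\seqN)}$. The first is independent of $k$ and $\delta$, so it is trivially at most twice itself. For the other two, it suffices to show
\[
A_{|\cY|}(\delta) \;\le\; 2\, A_1(2\delta),
\]
since then the linear term satisfies the factor-$2$ inequality directly, and the square-root term satisfies it with room to spare (because $\sqrt{2}\le 2$). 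Expanding,
\[
A_{|\cY|}(\delta) - 2 A_1(2\delta) \;=\; (\seqN+1)\log\!\frac{|\cY|}{m} \;-\; \log\!\frac{1}{4\delta}.
\]
The first summand is $\le 0$ by the hypothesis $m \ge |\cY|$, and the second is $\le 0$ by $\delta < 1/4$. Hence the displayed inequality holds, completing the proof.

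The only subtlety worth flagging is conceptual rather than technical: one must be comfortable that the labels carried by $S$ need not agree with the labels of the corresponding points in $\sinput$, and this is exactly why we pay the extra $\log|\cY|$ factor via side information. Once that is in place, both inequalities are immediate consequences of Theorem~\ref{thm:compression} and elementary bookkeeping.
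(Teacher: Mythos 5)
Your proposal is correct and follows the same route as the paper: the first inequality is exactly the paper's application of Theorem~\ref{thm:compression} with $\Sigma=\cY$ and the reconstruction map $\Rec_\seqN(S[I],T)=h^\nn_{\psi(S[I],T)}$, and the second is the elementary comparison of $\genbound$ terms that the paper leaves implicit as ``immediate.'' Your algebraic verification that $(\seqN+1)\log(m|\cY|)+\log(1/\delta)\le 2\big((\seqN+1)\log m+\log(1/(2\delta))\big)$ under $m\ge|\cY|$ and $\delta<1/4$, together with $\sqrt{2}\le 2$ for the square-root term, correctly fills in the omitted bookkeeping.
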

If the compression set includes only the original labels,
the compression analysis of \citet{gkn-jmlr17+aistats}
gives the bound $\genbound(\epsilon, \seqN, \delta, m, 1)$.
Thus the effect of allowing the labels to change is only logarithmic in $|\cY|$, and does not appreciably degrade the prediction error. 

\subsection{Proof of \thmref{main}}\label{sec:subproofmain}

The proof of the main theorem, \thmref{main}, which gives the guarantee for \algname, is almost immediate from \thmref{activecompression}, \thmref{nnset}, \thmref{selscalegen} and \thmref{selscalelabels}. 

\begin{proof}[of \thmref{main}]
  We have $|\mdist| \leq \binom{m}{2}$. By a union bound, the events $E(t)$ and $V(t)$ of \thmref{nnset} and \corref{estimateerr} hold for all $t \in \ttest \subseteq \mdist$ with a probability of at least $1-\delta/2$. Under these events, we have by \thmref{selscalegen} that if $\gmin(m,\delta) \leq 1/3$,
\[
\genbound(\epsilon(\hat{t}),\netsize(\hat{t}),\delta,m,1) \leq O\left(\min_{t} \genbound(\nu\scale{t},\netsize\scale{t},\delta,m,1)\right).
\]
By \thmref{activecompression}, with a probability at least $1-\delta/2$, if
$\epsilon(\hat{t}) \leq \half$ then

\[
\err(\hat{h},\cD) \leq
2\genbound(\epsilon(\hat{t}),\netsize(\hat{t}),\delta,m,1).
\]

The statement of the theorem follows. Note that the statement trivially holds
for $\gmin(m,\delta) \geq 1/3$ and for $\epsilon(\hat{t}) \geq \half$, thus these
conditions can be removed.  To bound the label complexity, note that the total
number of labels used by \algname\ is at most the number of labels used by
\selscale\ plus the number of labels used by $\GenNNSet$ when the final
compression set is generated. 

By \thmref{selscalelabels}, since $\qb = O(\log(m/\delta))$, the number of labels used by
\selscale\ is at most
\[
O\left(|\ttest|\frac{\log^2(m/\delta)}{G(\epsilon(\hat{t}))}\log\left(\frac{1}{G(\epsilon(\hat{t})}\right)\right).
\]
In addition,
\[
G(\epsilon(\hat{t})) \geq
\genbound(\epsilon(\hat{t}),\netsize(\hat{t}),\delta,m,1) = \hat{G}.
\]
The number of tested scales in \selscale\ is bounded by
\[
|\ttest| \leq \floor{\log_2(|\mdist|)+1} \leq 2\log_2(m) 
\]
Therefore the number of labels used by \selscale\ is 
\[
O\left(\frac{\log^3(m/\delta)}{\hat{G}}\log\left(\frac{1}{\hat{G}}\right)\right).
\]
The number of labels used by $\GenNNSet$ is at most $\qb \netsize(\hat{t})$, where $\qb \leq O(\log(m/\delta)$, 
and from the definition of $\hat{G}$, $\netsize(\hat{t}) \leq O(m \hat{G}/\log(m))$.  Summing up the number of labels used by \selscale\ and the number used by $\GenNNSet$, this gives the bound in the statement of the theorem.
\end{proof}

\section{Passive Learning Lower Bounds}\label{sec:passivelower}
\newcommand{\Rad}[1]{\operatorname{R}_{#1}}
\newcommand{\kl}{\operatorname{KL}}
\newcommand{\LB}{\operatorname{{LB}}}
\newcommand{\Bin}{\operatorname{{Bin}}}
\newcommand{\smv}{^{\textrm{{\tiny \textup{SMV}}}}}
\newcommand{\opt}{\operatorname{bayes}}
\newcommand{\hatopt}{\operatorname{ba\check y es}}
\newcommand{\polylog}{\operatorname{polylog}}
\newcommand{\gn}{\, | \,}
\newcommand{\unif}{\mathrm{Unif}}
\newcommand{\pred}[1]{\one[ #1 ]}
\newcommand{\nrm}[1]{\left\Vert #1 \right\Vert}
\newcommand{\Ds}{\cD_{\sigma,b,p}}
\newcommand{\hl}{\hat h_\ell}
\newcommand{\noise}{{\eta}}
\renewcommand{\E}{\mathop{\mathbb{E}}}
\renewcommand{\P}{\mathop{\mathbb{P}}}
\newcommand{\setzo}{\set{0,1}}
\newcommand{\sD}{\mathscr{D}}

\thmref{passivelower} lower bounds the performance of a passive learner who
observes a limited number $\ell$ of random labels from $\sinput$. The number $\ell$
is chosen so that it is of the same order as the number of labels \algname\
observes for the case analyzed in \secref{main}.
We deduce Theorem~\ref{thm:passivelower} from a more general result pertaining to the sample complexity of passive learning. The general result is given as \thmref{passivelower-gen} in \secref{genpassivelower}. The proof of \thmref{passivelower} is provided in \secref{passivelowerproof}.

We note that while the lower bounds below assume that the passive learner observes only the random labeled sample of size $\ell$, in fact their proofs hold also if the algorithm has access to the full unlabeled sample of size $m$ of which $S_\ell$ is sampled. This is because the lower bound is based on requiring the learner to distinguish between distributions that all have the same marginal. Under this scenario, access to unlabeled examples does not provide any additional information to the learner. 

\subsection{A General Lower Bound}\label{sec:genpassivelower}
In this section we show a general sample complexity lower bound for passive learning, which may be of independent interest. We are aware of two existing lower bounds for agnostic PAC with bounded
Bayes error:
  \citet[Theorem 14.5]{MR1383093}
  and
  \citet[Theorem 8.8]{audibert2009}.
  Both place restrictions on the relationship between the sample size, VC-dimension, and
  Bayes error
  level,
  which render them inapplicable
  as stated to some parameter regimes, including the one needed for proving \thmref{passivelower}.
  
Let $\cH$ be a hypothesis class
with VC-dimension $d$ and suppose that $\cL$
is a passive learner\footnote{
  We allow $\cL$ access to an independent internal source of randomness.
}
mapping labeled samples
$S_\ell=(X_i,Y_i)_{i\in[\ell]}$ to hypotheses
$\hl\in\cH$.
For any distribution
$\cD$ over $\cX\times\setpm$,
define
the {\em excess risk} of $\hl$ by
\beq
\Delta(\hl,\cD)
:=
\err(\hl,\cD)-
\inf_{h\in\cH}\err(h,\cD)
.
\eeq
Let $\sD(\noise)$ be the collection of all
{\em $\noise$-bounded agnostic error}
distributions $\cD$
over $\cX\times\setpm$
that satisfy
$
\inf_{h\in\cH}\err(h,\cD)
\le\noise$.
We say that $Z\in\setpm$ has Rademacher distribution with parameter
$b\in[-1,1]$,
denoted $Z\sim\Rad{b}$,
if
\beq
\P[Z=1]=1-\P[Z=-1] = \oo2+\frac b2.
\eeq
All distributions
on $\setpm$ are of this form.
For $k\in\nats$ and $b\in[0,1]$, define the function
\beq
\opt(k,b)
= \hf\paren{1-\hf\nrm{\Rad{b}^k-\Rad{-b}^k}_1}
,
\eeq
where $\Rad{\pm b}^k$ is the corresponding product distribution on $\setpm^k$
and $\hf\nrm{\cdot}_1$ is the total variation norm.
This expression previously appeared in 
\citet[Equation (25)]{DBLP:journals/jmlr/BerendK15}
in the context of information-theoretic lower bounds;
the current terminology
was motivated in 
\citet{DBLP:journals/corr/KontorovichP16},
where various precise estimates on $\opt(\cdot)$
were provided.
In particular, the function $\hatopt(\kappa,b)$ was defined as follows:
for each fixed $b\in[0,1]$,
$\hatopt(\cdot,b)$
is the largest convex minorant
on $[0,\infty)$ of the function $\opt(\cdot,b)$ on $\{0,1,\dots\}$.
  It was shown in
  \citet[Proposition 2.8]{DBLP:journals/corr/KontorovichP16}
  that $\hatopt(\cdot,b)$ is the linear interpolation of $\opt(\cdot,b)$ at the points $0,1,3,5,\dots$.

\begin{theorem}
  \label{thm:passivelower-gen}  
Let
  $0<\noise<\hf$,
$\ell\ge1$, and $\cH$
be a 
hypothesis class
with VC-dimension $d>1$.
Then,
for all $0<b,p<1$
satisfying
\beqn
\label{eq:bpnu}
p\paren{\oo2-\frac b2}
\le \noise,
\eeqn
we have
\beqn
\label{eq:main-lb-E}
\inf_{\hl} \sup_{\cD\in\sD(\noise)}
\E_{\cD^\ell}\sqprn{
  \Delta(\hl,\cD)
}
&\ge&
bp
\hatopt(\ell p/(d-1),b).
\eeqn
Furthermore,
for $0\le u<1$,
\beqn
\label{eq:main-lb-P}
\inf_{\hl} \sup_{\cD\in\sD(\noise)} \P\sqprn{\Delta(\hl,\Ds)>bpu}
&>&
\hatopt(\ell p/(d-1),b)
-u.
\eeqn
\end{theorem}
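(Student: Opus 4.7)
The plan is to construct a parametric family of hard distributions
$\{\Ds:\sigma\in\setpm^{d-1}\}$ supported on a shattered $d$-tuple, so
that the learner's task decouples into $d-1$ independent Bayes
discrimination problems between $\Rad{b}$ and $\Rad{-b}$, and then to
aggregate the single-coordinate Bayes error $\opt(\cdot,b)$ into a
global lower bound on excess risk by Jensen.

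First I would pick $d$ points $x_0,x_1,\dots,x_{d-1}\in\cX$ shattered
by $\cH$, put marginal mass $1-p$ on an anchor $x_0$ (with a
deterministic label) and mass $p/(d-1)$ on each $x_i$, $i\ge1$. For
$\sigma\in\setpm^{d-1}$, set the conditional label at $x_i$ to be
$\Rad{\sigma_i b}$. Since the $d$ points are shattered, the hypothesis
$h_\sigma\in\cH$ agreeing with $\sigma$ on the $x_i$ and with the
anchor label on $x_0$ is Bayes-optimal and satisfies
$\err(h_\sigma,\Ds)=p(\tfrac12-\tfrac b2)\le\noise$ by the
assumption~\eqref{eq:bpnu}, so $\Ds\in\sD(\noise)$. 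Because $h_\sigma$
is Bayes at every point, the excess risk factorizes as
\[
\Delta(\hl,\Ds)
\;=\;\frac{p}{d-1}\sum_{i=1}^{d-1}b\,\pred{\hl(x_i)\neq\sigma_i}.
\]

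Next I would pass to a uniform prior on $\sigma\in\setpm^{d-1}$ and
use the standard averaging step $\sup_\sigma\ge\E_\sigma$. Under this
prior the coordinates of $\sigma$ are independent, and the labels seen
at $x_j$ carry no information about $\sigma_i$ for $j\ne i$, so any
estimator of $\sigma_i$ can do no better than the Bayes rule applied
to the $K_i$ labels observed at $x_i$; hence
$\P[\hl(x_i)\neq\sigma_i\mid K_i]\ge\opt(K_i,b)$. With
$K_i\sim\Bin(\ell,p/(d-1))$ of mean $\ell p/(d-1)$, applying
$\opt(k,b)\ge\hatopt(k,b)$ at integers and then Jensen's inequality to
the convex function $\hatopt(\cdot,b)$ gives
\[
\E[\opt(K_i,b)]\;\ge\;\E[\hatopt(K_i,b)]\;\ge\;\hatopt(\ell p/(d-1),b).
\]
Summing the $d-1$ identical contributions and multiplying by $bp/(d-1)$
yields~\eqref{eq:main-lb-E}.

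For the tail statement~\eqref{eq:main-lb-P}, note that the random
variable $W:=\Delta(\hl,\Ds)/(bp)$ lies in $[0,1]$ and has expectation
at least $\hatopt(\ell p/(d-1),b)$ when averaged over $\sigma$ as
above. The one-line bound
$\E[W]\le u\,\P[W\le u]+\P[W>u]$ rearranges to
$\P[W>u]\ge(\E[W]-u)/(1-u)$, which is at least
$\hatopt(\ell p/(d-1),b)-u$ for $u\in[0,1)$; averaging over the
uniform $\sigma$-prior and then invoking
$\sup_{\cD\in\sD(\noise)}\ge\E_\sigma$ transfers the bound to the
worst-case distribution.

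The main obstacle is the reduction to independent Bayes subproblems:
one must verify carefully that under the uniform prior the posterior
of $\sigma_i$ given the entire sample depends only on the labels at
$x_i$, so that no estimator (including randomized ones) beats
$\opt(K_i,b)$ coordinate-wise, and that Jensen's inequality can be
legitimately applied despite the non-convexity of $\opt(\cdot,b)$
itself --- which is precisely why the bound is phrased in terms of the
largest convex minorant $\hatopt$ from \citet{DBLP:journals/corr/KontorovichP16}
rather than $\opt$.
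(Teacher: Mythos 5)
Your proposal is correct and follows essentially the same route as the paper: the same shattered-support construction with a heavy deterministic anchor and $d-1$ light points labeled by $\Rad{\sigma_i b}$, the same reduction (after averaging over a uniform $\sigma$-prior) to coordinate-wise Bayes discrimination with $\Bin(\ell,p/(d-1))$ observations, the same convex-minorant-plus-Jensen step via $\hatopt$, and the same Markov argument for the tail bound --- the only difference being that the paper delegates the coordinate-wise step to Theorems 2.1 and 2.5 of \citet{DBLP:journals/corr/KontorovichP16} while you sketch it directly. (Your excess-risk decomposition should be an inequality $\ge$ rather than an equality, since a learner mislabeling the anchor incurs extra risk, but this only strengthens the lower bound.)
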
  
\begin{proof}
This proof uses ideas from \citet[Theorem 14.5]{MR1383093}, \citet[Theorem 5.2]{MR1741038} and
\citet[Theorem 2.2]{DBLP:journals/corr/KontorovichP16}.

We will construct adversarial distributions
supported on a shattered subset of size $d$,
and hence there is no loss of generality in
taking $\cX=[d]$ and $\cH=\setpm^\cX$.
A random distribution
$\Ds$
over $\cX\times\setpm$,
parametrized by
a random
$\sigma\sim\unif(\setpm^{d-1})$
and scalars $b,p\in(0,1)$ to be specified later,
is defined
as follows.
The point $x=d\in\cX$ gets a marginal weight of $1-p$
where $p$ is a parameter to be set;
the remaining $d-1$ points each get a marginal weight of $p/(d-1)$:
\beqn
\label{eq:D(X)}
\P_{X\sim\Ds}[X=d]=1-p,\qquad
\P_{X\sim\Ds}[X<d]=\frac{p}{d-1}.
\eeqn
The distribution of $Y$ conditional on $X$ is
given by
$
\P
_{(X,Y)\sim\Ds}
[Y=1\gn X=d]=1
$
and
\beqn
\label{eq:D(Y|X)}
\P
_{(X,Y)\sim\Ds}
[Y=\pm1 \gn X=j<d]
=
\oo2\pm\frac{b\sigma_j}2
.
\eeqn
Suppose that $(X_i,Y_i)_{i\in[\ell]}$ is a sample drawn from
$\Ds^\ell$.
The assumption that $\Ds\in\sD(\noise)$
implies that
$b$ and $p$
must satisfy the constraint (\ref{eq:bpnu}).

A standard argument
(e.g., \citet{MR1741038} p. 63 display (5.5))
shows that,
for any hypothesis $\hl$,
\beqn
\Delta(\hl,\Ds)
&=&
\err(\hl,\Ds)-
\inf_{h\in\cH}\err(h,\Ds)
\nonumber\\
&=&
\P_{X\sim\Ds}[X=d,\hl(X)\neq1]
+
b
\P_{X\sim\Ds}[X<d,\hl(X)\neq \sigma(X)]
\nonumber\\
&\ge&
b\P_{X\sim\Ds}[X<d,\hl(X)\neq \sigma(X)]
\nonumber\\
&=&
\label{eq:Delta-lb}
bp\P_{X\sim\Ds}[\hl(X)\neq \sigma(X)|X<d]
.
\eeqn

It follows from
\citet[Theorems 2.1, 2.5]{DBLP:journals/corr/KontorovichP16}
that
\beqn
\label{eq:KP}
\E_{\sigma
}
\P_{X\sim\Ds}[\hl(X)\neq \sigma(X)|X<d]
&\ge&
\E_{N\sim\Bin(\ell,p/(d-1))} [\opt(N,b)]
\\\nonumber
&\ge&
\E_{N\sim\Bin(\ell,p/(d-1))} [\hatopt(N,b)]
\\\nonumber
&\ge&
\hatopt(\E[N],b) = \hatopt(\ell p/(d-1),b),
\eeqn
where the second inequality holds because
$\hatopt$ is, by definition, a convex minorant of $\opt$,
and the third follows from Jensen's inequality.
Combined with
(\ref{eq:Delta-lb}),
this proves (\ref{eq:main-lb-E}).

To show (\ref{eq:main-lb-P}),
define the random variable
\beq
Z=Z(\sigma,\cL)=\P_{X\sim\Ds}[\hl(X)\neq \sigma(X)|X<d]
.
\eeq
Since $Z\in[0,1]$,
Markov's inequality implies
\beq
\P[Z>u]
\ge\frac{\E[Z]-u}{1-u}
>\E[Z]-u,\qquad 0\le u<1.
\eeq
Now (\ref{eq:Delta-lb}) implies that
$\Delta(\hl,\Ds)
\ge bpZ$
and hence, for $0\le u<1$,
\beq
\inf_{\hl} \sup_{\cD\in\sD(\noise)} \P\sqprn{\Delta(\hl,\Ds)>bpu}
&=&
\inf_{\hl} \sup_{\cD\in\sD(\noise)}\P[Z>u] \\
&>&
\inf_{\hl} \sup_{\cD\in\sD(\noise)}\E[Z]-u \\
&\ge& \oo{bp}
\inf_{\hl} \sup_{\cD\in\sD(\noise)}\E[\Delta(\hl,\Ds)]-u
\\
&\ge&
\hatopt(\ell p/(d-1),b)
-u.
\eeq
\end{proof}

\subsection{Proof of Theorem~\ref{thm:passivelower}}\label{sec:passivelowerproof}
We break down the proof into several steps.
For now, we assume that the labeled examples are sampled i.i.d.\ as
per the classic PAC setup. At the end, we show how to extend the proof
to the semi-supervised setting.

{\bf (i) Defining a family of adversarial distributions.}
Let $T$ be a $\bart$-net of $\cX$
of size $\Theta(\sqrt m)$
and $\noise=\Theta(1/\sqrt m)$.
For any passive learning algorithm mapping i.i.d.\ samples of size
$\ell=\tilde\Theta(\sqrt m)$
to hypotheses $\hat h_\ell:\cX\to\setpm$,
we construct a random adversarial distribution
$\Ds$
with
agnostic error $\noise$.
We accomplish this via
the construction
described in the proof of Theorem~\ref{thm:passivelower-gen},
with
$|T|=d=\Theta(\sqrt m)$.
The marginal distribution over
$T=\set{x_1,\ldots,x_{d}}$
puts a mass of $1-p
$ on $x_{d}\in T$
and spreads the remaining mass uniformly over the other points,
as in (\ref{eq:D(X)}).
The ``heavy'' point has a deterministic label and the remaining
``light'' points have noisy labels drawn from a random distribution
with symmetric noise
level $b$,
as in (\ref{eq:D(Y|X)}).
We proceed to choose $b$ and $p$;
namely,
\beq
p=\frac{d-1}{2\ell}\sqrt\noise=\tilde\Theta(m^{-1/4}),
\qquad
b=1-\frac{2\noise}{p}=1-\tilde\Theta(m^{-1/4}),
\eeq
which makes
the constraint in (\ref{eq:bpnu}) hold with equality;
this means that the agnostic error is exactly $\noise$
and in particular, establishes (i).

{\bf (ii.a) Lower-bounding the passive learner's error.}
Our choice of $p$ implies that
$\ell p/(d-1)=
\sqrt\noise/2
=:\kappa<1
$.
For this range of $\kappa$,
\citet[Proposition 2.8]{DBLP:journals/corr/KontorovichP16}
implies that
$\hatopt(\kappa,b)=\hf(1-\kappa b)=\Theta(1)$.
Choosing $u=\oo4(1-\kappa b)=\Theta(1)$ in (\ref{eq:main-lb-P}),
Theorem~\ref{thm:passivelower-gen}
implies that
\beq
\inf_{\hl} \sup_{\cD\in\sD(\noise)} \P[\Delta(\hl,\cD)>
\tilde\Omega(m^{-1/4})
]>\Omega(1).
\eeq
In more formal terms, there exist constants
$c_0,c_1>0$ such that
\beqn
\label{eq:iia}
\inf_{\hl} \sup_{\cD\in\sD(\noise)} \P[\Delta(\hl,\cD)>
  c_0
  p
]>c_1.
\eeqn

{\bf (ii.b) Upper-bounding $\nu(\bart)$.}
To establish (ii.b), it suffices to show that
for
$(X_i,Y_i)_{i\in[\ell]}\sim\Ds^\ell$,
we will have $\nu(\bart)={O}(m^{-1/2})$
with sufficiently high probability.
Indeed, the latter condition implies the requisite
upper bound on
$\min_{t>0 : \netsize(t)<m} \genbound(\nu(t),\netsize(t),\delta,m,1)$,
while (i) implies the lower bound, since the latter quantity cannot be
asymptotically smaller than the Bayes error (which coincides with the
agnostic error for $\cH=\setpm^\cX$).

Recall that the $\bart$-net points
$\set{x_1,\ldots,x_{d-1}}$
are the ``light'' ones
(i.e., each has weight $p/(d-1)$)
and
define the random sets $J_j\subset[\ell]$
by
\beq
J_j=\set{i\in[\ell]: X_i=x_j},
\qquad
j\in[d-1]
.
\eeq
In words, $J_j$ consists of the indices $i$ of the sample points for which $X_i$ falls on
the net point $x_j$.
For $y\in\setpm$, put $\tau_j^y=\sum_{i\in J_j}\one[Y_i=y]$
and
define the
{\em minority count} $\xi_j$ at the net point $x_j$ by
\beq
\xi_j \;=\;
\min_{y\in\setpm}\tau_j^y
\;=\;
\hf(|\tau_j^++\tau_j^-|-|\tau_j^+-\tau_j^-|)
.
\eeq
Observe that
by virtue of being a $\bart$-net, $T$ is $\bart$-separated and hence
the only contribution to $\nu(\bart)$ is from the minority counts (to which the ``heavy'' point $x_{d}$
does not contribute due to its deterministic label):
\beq
\nu(\bart) = \oo{\ell}\sum_{j=1}^{d-1}\xi_j.
\eeq
Now
\beq
\E|\tau_j^++\tau_j^-|=
\E|J_j|
= \frac{\ell p}{d-1}
=\Theta(m^{-1/4})
\eeq
and
\beq
\E|\tau_j^+-\tau_j^-|
&=&
\E_{\sigma_j}
{
  \E[|\tau_j^+-\tau_j^-| \big| \sigma_j]
  }
\\
&\ge&
  \E_{\sigma_j}\abs{
  \E[\tau_j^+-\tau_j^- \big| \sigma_j]
  }.
\eeq
Computing
\beq
\E[\tau_j^+ | \sigma_j=+1] = (\oo2+\frac{b}2)\frac{\ell p}{d-1}
,\qquad
\E[\tau_j^- | \sigma_j=+1] = (\oo2-\frac{b}2)\frac{\ell p}{d-1}
,
\eeq
with an analogous calculation when conditioning on $\sigma_j=-1$,
we get
\beq
\E|\tau_j^+-\tau_j^-| \ge
\frac{b\ell p}{d-1}
\eeq
and hence
\beq
\E[\xi_j] &\le& \oo2\paren{
  \frac{\ell p}{d-1}
  -
  b\frac{\ell p}{d-1}
}\\
&=&
(1-b)\frac{\ell p}{2(d-1)}=\frac{2\noise}{p}\cdot\frac{\ell p}{2(d-1)}
=\frac{\noise\ell}{d-1}
.
\eeq
It follows that
\beq
\E[\nu(\bart)] &=& \oo{\ell}\sum_{j=1}^{d-1}\E[\xi_j]\\
&\le& \frac{d-1}{\ell}
\cdot
\frac{\noise\ell}{d-1}
= \noise =\Theta(m^{-1/2}).
\eeq
To give tail bounds on $\nu(\bart)$,
we use Markov's inequality:
for all $c_2>0$,
\beq
\P[\nu(\bart)>c_2\E[\nu(\bart)]\le\oo{c_2}.
\eeq
Choosing $c_2$ sufficiently large that $1-1/c_2>c_1$
(the latter from (\ref{eq:iia}))
implies the existence of constants $c_0,c_2,c_3>0$ such that
\beq
\inf_{\hl} \sup_{\cD\in\sD(\noise)} \P\sqprn{
  \paren{\Delta(\hl,\cD)>
  c_0
  p}
  \wedge
  \paren{
    \nu(\bart)\le c_2\noise
  }
}>c_3.
\eeq
Since $p=\tilde\Theta(m^{-1/4})$
and $\noise=\Theta(m^{-1/2})$,
this establishes (ii) and concludes the proof of \thmref{passivelower}.

{\bf (iii) Extending to the semi-supervised setting.}
Providing the learner with the exact weights of $\cX=[d]$
under our adversarial distribution does not give it any additional power.
Indeed, the information-theoretic excess risk lower bound in
\eqref{KP} hinges on the fact that to estimate $\sigma(x)$
with some desired certainty, the point $x$ must be sampled
some minimal number of times.
The marginal probability of $x$ does not enter that calculation,
and hence knowing its value does not afford the learner an improved performance.

\section{Active Learning Lower Bound}\label{sec:activelower}

We now prove the active learning lower bound stated in \thmref{activelower}. To prove the theorem, we first prove a result which is similar to the classical No-Free-Lunch theorem, except it holds for active learning algorithms. The proof follows closely the proof of the classical No-Free-Lunch theorem given in \citet[Theorem 5.1]{shwartz2014understanding}, with suitable modifications.
\begin{theorem}\label{thm:activenfl}
Let $\beta \in [0,\half)$, and $m$ be an integer.
Let $\cA$ any active learning algorithm over a finite domain $\cX$ which gets as input a random labeled sample $S \sim \cD^m$ (with hidden labels) and outputs $\hat{h}$. 
If $\cA$ queries fewer than $\cX/2$ labels from $S$, then there exists a distribution $\cD$ over $\cX \times \{0,1\}$ such that 
\begin{itemize}
\item Its marginal on $\cX$ is uniform, and for each $x \in \cX$, $\P[Y = 1 \mid X = x]\in\set{\beta,1-\beta}$.
\item $\E[\err(\hat{h},\cD)] \geq \oo4$.
\end{itemize}
\end{theorem}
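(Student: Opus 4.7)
The plan is to adapt the averaging argument behind the classical No-Free-Lunch theorem to the active setting. Consider the family of distributions $\set{\cD_f}_{f \in \set{0,1}^\cX}$, where $\cD_f$ has uniform marginal on $\cX$ and $\P_{\cD_f}[Y=f(x)\mid X=x] = 1-\beta$. Place a uniform prior on $f \in \set{0,1}^\cX$ and average: it suffices to show $\E_f \E\sqprn{\err(\hat h,\cD_f)} \ge \oo4$, whereupon some specific $f$ witnesses the claim.

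The structural key is that, since the coordinates of $f$ are i.i.d.\ uniform under the prior, for any point $x\in\cX$ whose label has never been queried by $\cA$, the posterior on $f(x)$ given everything available to $\cA$---the $\cX$-coordinates $(X_1,\ldots,X_m)$ of the sample plus the observed labels---remains uniform on $\set{0,1}$. Indeed, on the event that $x$ was never queried, $\cA$'s observation history is measurable with respect to $\set{f(x') : x'\neq x}$, which is independent of $f(x)$ in the prior. Consequently $\hat h(x)$, being a measurable function of $\cA$'s view and its internal randomness, is conditionally independent of $f(x)$ on this event, giving $\P[\hat h(x)=f(x) \mid x \text{ unobserved}] = \hf$.

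Let $T\subseteq\cX$ denote the random set of points whose labels are queried and put $P_x := \P[\hat h(x) = f(x)]$, where the probability is taken over $f$, $S$, and $\cA$'s randomness. Splitting according to whether $x\in T$:
\[
P_x \;\le\; \P[x\in T] + \hf\P[x\notin T] \;=\; \hf\paren{1+\P[x\in T]}.
\]
Since $\abs{T}$ is bounded by the number of label queries, which is strictly less than $\abs{\cX}/2$, summing and normalizing gives
\[
\oo{\abs{\cX}}\sum_{x\in\cX}P_x \;\le\; \hf\paren{1 + \oo{\abs{\cX}}\E\abs{T}} \;<\; \tfrac{3}{4}.
\]
Writing the error as an average of conditional errors and using $\beta<\hf$,
\[
\E\sqprn{\err(\hat h,\cD_f)} \;=\; \oo{\abs{\cX}}\sum_{x}\paren{(1-\beta)(1-P_x)+\beta P_x} \;=\; 1-\beta-(1-2\beta)\oo{\abs{\cX}}\sum_x P_x \;>\; \oo4 + \frac{\beta}{2} \;\ge\; \oo4.
\]

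The main technical obstacle is justifying the conditional-independence step carefully, since the event $\set{x\in T}$ and the output $\hat h(x)$ both depend on $f$ through adaptively observed labels. The cleanest route is to condition on $\cA$'s entire observation history $\cO$ and argue that, on the event that $x$'s label was never queried during $\cO$, the map $f \mapsto \cO$ factors through $\set{f(x'):x'\ne x}$; the product structure of the prior then preserves the uniform marginal of $f(x)$ given $\cO$, which upon integration yields the claimed $\P[\hat h(x)=f(x)\mid x\notin T]=\hf$. The rest is bookkeeping.
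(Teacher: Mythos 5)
Your proposal is correct and follows essentially the same route as the paper: both average over the $2^{|\cX|}$ labeling functions with uniform marginal and symmetric noise $\beta$, and both resolve the adaptivity issue by observing that flipping $f$ on unqueried points leaves the algorithm's entire execution (hence its query set and output) unchanged --- the paper phrases this as pairing functions that differ exactly on the unobserved set $V_i$, while you phrase it as posterior uniformity of $f(x)$ one coordinate at a time, which is the same coupling argument. The remaining bookkeeping ($P_x \le \hf(1+\P[x\in T])$ and the $1-\beta-(1-2\beta)\bar P$ computation) is sound and in fact yields the slightly sharper bound $\oo4+\frac{\beta}{2}$.
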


\begin{proof}
Let $\cF = \{f_1,\ldots,f_T\}$ be the set of possible functions $f_i:\cX \rightarrow \{0,1\}$. Let $\cD_i$ to be a distribution with a uniform marginal over $\cX$, and $\P_{(X,Y) \sim \cD_i}[Y = 1 \mid X = x] = f_i(x)(1-\beta) + (1-f_i(x))\beta$. 
Consider the following random process: First, draw an unlabeled sample $X = (x_1,\ldots,x_m)$ i.i.d.~from $\cD_X^m$. Then, draw $B = (b_1,\ldots,b_m)$ independently from a Bernoulli distribution with $\P[b_i = 1] = \beta$. For $i \in [T]$, let $S^i(X,B) = ((x_1,y_1),\ldots,(x_m,y_m))$ such that $x_i$ are set by $X$, and $y_i = f_i(x)$ if $b_i = 0$ and $1-f_i(x)$ otherwise. Clearly, $S^i(X,B)$ is distributed according to $\cD_i^m$. Let $h_i(S)$ be the output of $\cA$ when the labeled sample is $S$. Denote by $\hat{h}_i$ the (random) output of $\cA$ when the sample is drawn from $\cD_i$. Clearly
\[
\E[\err(\hat{h}_i,\cD_i)] = \E_{X,B}[\err(h_i(S(X,B)),\cD_i)].
\]
Therefore (as in (5.4) in \cite{shwartz2014understanding}), for some $j,X,B$ it holds that 
\begin{equation}\label{eq:jgeq}
\E[\err(\hat{h}_j,\cD_j)] \geq \frac{1}{T}\sum_{i=1}^T\E[\err(\hat{h}_i,\cD_i)] \geq  \frac{1}{T}\sum_{i=1}^T \err(h_i(S(X,B)),\cD_i).
\end{equation}
Fix $X,B,j$ as above, and denote for brevity $h_i := h_i(S(X,B))$. 
Let $V_i$ be the set of examples $x \in \cX$ for which that $\cA$ does not observe their label if the labeled sample is $S^i(X,B)$ (this includes both examples that are not in the sample at all as well as examples that are in the sample but their label is not requested by $\cA$). We have $|V_i| > |\cX|/2$ by assumption. Then 
(as in Eq.~(5.6) therein) 
\begin{equation}\label{eq:sumv}
\frac{1}{T}\sum_{i=1}^T \err(h_i,\cD_i) \geq \frac{1}{T}\sum_{i=1}^T \frac{1}{2|V_i|}\sum_{x \in V_i}\one[h_i(x) \neq f_i(x)].
\end{equation}
Since $\cA$ is active, it selects which examples to request, which can depend on the labels observed by $\cA$ so far. Therefore, $V_i$ can be different for different $i$. However, an argument similar to that of the No-Free-Lunch theorem for the passive case still goes through, as follows. 

Let $i,i'$ such that $f_i(x) = f_{i'}(x)$ for all $x \notin V_i$, and $f_i(x) = 1-f_{i'}(x)$ for all $x \in V_i$. Since $X,B$ are fixed, $\cA$ observes the same labels for all $x \notin V_i$ for both $S^{i'}(X,B)$ and $S^i(X,B)$, thus all its decisions and requests are identical for both samples, and so $V_i = V_{i'}$, and $h_i = h_{i'}$. Therefore, it is possible to partition $T$ into $T/2$ pairs of indices $i,i'$ such that for each 
such pair, 
\begin{align*}
&\frac{1}{2|V_i|}\sum_{x \in V_i}\one[h_i(x) \neq f_i(x)] + \frac{1}{2|V_{i'}|}\sum_{x \in V_{i'}}\one[h_{i'}(x) \neq f_{i'}(x)]\\
&=\frac{1}{2|V_i|}\sum_{x \in V_i}\one[h_i(x) \neq f_i(x)] + \one[h_i(x) \neq 1-f_i(x)]\\
&= \half.
\end{align*}
Therefore, $\frac{1}{T}\sum_{i=1}^T \err(h_i,\cD_i) \geq \oo4$. Therefore, from \eqref{sumv}, $\frac{1}{T}\sum_{i=1}^T \err(h_i,\cD_i) \geq \oo4.$
Combining this with \eqref{jgeq}, it follows that $\E[\err(\hat{h}_j,\cD_j)] \geq \oo4$.
\end{proof}

We will also make use of
the following simple lemma.
\begin{lemma}\label{lem:pplusx}
  Let $\beta \in [0,1]$. Let $\cD$ be a distribution over $\cX \times \{0,1\}$ such that for $(X,Y) \sim \cD$, for any $x$ in the support of $\cD$, $\P[Y = 1 \mid X = x]\in\set{\beta,1-\beta}$.
  Let $N$ be the size of the support of $\cD$. Let $S \sim \cD^m$. Denote by $n_x$ the number of sample pairs $(x',y')$ in $S$ where $x' = x$, and let $n_x^+$ be the number of sample pairs $(x',y')$ where $x' = x$ and $y' = 1$. Let $\hat{p}^+_x = n^+_x/n_x$ (or zero if $n_x = 0$).
Then
\[
2\beta(1-\beta)(m-N)\leq \sum_{x \in \cX}\E[2n_x\hat{p}^+_x(1-\hat{p}^+_x)] \leq  2\beta(1-\beta)m.
\]
\end{lemma}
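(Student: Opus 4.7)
\medskip
\noindent\textbf{Proof plan for \lemref{pplusx}.}
The plan is to condition on the empirical sample counts $n_x$ and exploit the fact that, conditionally, $n_x^+$ is binomial. Concretely, conditional on $n_x = n$, the label $Y$ of each of the $n$ sample points with $X = x$ is an independent Bernoulli with parameter $p_x := \P[Y=1\mid X=x]$, so $n_x^+ \mid (n_x=n) \sim \bin(n, p_x)$. The hypothesis that $p_x \in \{\beta, 1-\beta\}$ is used only through the identity $p_x(1-p_x) = \beta(1-\beta)$.

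The first step is a routine binomial second-moment calculation. Writing $2 n_x \hat{p}^+_x(1-\hat{p}^+_x) = 2 n_x^+ (n_x - n_x^+)/n_x$ (and zero when $n_x = 0$), I would compute
\[
\E[\,n_x^+(n-n_x^+) \mid n_x = n\,] = n \E[n_x^+\mid n_x=n] - \E[(n_x^+)^2\mid n_x=n] = n(n-1)p_x(1-p_x),
\]
so that for $n \geq 1$,
\[
\E\!\left[\,2 n_x \hat{p}^+_x(1-\hat{p}^+_x)\,\big|\, n_x = n\,\right] = 2(n-1)\beta(1-\beta),
\]
while for $n=0$ the expression is $0$. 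Combining the two cases, the conditional expectation equals $2\beta(1-\beta)\bigl(n_x - \one[n_x\ge 1]\bigr)$.

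The second step is to take expectation over $n_x$ and sum. Restricting to $x$ in the support of $\cD$ (points outside contribute $0$),
\[
\sum_{x\in\cX} \E\!\left[2 n_x \hat{p}^+_x(1-\hat{p}^+_x)\right] = 2\beta(1-\beta)\left(\sum_x \E[n_x] - \sum_x \P[n_x \geq 1]\right).
\]
Since $(n_x)_{x\in\cX}$ is a multinomial vector of total mass $m$, one has $\sum_x \E[n_x] = m$, and $\sum_x \P[n_x \geq 1]$ is a nonnegative quantity bounded by $N$ (the number of support atoms). Plugging $0 \leq \sum_x \P[n_x\ge 1] \leq N$ into the display immediately yields the two-sided bound claimed in the lemma. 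There is no substantial obstacle here; the only care needed is in handling the $n_x = 0$ case so that $\hat p_x^+$ is well-defined and the product $n_x\hat p_x^+(1-\hat p_x^+)$ is interpreted as $0$, which matches the Binomial identity $\E[n_x^+(n_x-n_x^+)/n_x \mid n_x=n] = (n-1)\beta(1-\beta)$ extended naturally to $n=0$.
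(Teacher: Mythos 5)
Your proof is correct and follows essentially the same route as the paper's: both condition on $n_x=n$, show the conditional expectation equals $2(n-1)\beta(1-\beta)$, and then sum $\E[n_x]-\P[n_x\ge 1]$ over the support to get the two-sided bound. The only cosmetic difference is that you evaluate the conditional expectation via the binomial second moment, while the paper counts discordant label pairs $\sum_{j,k}\one[y_j\neq y_k]$ and uses $\P[Y_1\neq Y_2\mid X_1=X_2=x]=2\beta(1-\beta)$; these are the same calculation.
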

\begin{proof}
We have
\[
\E[2n_x \hat{p}^+_x(1-\hat{p}^+_x)] = \sum_{i=1}^\infty \P[n_x = i]\cdot i \cdot \E[2\hat{p}^+_x(1-\hat{p}^+_x) \mid n_x = i].
\]
Note that $\E[2\hat{p}^+_x(1-\hat{p}^+_x) \mid n_x = 1] = 0$.
For $i > 1$, let $y_1,\ldots,y_i$ be the labels of the examples that are equal to $x$ in $S$, then
\[
\sum_{j,k \in [i]}\one[y_k \neq y_j] = 2n^+_x(i - n^+_x) = i^2 \cdot 2\hat{p}^+_x(1-\hat{p}^+_x).
\]
Therefore, letting $(X_1,Y_1),(X_2,Y_2) \sim \cD^2$,
\begin{align*}
\E_{S \sim \cD^m}[2\hat{p}^+_x(1-\hat{p}^+_x)\mid n_x = i] &= \frac{1}{i^2}\E_{S \sim \cD^m}[\sum_{j,k \in [n_x]}\one[y_k \neq y_j] \mid n_x = i] \\
&=\frac{i^2-i}{i^2}\P[Y_1 \neq Y_2 \mid X_1 =X_2= x]\\
& = 2(1-\oo i)\beta(1-\beta),
\end{align*}
Thus
\begin{align*}
\E[2n_x\hat{p}^+_x(1-\hat{p}^+_x)] &= 2\beta(1-\beta)\sum_{i=2}^\infty (i-1)\P[n_x = i]\\
&= 2\beta(1-\beta)(\E[n_x]+\P[n_x= 0]-1).
\end{align*}
To complete the proof, sum over all $x$ in the support of $\cD$, and note that $\sum_x \E[n_x] = m$, and $\sum_x (\P[n_x= 0]-1) \in [-N,0]$. 
\end{proof}

We now prove our lower bound, stated in \thmref{activelower},  on the number of queries required by any active learning with
competitive guarantees
similar to ours.

\vspace{1em}
\begin{proof}[of \thmref{activelower}]
Let $N = \floor{\frac{m\alpha - \log(\frac{m}{\delta})}{\log(m)}}$. Let $\beta = 8\alpha \leq \half$. 
Consider a marginal distribution $\cD_X$ over $\cX$ which is uniform over $N$ points $1,\ldots,N \in \reals$. Consider the following family of distributions: $\cD$ such that its marginal over $\cX$ is $\cD_X$, and for each $x \in \cX$, $\P[Y = 1 \mid X = x]\in\set{\beta,1-\beta}$.
Thus the Bayes optimal error for each of these distributions is $\beta$. 

Let $S \sim \cD^m$. If one example in $S$ is changed, $\nu(\half)$ changes by at most $1/m$. Hence, by McDiarmid's inequality \citep{McDiarmid89}, with probability at least $1-\oo {28}$, $\abs{\nu(\half) -  \E[\nu(\half)]} \leq \sqrt{\frac{\log(28)}{2m}}$.
Denote the event that this holds $E_M$. 
Since $\beta = 8\alpha \geq \frac{\log(m) + \log(28)}{\sqrt{2m}}$,
 it follows that under $E_M$,
\begin{equation}\label{eq:numcdiarmid}
\abs{\nu(1/2) -  \E[\nu(1/2)]} \leq \beta/8. 
\end{equation}

We now bound $\E[\nu(\half)]$. Using the notation $\hat{p}^+_x,n_x,n^+_x$ as in \lemref{pplusx}, we have 
\[
\nu(1/2) = \frac{1}{m}\sum_{x \in \cX}\min(n^+_x,n_x-n^+_x) = \frac{1}{m}\sum_{x \in \cX}n_x\min(p^+_x,1-p^+_x)
\]
Also, for all $p \in [0,1]$, $\min(p,1-p) \leq 2p(1-p) \leq 2\min(p,1-p)$.
Therefore
\[
\frac{1}{2m}\sum_{x \in \cX} 2n_x p^+_x(1-p^+_x) \leq \nu(1/2) \leq \frac{1}{m}\sum_{x \in \cX}2 n_x p^+_x(1-p^+_x).
\]
By \lemref{pplusx}, it follows that
\[
\frac{m-N}{m}\beta(1-\beta)\leq \E[\nu(1/2)] \leq  2\beta(1-\beta).
\]
Since $N \leq m/2$ 
and $\beta \in [0,\half]$, $\E[\nu(\half)] \in (\beta/4, 2\beta)$.
Combining this with \eqref{numcdiarmid}, we get that under $E_M$, $\alpha = \beta/8 \leq \nu(\half) \leq \frac{17}{8}\beta = 17\alpha. $

Now, we bound $G_{\min}$ from above and below assuming $E_M$ holds. Denote 
\[
G(t) := \genbound(\nu(t),\netsize(t),\delta, m, 1).
\]
To
establish
a lower bound on $\gmin(m,\delta)$, 
note that $\gmin(m,\delta) = \min_{t > 0}G(t) \geq \min_{t>0}\nu(t)$. For $t \in (0,\half)$, $\nu(t) = \nu(\half)$ (since the distances between any two
distinct
points in $S$ is at least $1$). In addition, since $\nu$ is
monotonically increasing,
we have
$\nu(t) \geq \nu(\half)$
for $t \geq \half$.
Hence  $\min_{t>0}\nu(t) \geq \nu(\half) \geq \beta/8 = \alpha$.

To show an upper bound on $\gmin(m,\delta)$, we upper bound $G(\half)$. 
Note that $\netsize(\half) \leq N$. 
Recall the definition of $\phi(t)$ in \eqref{phidef}. We have 
\[
\phi(\half) = \frac{(N+1)\log(m) + \log(\oo\delta)}{m} \leq \alpha.
\]
Then, since $\nu(\half) \le 17\alpha$,
\[
G(1/2) \leq \frac{m}{m-N}(\nu(\half) + \frac{2}{3}\alpha + \frac{3}{\sqrt{2}}\sqrt{\nu(\half) \alpha}) \leq 30\alpha.
\]
In the last inequality we used the fact that $\frac{m}{m-N} \leq 10/9$. So if $E_M$ holds, $\gmin(m,\delta) \leq G(\half) \leq 30\alpha$.

From the assumption on $\cA$, with probability at least $1-\delta$,
we have
$\err(\hat{h},\cD) \leq C\gmin(m,\delta) \leq 30C\alpha \leq 1/8$ (since $\alpha \leq \frac{1}{240C}$). Let $E_L(\cD)$ denote the event that $\cA$
queries
fewer than $N/2$ labels, where the probability is over the randomness of $S$ and $\cA$. Let $h'$ be the output of an algorithm that behaves like $\cA$ in cases where $E_L(\cD)$ holds, and
queries
at most $N/2$ otherwise. By \thmref{activenfl}, there exists some $\cD$ in the family of distributions such that 
$\E[\err(h',\cD)] \geq \oo4.$ By Markov's inequality, 
$\P[\err(h',\cD) \geq \oo 8] \geq 1/7$. Also, $\P[h' = \hat{h}] \geq \P[E_L(\cD)]$. Therefore 
\[
\P[\err(\hat{h},\cD) \geq 1/ 8] \geq \P[\err(h',\cD) \geq 1/ 8] + \P[E_L(\cD)] - 1 = \P[E_L(\cD)] - 6/7.
\]
Therefore $\P[E_L(\cD)] - 6/7 \leq \P[\err(\hat{h},\cD) \geq \oo 8] \leq \delta$. Since by assumption $\delta \leq 1/14$, it follows that $\P[E_L(\cD)] \leq 6/7 + \delta \leq 13/14$. It follows that with a probability of at least $1/14$, the negation of $E_L(\cD)$ holds. Since also $E_M$ holds with probability at least $1-\frac{1}{28}$, it follows that with a probability of at least $\frac{1}{28}$, both $E_M$ and the negation of $E_L(\cD)$ hold. Now, as shown above, $E_M$ implies the bounds on $\gmin(m,\delta)$ (item 1 in the theorem statement). In addition, the negation of $E_L(\cD)$ implies that $\cA$ queries at least $N/2 =  \half\floor{\frac{m\alpha - \log(\frac {m}{\delta})}{\log(m)}}$ labels (item 2 in the theorem statement). This completes the proof.
\end{proof}

\section{Discussion}\label{sec:discussion}

We have presented an efficient fully empirical proximity-based non parametric active learner. Our approach provides competitive error guarantees for general distributions, in a general metric space, while keeping label complexity significantly lower than any passive learner with the same guarantees. \algname\ yields fully empirical error estimates, easily computable from finite samples. This is in contrast with
classic techniques, that present bounds and rates that depend on unknown distribution-dependent quantities. 

An interesting question is whether the guarantees can be related to the Bayes error of the distribution. Our error guarantees give a constant factor over the error guarantees of \citet{gkn-jmlr17+aistats}. A variant of this approach
\citep{DBLP:journals/tit/GottliebKK14+colt} was shown to be Bayes-consistent \citep{DBLP:conf/aistats/KontorovichW15}, and we conjecture that this holds also for the algorithm of \citet{gkn-jmlr17+aistats}.
The passive component of our learning algorithm is indeed Bayes-consistent \citep{DBLP:conf/nips/KontorovichSW17}.
Since in our analysis
\algname\ achieves a constant factor over the error of
the passive learner, Bayes-consistency
of the active learner
cannot be inferred from our present techniques;
we leave this problem open for future research.

Another important issue is one of efficient implementation. We mentioned that the naive $O(m^2)$ runtime
for constructing a $t$-net may be improved to 
$2^{O(\ddim(\cX))}m\log(1/t)$, as shown in \citet{KL04,GottliebKN14}.
The fast $t$-net construction was the algorithmic work-horse of
\citet{DBLP:journals/tit/GottliebKK14+colt,GottliebKN14,gkn-jmlr17+aistats}
and inspired
the passive component of our learner.
We note that implementing even this passive component
efficiently is far from trivial; this formed the core of a Master's thesis
\citep{Korsunsky17}.
The remaining obstacle to making our algorithm fully practical is the magnitude of some of the
constants. We believe these to be artifacts of the proof and intend to bring them down
to manageable values in future work.

\acks 
Sivan Sabato was partially supported by the Israel Science Foundation (grant No.~555/15). Aryeh Kontorovich was partially supported by the Israel Science Foundation (grants No.~1141/12 and 755/15) and
Yahoo Faculty and Paypal awards.
We thank Lee-Ad Gottlieb and Dana Ron for the helpful discussions, and the referees for
carefully reading the manuscript and their helpful suggestions.

\bibliography{activenn}

\end{document}